\newcommand{\R}{\mathbb{R}}
\newcommand{\bzero}{\boldsymbol{0}}
\newcommand{\bS}{\mathbf{S}}
\newcommand{\cH}{\mathcal{H}}
\newcommand{\cN}{\mathcal{N}}
\newcommand{\cX}{\mathcal{X}}
\newcommand{\cZ}{\mathcal{Z}}
\newcommand{\HS}{\mathrm{HS}}
\newcommand{\bX}{\mathbf{X}}
\newcommand{\cY}{\mathcal{Y}}
\newcommand{\bSigma}{\mathbf{\Sigma}}
\newcommand{\X}{\mathbb{X}}
\newcommand{\eps}{\varepsilon}
\newcommand{\ep}{\epsilon}
\newcommand{\ra}{\rightarorw}
\renewcommand{\ra}{\rightarrow}
\newcommand{\harm}{\mathrm{harm}}
\DeclareMathOperator{\bigO}{\mathcal{O}}
\DeclareMathOperator*{\E}{\mathbb{E}}
\DeclareMathOperator*{\Var}{Var}
\DeclareMathOperator*{\Cov}{Cov}
\DeclareMathOperator{\MMD}{MMD}
\DeclareMathOperator{\MMDsq}{MMD^2}
\DeclareMathOperator{\MMDhat}{\widehat{MMD}}
\DeclareMathOperator{\MMDhatsq}{\widehat{MMD}\vphantom{\widehat{MMD}}^2}
\DeclareMathOperator{\MMDhatsqu}{\widehat{MMD_U}\vphantom{\widehat{MMD_U}}^2}
\DeclarePairedDelimiter{\abs}{\lvert}{\rvert}
\DeclarePairedDelimiter{\norm}{\lVert}{\rVert}
\DeclarePairedDelimiterX{\inner}[2]{\langle}{\rangle}{#1,#2}
\declaretheorem[numberwithin=section]{theorem}
\declaretheorem[sibling=theorem]{lemma,definition,proposition,corollary}
\declaretheorem[sibling=theorem,style=remark]{remark}
\newtheorem{settingtmp}{Setting}
\declaretheorem[numberlike=settingtmp,name=Setting,refname={Setting,Settings}]{setting}
  \def\zeta{zeta}%
\title{Maximum Mean Discrepancy with Unequal Sample Sizes via Generalized U-Statistics}
\author{%
\name{Aaron Wei}
\email{aaronwei221@gmail.com}
\\ \addr{University of British Columbia}
\AND
\name{Milad Jalali}
\email{jalalim807@gmail.com}
\\ \addr{Independent researcher, Vancouver, Canada}
\AND
\name{Danica J.\ Sutherland}
\email{dsuth@cs.ubc.ca}
\\ \addr{University of British Columbia \& Alberta Machine Intelligence Institute}
}
\begin{document}

\maketitle
\begin{abstract}
Existing two-sample testing techniques, particularly those based on choosing a kernel for the Maximum Mean Discrepancy (MMD), often assume equal sample sizes from the two distributions. Applying these methods in practice can require discarding valuable data, unnecessarily reducing test power. We address this long-standing limitation by extending the theory of generalized U-statistics and applying it to the usual MMD estimator, resulting in new characterization of the asymptotic distributions of the MMD estimator with unequal sample sizes (particularly outside the proportional regimes required by previous partial results). This generalization also provides a new criterion for optimizing the power of an MMD test with unequal sample sizes. Our approach preserves all available data, enhancing test accuracy and applicability in realistic settings. Along the way, we give much cleaner characterizations of the variance of MMD estimators, revealing something that might be surprising to those in the area: while zero MMD implies a degenerate estimator, it is sometimes possible to have a degenerate estimator with nonzero MMD as well. We give a construction of such a case, and a proof that it does not happen in common situations.
\end{abstract}
\section{Introduction}
Two-sample testing is a fundamental problem in statistical inference, where the objective is to determine whether two arbitrary distributions, $P$ and $Q$, differ, based only on samples drawn from each.
Applications include distinguishing treatment and control groups \citep[e.g.][]{medical-example},
evaluating and training generative models \citep[e.g.][]{li:gmmn,dziugaite:mmd-gen,binkowski:mmd-gans,rethinking-fid},
and domain adaptation \citep[e.g.][]{joint-domain-adaptation}, among many others.
Given a dataset $\bS := (\bS_P, \bS_Q)$, where $\bS_P := ( x_i )_{i \in [n_X]} \sim P^{n_X}$  and $\bS_Q = ( y_i )_{i \in [n_Y]} \sim Q^{n_Y}$, the goal is to test the null hypothesis $H_0: P = Q$
versus the alternative $H_1 : P \ne Q$.
Typically, we do this by comparing a test statistic $\tau(\bS)$ to a threshold $c_{\alpha}$ for a given significance level $\alpha$, say $0.05$.
The null hypothesis is rejected if $\tau(\bS) > c_{\alpha}$.
The threshold should be set such that if $H_0$ holds, the probability of $\tau(\bS)$ exceeding $c_\alpha$ (the false rejection rate) is at most $\alpha$.

In recent years, tests based on the kernel maximum mean discrepancy or MMD \citep{Gretton2012},
or equivalently \citep{SSGF:dist-hsic} tests based on energy statistics \citep{energy-distance},
have gained popularity due to their flexibility, adaptivity to various settings, ease of implementation, computational simplicity, and desirable theoretical properties.
Setting the threshold $c_\alpha$ is now usually addressed by permutation testing \citep{sutherland:opt-mmd,perm}
rather than the asymptotic distribution,
for both computational and statistical reasons.
Methods for choosing a kernel that will perform well on the particular task at hand, however, 
are mostly based on an estimate of the asymptotic behavior of the test statistic under both the null and the alternative distributions
\citep{choice-linear,sutherland:opt-mmd,liu:deep-testing,automl-2st,deka:mmd-bfair}.

When the number of samples from both distributions is equal, an MMD estimator can be constructed as a U-statistic. This allows for the application of well-established asymptotic results for U-statistics \citep[see e.g.][]{lee2019u} in designing the testing procedure, as has been done in practice for the line of work cited above (excepting that of \citet{automl-2st}, who use a different and more limited framework).

In practice, however, sample sizes are often unequal, leading to an MMD estimator that is not a U-statistic. Consequently, the theoretical framework developed for the equal sample size case cannot be directly applied. (Even when sample sizes are equal, the U-statistic estimator is slightly different from the usual, lower-variance, unbiased estimator.)
\Citet{liu:deep-testing} addressed this by training on equal-sized subsamples, finding a kernel which has roughly the best test power for a test between samples of equal size,
then perhaps applying that kernel to a test set with differing sizes.
While this procedure works,
it is wasteful.

Other approaches, based on different theoretical foundations,
do not have such issues:
the most dominant category which allows for learning representations
to handle difficult testing problems is that of classifier-based two sample tests \citep{c2st,kim:c2st,automl-2st},
but these are often not as strong tests as the more general MMD-based tests \citep[Section 4]{liu:deep-testing}.
Theory of the related cross-MMD estimator \citep{cross-mmd} does not make the equal-sample-size assumption,
but nor does it provide a method for selecting kernels.\footnote{%
The primary advantage of the cross-MMD, as opposed to the more traditional MMD estimator considered here, is computational: it avoids the need for permutation testing through a simpler null distribution. In regimes where we are conducting kernel learning, however, permutation testing is typically not a particularly dominant part of the computational cost: choosing the (possibly deep) kernel parameters is much more expensive. Thus, while the power of a cross-MMD test is closely related to the power of a ``full'' MMD test and hence our approach for kernel learning could be used to learn the kernel of a cross-MMD test, there does not seem to be much practical reason to do so.}

This paper addresses this limitation by deriving the asymptotic distributions and variance estimators for the MMD estimator under unequal sample sizes. 
As noted by \citet{minimax-perm},
the typical MMD estimator can be viewed as a \emph{generalized} U-statistic,
but the asymptotics of this more general framework are as yet underdeveloped.
We find the required general results
and apply them to the MMD estimator to find its asymptotic distribution
under both null and alternative hypotheses,
allowing us to choose kernel tests to maximize their power
\citep{sutherland:opt-mmd,liu:deep-testing,deka:mmd-bfair}.
Along the way, we also correct a common misconception in the literature about MMD under the alternative hypothesis.

Furthermore, we demonstrate that our generalized testing procedure reduces to the previous approach when sample sizes are equal, thus establishing it as a natural extension of the existing framework. This unification of methods provides a comprehensive approach to two-sample testing using MMD, accommodating a wider range of practical scenarios while maintaining theoretical rigor.

\section{Preliminaries}

The Maximum Mean Discrepancy (MMD) is a widely used metric to measure the distance between probability distributions. Its versatility and theoretical properties make it particularly useful in various statistical applications, including two-sample testing.

\subsection{Kernel mean embeddings and Hilbert-Schmidt operators}

Let $k: \cX \times \cX \rightarrow \R$ be the kernel of a reproducing kernel Hilbert space (RKHS) $\cH$ of functions $\cX \to \R$;
the notation $k(\cdot, x)$ denotes the element $t \mapsto k(t, x)$,
which satisfies the reproducing property $\langle f, k(\cdot, x) \rangle_\cH = f(x)$ for all $f \in \cH$, and hence $k(x, y) = \langle k(\cdot, x), k(\cdot, y) \rangle_\cH$.
The \emph{mean embedding} of a distribution $P$ is
\[ \mu_P = \E_{X \sim P}[k(X, \cdot)] \in \cH .\]
The mean embedding exists and is well-defined when $\E \norm{k(X, \cdot)} = \E \sqrt{k(X,X)} < \infty$;
it satisfies a reproducing property for distributions, $\inner{f}{\mu_P}_\cH = \E_{X \sim P} f(X)$ for any $f \in \cH$.
A related concept is the
(centered) \emph{covariance operator}
of a distribution $P$,
given by
\[ C_P = \E_{X \sim P}[ k(X, \cdot) \otimes k(X, \cdot) ] - \mu_P \otimes \mu_P ,\]
which exists as long as $\E k(X, X) < \infty$.
Here the outer product $f \otimes g$ for $f,g \in \cH$ is viewed as an $\cH \to \cH$ operator with 
$[f \otimes g] (g') = f \langle g, g' \rangle$,
so that $\langle f, C_P g \rangle = \Cov_{X \sim P}(f(X), g(X))$ for any $f, g \in \cH$. The elements $f \otimes g$ and $C_P$ are Hilbert-Schmidt operators from $\cH \ra \cH$, denoted by $f \otimes g \in \HS(\cH,\cH)$;
this is itself a Hilbert space with inner product $\inner{f \otimes f'}{g \otimes g'}_\HS = \inner{f}{g}_\cH \inner{f'}{g'}_\cH$.
The review of \citet{kme-review} describes these two objects further.
Throughout this paper, we will assume that they are well-defined:

\begin{setting} \label{def:settings}
    We assume that $k: \cX \times \cX \ra \R$ is a kernel which induces a separable RKHS $\cH$,
    and that the covariance operator $C_P$ and mean embedding $\mu_P$ exist for all considered distributions,
    i.e.\ $\E_{X \sim P} k(X, X) < \infty$.
\end{setting}
For separability,
it suffices to have a separable $\cX$ and continuous $k$ \parencite[Lemma 4.33]{sc}.
This gives us a variety of results, such as the following.
\begin{proposition} \label{prop:l2-kernel}
    In \cref{def:settings}, for independent random variables $X \sim P,Y \sim Q$,
    we have $\E[k(X,Y)^2] < \infty$.
\end{proposition}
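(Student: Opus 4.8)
The plan is to reduce everything to the pointwise diagonal bound $k(x,y)^2 \le k(x,x)\,k(y,y)$ and then exploit the independence of $X$ and $Y$. First I would use the reproducing property to write $k(x,y) = \inner{k(\cdot,x)}{k(\cdot,y)}_\cH$, so that the Cauchy–Schwarz inequality in $\cH$ gives $\abs{k(x,y)} \le \norm{k(\cdot,x)}_\cH \, \norm{k(\cdot,y)}_\cH = \sqrt{k(x,x)}\,\sqrt{k(y,y)}$. Squaring yields the deterministic bound $k(x,y)^2 \le k(x,x)\,k(y,y)$, valid for every $x,y \in \cX$.

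Next I would take expectations over the independent pair $(X,Y) \sim P \times Q$. Since the integrand $k(X,X)\,k(Y,Y)$ is nonnegative, Tonelli's theorem applies without any prior integrability assumption and lets me factor the resulting double integral, giving $\E[k(X,Y)^2] \le \E[k(X,X)\,k(Y,Y)] = \E_{X \sim P}[k(X,X)] \cdot \E_{Y \sim Q}[k(Y,Y)]$. Both factors on the right are finite by \cref{def:settings}, which posits that the covariance operator, and hence $\E k(X,X) < \infty$, holds for every distribution under consideration; in particular it holds for both $P$ and $Q$. The product of two finite nonnegative numbers is finite, so $\E[k(X,Y)^2] < \infty$, as claimed.

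I do not expect a substantive obstacle here. The only points meriting a word of justification are the factorization of the expectation, which is immediate from independence once Tonelli is invoked (legitimate precisely because $k(X,X)\,k(Y,Y) \ge 0$), and the joint measurability of $(x,y) \mapsto k(x,y)$ needed to form the double integral; the latter follows from the continuity and separability assumptions already recorded after \cref{def:settings}, or is subsumed by the standing assumption that $k$ is a genuine kernel. The core of the argument is thus the elementary reproducing-kernel Cauchy–Schwarz estimate, with the finiteness hypothesis of \cref{def:settings} supplying everything else.
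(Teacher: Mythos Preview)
Your argument is correct and, in fact, more elementary than the paper's. The paper proceeds by noting that the uncentered covariance operators $\tilde C_P = C_P + \mu_P \otimes \mu_P$ and $\tilde C_Q$ are Hilbert--Schmidt under \cref{def:settings}, then uses Bochner integrability to pull expectations through the HS inner product and obtain the exact identity $\E[k(X,Y)^2] = \inner{\tilde C_P}{\tilde C_Q}_\HS$, which is automatically finite. Your route instead uses the pointwise reproducing-kernel Cauchy--Schwarz bound $k(x,y)^2 \le k(x,x)\,k(y,y)$ followed by Tonelli and independence, avoiding any operator-theoretic machinery. The trade-off is that the paper's proof yields an equality that is reused verbatim in later variance computations (e.g.\ the $\zeta_{XY}$ calculation in \cref{sec:var-mmd-calc}), whereas your inequality only certifies finiteness; for the proposition as stated, however, your approach is entirely sufficient and arguably cleaner. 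One small caveat: the joint measurability of $k$ is not quite a consequence of the remark after \cref{def:settings} (continuity there is only offered as a sufficient condition for separability, not assumed outright), but measurability of the kernel is a standing convention in the RKHS setting and can safely be taken as implicit.
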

\begin{proof}
    Since $\mu_P, C_P$ exist, the uncentered covariance operator $\tilde C_P = C_P + \mu_P \otimes \mu_P = \E[k(\cdot, X) \otimes k(\cdot, X)]$ is Hilbert-Schmidt, and likewise for $\tilde C_Q = \E[k(\cdot, Y) \otimes k(\cdot, Y)]$. 
    By Bochner integrability,
    we can move expectations in and out of inner products,
    obtaining
    that the following inner product must be finite:
    \[
        \inner{\tilde C_P}{\tilde C_Q}_{\HS}
        = \inner*{\E_X k(\cdot, X) \otimes k(\cdot, X)}{\E_Y k(\cdot, Y) \otimes k(\cdot, Y)}_\HS
        = \E\bigl[ \inner{k(\cdot, X)}{k(\cdot, Y)}_\cH^2 \bigr]
        = \E\bigl[ k(X,Y)^2 \bigr]
    .\qedhere\]
\end{proof}

\subsection{Maximum Mean Discrepancy}
\begin{definition}
Given a reproducing kernel Hilbert space $\mathcal{H}$, the MMD between distributions $P$ and $Q$ is defined as
\[
\MMD(P, Q) := \sup_{f \in \cH : \norm f_\cH \le 1} \E_{X \sim P} f(X) - \E_{Y \sim Q} f(Y)
.\]
\end{definition}

The MMD always satisfies all properties of a metric except that it may have $\MMD(P, Q) = 0$ for some $P \ne Q$;
this does not happen for \emph{characteristic} kernels \citep{bharath:characteristic}.
None of our results will require a characteristic kernel.

Using the reproducing property,
under \cref{def:settings}
MMD can be rewritten as \citep{Gretton2012}
\[
\MMDsq(P, Q) = \left\lVert \mu_P - \mu_Q \right\rVert_{\mathcal{H}}^2 = \E_{X,X' \sim P}[k(X,X')] + \E_{Y,Y' \sim P}[k(Y,Y')] - 2 \E_{X \sim P, Y \sim Q}[k(X,Y)]
.\]

Estimators for the MMD are often based on this last form.
Assuming independent samples $\bS_P := \{ x_i \}_{i \in [n_X]} \sim P^{n_X}$ and $\bS_Q = \{ y_i \}_{i \in [n_Y]} \sim Q^{n_Y}$,
the typical unbiased estimator is
\begin{equation} \label{eq:mmdhatsq}
  \MMDhatsq %
  = \frac{2}{n_X (n_X-1)} \sum_{i=1}^{n_X} \sum_{j = i + 1}^{n_X} k(x_i, x_j) + \frac{2}{n_Y(n_Y-1)} \sum_{i=1}^{n_Y} \sum_{j = i + 1}^{n_Y} k(y_i, y_j) - \frac{2}{n_X n_Y} \sum_{i=1}^{n_X} \sum_{j=1}^{n_Y} k(x_i, y_j)
.\end{equation}

If $n_X=n_Y = n$, we can obtain a simpler, nearly-equivalent estimator. %
Let $z_i=(x_i, y_i)$, and define
\begin{gather}
  h(z_i, z_j) = k(x_i, x_j) + k(y_i, y_j) - k(x_i, y_j) - k(x_j, y_i)
  \label{eq:mmd-ustat-kernel}
  \\
  \MMDhatsqu %
  = \frac{1}{n(n-1)} \sum_{i \neq j} h(z_i, z_j)
  \label{eq:mmdhatsqu}
.\end{gather} 
This estimator differs from $\MMDhatsq(\bS_P, \bS_Q)$ only in that it omits the $k(x_i, y_i)$ terms;
this remains unbiased for $\MMD^2$, but has very slightly higher variance
(compare \cref{thm:mmdu-var-decomp,prop:var-gen-mmd})
and unlike $\MMDhatsq$ depends on the order of the two samples.
The difference can be directly bounded by McDiarmid's inequality
(details in \cref{sec:thm-comp-proof}).
\begin{restatable}{theorem}{thmComp}
\label{thm:comp}
When the number of samples from $P$ and $Q$ are the same,
we have that
\[
    \Pr_{\substack{X_1, \dots, X_n \sim P\\Y_1, \dots, Y_n \sim Q}}\left(
        \abs{\MMDhatsq - \MMDhatsqu}
        \le \frac{8 \sup_{x \in \cX} k(x, x)}{n^{3/2}} \sqrt{\log \frac2\delta}
    \right) \ge 1 - \delta
.\]
\end{restatable}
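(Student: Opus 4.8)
The plan is to apply McDiarmid's bounded-differences inequality to the random variable $g := \MMDhatsq - \MMDhatsqu$, viewed as a function of the $2n$ independent sample points $x_1,\dots,x_n,y_1,\dots,y_n$ (which are independent since $\bS_P$ and $\bS_Q$ are). The first step is to observe that both estimators are unbiased for $\MMD^2(P,Q)$ --- indeed $\MMDhatsqu$ was introduced precisely as an unbiased variant --- so $\E[g]=0$, and McDiarmid will then control the deviation of $g$ from $0$ directly.

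Next I would simplify $g$ to expose its structure. All within-sample terms $k(x_i,x_j)$ and $k(y_i,y_j)$ cancel, and a short computation shows the two estimators agree except on the cross terms, leaving
\[
  g = \frac{2}{n^2(n-1)}\sum_{a\neq b} k(x_a,y_b) \;-\; \frac{2}{n^2}\sum_{a} k(x_a,y_a);
\]
that is, $\MMDhatsqu$ drops the diagonal cross terms $k(x_i,y_i)$ and reweights the rest. Writing $g$ as this explicit linear combination of kernel evaluations is what makes the bounded-differences coefficients transparent.

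The core step is bounding how much $g$ changes when a single coordinate is replaced. By Cauchy--Schwarz in $\cH$, every kernel value obeys $\abs{k(x,y)} \le \sqrt{k(x,x)\,k(y,y)} \le K$ with $K := \sup_{x\in\cX} k(x,x)$, so any single evaluation changes by at most $2K$. Holding everything else fixed and replacing $x_i$, the only affected terms are the $n-1$ off-diagonal evaluations $k(x_i,y_b)$, $b\neq i$, each carrying coefficient $2/(n^2(n-1))$, together with the single diagonal term $k(x_i,y_i)$ carrying net coefficient $-2/n^2$. Summing absolute coefficients times $2K$ gives a bounded difference of at most $2K\bigl[(n-1)\tfrac{2}{n^2(n-1)} + \tfrac{2}{n^2}\bigr] = 8K/n^2$, and the same holds for each $y_i$ by symmetry. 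The delicate point --- and the main place to be careful --- is the bookkeeping on the diagonal coordinate: one must combine its contributions before taking absolute values (the net coefficient is $-2/n^2$), since treating the two sums separately inflates the constant by a factor close to $n/(n-1)$ and would not reproduce the stated bound exactly.

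Finally, with all $2n$ bounded-difference constants equal to $c = 8K/n^2$, we have $\sum_i c_i^2 = 2n\,(8K/n^2)^2 = 128K^2/n^3$, so McDiarmid gives $\Pr(\abs{g}\ge t) \le 2\exp(-2t^2 / \sum_i c_i^2) = 2\exp(-t^2 n^3/(64K^2))$. Setting the right-hand side equal to $\delta$ and solving for $t$ yields $t = (8K/n^{3/2})\sqrt{\log(2/\delta)}$, which is exactly the claimed threshold; rephrasing the tail bound as a high-probability statement completes the proof.
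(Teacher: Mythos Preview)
Your proposal is correct and follows essentially the same route as the paper's proof: simplify the difference to a linear combination of cross-kernel terms, verify the bounded-difference constant is $8K/n^2$ per coordinate, and invoke McDiarmid with $\E[g]=0$. Your cautionary remark about combining diagonal contributions is unnecessary --- in your expression for $g$ the term $k(x_i,y_i)$ appears only in the second sum, so there is nothing to combine, and handling the two sums separately yields the identical constant --- but this does not affect the validity of the argument.
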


\subsection{U-Statistics}

The estimator $\MMDhatsqu$ is an instance of a class of statistics known as
U-statistics, introduced by \citet{Hoeffding1948ACO}.
We will only need (standard) U-statistics of order two here.

\begin{definition}
Let $Z_1, \ldots, Z_n$ be i.i.d.\ random variables with support in $\cZ$, and let $h : \cZ \times \cZ \to \R$ be symmetric in the sense that $h(z_1, z_2) = h(z_2, z_1)$. A \emph{U-statistic of order two} is defined as
\[
U_n = \frac{1}{n (n - 1)} \sum_{i \ne j} h(Z_i, Z_j)
.\]
\end{definition}

The function $h$ is called the kernel of the U-statistic (not to be confused with the RKHS kernel $k$);
for $\MMDhatsqu$ it is \eqref{eq:mmd-ustat-kernel}.
These statistics have many general properties; we will be particularly interested in their variance.
\begin{proposition}
\label{thm:u_stat_variance}
Let $U_n$ be a U-statistic of order two with kernel $h$. Then, when $n \ge 2$,
\begin{align*}
\Var(U_n)
&= \frac{4 (n-2)}{n (n-1)} \Var_{Z_1}\bigl[ \E_{Z_2}[ h(Z_1, Z_2) \mid Z_1] \bigr]
  + \frac{2}{n (n-1)} \Var_{Z_1, Z_2}[ h(Z_1, Z_2) ]
\\&= \frac{4 n - 6}{n (n-1)} \Var_{Z_1}[ \E_{Z_2}[ h(Z_1, Z_2) \mid Z_1] ]
  + \frac{2}{n (n-1)} \E_{Z_1} \Var_{Z_2}[ h(Z_1, Z_2) \mid Z_1 ]
.\end{align*}
\end{proposition}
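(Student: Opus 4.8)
The plan is to reduce to the centered case and then expand the second moment as a double sum, classifying its terms by how many indices the two pairs share. First I would observe that replacing $h$ by $h - \E[h(Z_1,Z_2)]$ shifts $U_n$ by a constant, so it changes neither $\Var(U_n)$ nor either variance on the right-hand side; hence I may assume $\E[h(Z_1,Z_2)] = 0$. Writing $g(z) := \E_{Z_2}[h(z, Z_2)]$, centering gives $\E[g(Z_1)] = 0$, so that the two target quantities simplify to $\zeta_1 := \Var_{Z_1}[\E_{Z_2}[h(Z_1,Z_2)\mid Z_1]] = \E[g(Z_1)^2]$ and $\zeta_2 := \Var_{Z_1,Z_2}[h(Z_1,Z_2)] = \E[h(Z_1,Z_2)^2]$.

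With $\E[U_n] = 0$, I would then write
\[
    \Var(U_n) = \E[U_n^2] = \frac{1}{n^2 (n-1)^2} \sum_{i \ne j} \sum_{k \ne l} \E\bigl[ h(Z_i, Z_j)\, h(Z_k, Z_l) \bigr]
,\]
the sums running over ordered pairs, and split the cross terms into three groups according to $\abs{\{i,j\}\cap\{k,l\}} \in \{0,1,2\}$. Evaluating a representative term in each group: when the two pairs are disjoint, all four variables are independent and centering forces the expectation to $\E[h(Z_i,Z_j)]\,\E[h(Z_k,Z_l)] = 0$; when they share exactly one index, say $i = k$ with $i,j,l$ distinct, conditioning on $Z_i$ factorizes the expectation into $\E[g(Z_i)^2] = \zeta_1$; and when $\{i,j\} = \{k,l\}$, symmetry of $h$ gives $\E[h(Z_i,Z_j)^2] = \zeta_2$. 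Here symmetry of $h$ is what guarantees that the one-shared-index value is $\zeta_1$ regardless of which of the two positions the common index occupies.

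Next I would count the multiplicities. Of the $[n(n-1)]^2$ ordered-pair-of-pairs, exactly $2\,n(n-1)$ have equal index sets and $n(n-1)(n-2)(n-3)$ are disjoint, so the one-shared-index class is the remainder, which simplifies to $4(n-2)\,n(n-1)$. Substituting the three contributions back into the display yields $\frac{4(n-2)}{n(n-1)}\zeta_1 + \frac{2}{n(n-1)}\zeta_2$, the first claimed form. The second form then follows immediately from the law of total variance, $\zeta_2 = \zeta_1 + \E_{Z_1}\Var_{Z_2}[h(Z_1,Z_2)\mid Z_1]$, since $\frac{4(n-2)+2}{n(n-1)} = \frac{4n-6}{n(n-1)}$.

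I expect the main obstacle to be purely the combinatorial bookkeeping of the middle (one-shared-index) class: getting the count $4(n-2)\,n(n-1)$ right, and verifying—via the symmetry $h(z_1,z_2)=h(z_2,z_1)$—that every configuration of a single shared index, in either slot of either pair, contributes the same $\zeta_1$. Everything else is routine given the centering reduction.
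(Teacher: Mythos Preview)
Your proposal is correct and is precisely the standard textbook argument; the paper does not give its own proof of the first identity but simply cites \citet{lee2019u} and \citet{serfling1980approximation}, and then notes that the second form follows from the first by the law of total variance, exactly as you do.
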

The first form is a textbook result,
shown e.g.\ by \citet[Section 1.3]{lee2019u} or \citet[Section 5.2]{serfling1980approximation}.
The second follows immediately from the first by the law of total variance.
While this first form is well-known
and can be used to give an explicit form for the variance of $\MMDhatsqu$ \citep{sutherland:unbiased-mmd-variance},
we can actually simplify the form considerably.
The following result,
shown in \cref{sec:mmd-u-var-decom-proof},
uses the approach of \citet[Theorem 6.1]{he:kci-is-hard}.

\begin{restatable}{proposition}{mmdUvar} \label{thm:mmdu-var-decomp}
In \cref{def:settings},
we have that
\begin{equation}
    \Var_{\substack{\bS_P \sim P^n\\\bS_Q \sim Q^n}}\left(\MMDhatsqu(\bS_P, \bS_Q)\right)
  = \frac{4}{n} \inner{\mu_P - \mu_Q}{(C_P + C_Q) (\mu_P - \mu_Q)}_\cH + \frac{2}{n (n-1)} \norm{C_P + C_Q}^2_{\HS}
  \label{eq:mmdu-var}
.\end{equation}
\end{restatable}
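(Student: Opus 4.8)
The plan is to follow the approach of \citet{he:kci-is-hard} and reduce everything to a statement about a single $\cH$-valued random element. For $z = (x,y)$ set $w(z) := k(\cdot, x) - k(\cdot, y) \in \cH$. Expanding the inner product and using the reproducing property $k(a,b) = \inner{k(\cdot,a)}{k(\cdot,b)}_\cH$ reproduces exactly the four kernel terms in \eqref{eq:mmd-ustat-kernel}, giving the key identity
\[
    h(z_1, z_2) = \inner{w(z_1)}{w(z_2)}_\cH .
\]
Writing $W_i := w(Z_i)$, the $W_i$ are i.i.d.\ $\cH$-valued elements, and the U-statistic kernel is simply the inner product $\inner{W_1}{W_2}_\cH$. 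All finiteness needed below is guaranteed by \cref{def:settings} and \cref{prop:l2-kernel}, which together give $\E[h(Z_1,Z_2)^2] = \E[\inner{W_1}{W_2}_\cH^2] < \infty$.

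First I would identify the mean and covariance of $W = w(Z)$, where $Z = (X,Y)$ with $X \sim P$, $Y \sim Q$ independent. The mean is $\E[W] = \mu_P - \mu_Q =: \delta$. For the covariance, moving the expectation through the (Bochner-integrable) outer product, the cross terms factor by independence of $X$ and $Y$, and the $\mu_P \otimes \mu_Q$ and $\mu_Q \otimes \mu_P$ pieces cancel exactly against those in $\delta \otimes \delta$, leaving
\[
    \Cov(W) = \E[W \otimes W] - \delta \otimes \delta = C_P + C_Q .
\]
This clean identification of the covariance operator is the heart of the argument.

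Then I would apply \cref{thm:u_stat_variance}, which needs the two moments $\zeta_1 := \Var_{Z_1}[\E_{Z_2}[h(Z_1,Z_2)\mid Z_1]]$ and $\zeta_2 := \Var_{Z_1,Z_2}[h(Z_1,Z_2)]$. For the first, $\E_{Z_2}[h(Z_1,Z_2)\mid Z_1] = \inner{W_1}{\delta}_\cH$, so $\zeta_1 = \Var[\inner{W}{\delta}_\cH] = \inner{\delta}{(C_P+C_Q)\delta}_\cH$. For the second, write $W_i = \delta + U_i$ with $U_i$ centered, independent, and with covariance $C_P + C_Q$. Expanding $\inner{W_1}{W_2}_\cH = \norm{\delta}^2 + \inner{\delta}{U_2}_\cH + \inner{U_1}{\delta}_\cH + \inner{U_1}{U_2}_\cH$, the three cross terms are uncorrelated (each vanishes once one $U_i$ is averaged out against its zero mean), so that $\zeta_2 = 2\inner{\delta}{(C_P+C_Q)\delta}_\cH + \E[\inner{U_1}{U_2}_\cH^2]$. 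The last term is dispatched by the outer-product trick $\inner{U_1}{U_2}_\cH^2 = \inner{U_1 \otimes U_1}{U_2 \otimes U_2}_\HS$, whence by independence $\E[\inner{U_1}{U_2}_\cH^2] = \inner{\E[U_1\otimes U_1]}{\E[U_2\otimes U_2]}_\HS = \norm{C_P+C_Q}_\HS^2$.

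Finally I substitute $\zeta_1$ and $\zeta_2$ into \cref{thm:u_stat_variance} and collect coefficients: the combination $\frac{4(n-2)}{n(n-1)}\zeta_1 + \frac{2}{n(n-1)}\zeta_2$ regroups as $\frac{4}{n}\zeta_1 + \frac{2}{n(n-1)}(\zeta_2 - 2\zeta_1)$, and since $\zeta_2 - 2\zeta_1 = \norm{C_P+C_Q}_\HS^2$ exactly, this yields \eqref{eq:mmdu-var}. The main obstacle is conceptual rather than computational: it lies in (i) spotting the inner-product representation that turns $h$ into $\inner{W_1}{W_2}_\cH$, and (ii) justifying the interchanges of expectation with the outer product, which rest on the Hilbert–Schmidt hypotheses of \cref{def:settings}.
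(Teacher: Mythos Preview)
Your proposal is correct and follows essentially the same approach as the paper: both introduce the feature-difference map (your $w(z)$, the paper's $\delta_{xy}$), recognize $h(z_1,z_2)=\inner{w(z_1)}{w(z_2)}_\cH$, identify $\E W=\mu_P-\mu_Q$ and $\Cov(W)=C_P+C_Q$, and then plug into \cref{thm:u_stat_variance}. The only cosmetic difference is that the paper uses the second (law-of-total-variance) form of \cref{thm:u_stat_variance} and computes $\E_{Z_1}\Var_{Z_2}[h\mid Z_1]$ via the outer-product trick, whereas you use the first form and compute $\Var[h]$ directly by centering $W_i=\delta+U_i$; the resulting algebra collapses identically.
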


For a general U-statistic of order two,
$\Var(U_n)$ has three possible asymptotic behaviours:
    if $\Var(U_n) = \Theta(1/n)$, we call $U_n$ \emph{non-degenerate} or zeroth-order degenerate.
    If $\Var(U_n) = \Theta(1/n^2)$, we call $U_n$ \emph{first-order degenerate}.
    Otherwise,
    $\Var(U_n) = 0$,
    which we term \emph{infinitely degenerate}.
\cref{thm:mmdu-var-decomp} allows us to
almost fully characterize the degeneracy of $\MMDhatsqu$.
One of its results needs the following stronger assumptions:
\begin{setting} \label{setting:analytic}
    In \cref{def:settings},
    further assume that $\cX = \R^d$,
    $k$ is real-analytic, $\sup_{x \in \cX} k(x, x)$ is finite, and the supports of $P$ and $Q$ each have positive Lebesgue measure.
\end{setting}
In \cref{setting:analytic}, which encompasses many common kernel choices such as the Gaussian, every function in $\cH$ is real-analytic \citep[Lemma 1]{analytic}.
The following result is proved in \cref{sec:degen-cond-proofs}.

\begin{restatable}{theorem}{degenConds}    
\label{prop: degen-conds}
In \cref{def:settings},
$\MMDhatsqu$ is infinitely degenerate (the variance is zero)
if and only if $C_P = C_Q = \bzero$.
Note that an infinitely degenerate MMD estimate may still be nonzero, such as if $P$ and $Q$ are distinct point masses.

Otherwise suppose at least one of $C_P, C_Q$ is nonzero,
so $\MMDhatsqu$'s order of degeneracy is zero or one.
Then:
\begin{enumerate}[label={(\roman*)}]
\item \label{it:equal} If $\mu_P = \mu_Q$,
$\MMDhatsqu$ is first-order degenerate.
\item \label{it:ne-degen} When $\mu_P \ne \mu_Q$,
$\MMDhatsqu$ may be either non-degenerate or first-order degenerate.
\item \label{it:degen-cond-continuous}
Suppose $\cX$ is a topological space, $k(x, \cdot)$ is continuous for each $x$, and $\sup_{x \in \cX} k(x, x)$ is finite. Further assume the supports of $P$ and $Q$ are not disjoint. Then $\MMDhatsqu$ is degenerate if and only if $\mu_P = \mu_Q$.
\item \label{it:degen-cond-analytic}
In \cref{setting:analytic}, $\MMDhatsqu$ is degenerate if and only if $\mu_P = \mu_Q$.
\item \label{it:same-sgn}
In \cref{setting:analytic},
$\inner{\mu_P - \mu_Q}{C_P(\mu_P - \mu_Q)} > 0$ if and only if $\inner{\mu_P - \mu_Q}{C_Q(\mu_P - \mu_Q)} > 0$.
\end{enumerate}
\end{restatable}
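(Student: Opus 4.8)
The plan is to read everything off the variance decomposition of \cref{thm:mmdu-var-decomp}. Writing $\Delta := \mu_P - \mu_Q \in \cH$ and $C := C_P + C_Q$, that result gives $\Var(\MMDhatsqu) = \frac{4}{n}\inner{\Delta}{C\Delta}_\cH + \frac{2}{n(n-1)}\norm{C}_\HS^2$. Since $C_P, C_Q$ are positive semidefinite, so is $C$, and the covariance identity $\inner{f}{C_P g}_\cH = \Cov_P(f(X),g(X))$ gives $\inner{\Delta}{C\Delta}_\cH = \Var_P(\Delta(X)) + \Var_Q(\Delta(Y)) \ge 0$, where $\Delta(x) = \inner{\Delta}{k(\cdot,x)}_\cH$. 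The $\Theta(1/n^2)$ term is positive exactly when $C \ne \bzero$; because a sum of PSD operators vanishes only if each summand does, $C = \bzero$ iff $C_P = C_Q = \bzero$, and in that case the $1/n$ term also vanishes. This immediately yields the infinite-degeneracy claim and reduces everything else to the sign of $\inner{\Delta}{C\Delta}$: once $C \ne \bzero$, the estimator is non-degenerate iff $\inner{\Delta}{C\Delta} > 0$ and first-order degenerate iff $\inner{\Delta}{C\Delta} = 0$, i.e.\ iff $\Delta$ has zero variance under both $P$ and $Q$.

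Part~\ref{it:equal} is then immediate, since $\mu_P = \mu_Q$ forces $\Delta = 0$ and hence $\inner{\Delta}{C\Delta} = 0$; this also gives the forward direction of the two ``degenerate iff $\mu_P = \mu_Q$'' claims \ref{it:degen-cond-continuous} and \ref{it:degen-cond-analytic}. For their converses I would argue the contrapositive: assuming $\Delta \ne 0$ but $\inner{\Delta}{C\Delta} = 0$, both $\Var_P(\Delta)$ and $\Var_Q(\Delta)$ vanish, so $\Delta$ equals a constant $c_P$ $P$-a.s.\ and a constant $c_Q$ $Q$-a.s. The key extra input is that $\Delta$ is continuous, so $\{\Delta = c_P\}$ is closed and contains $\mathrm{supp}(P)$ (similarly for $Q$); in \ref{it:degen-cond-continuous} continuity of each $f \in \cH$ follows from continuity of $k(x,\cdot)$ together with the bounded diagonal (the feature maps $k(\cdot,x_n)$ are norm-bounded and converge weakly), while in \ref{it:degen-cond-analytic} \cref{setting:analytic} makes every $f \in \cH$ real-analytic. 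I then force $c_P = c_Q$: under \ref{it:degen-cond-continuous} a common support point $x_0$ gives $c_P = \Delta(x_0) = c_Q$ directly, whereas under \ref{it:degen-cond-analytic} the set $\{\Delta = c_P\} \supseteq \mathrm{supp}(P)$ has positive Lebesgue measure, so the real-analytic function $\Delta - c_P$ vanishes on a positive-measure set and is therefore identically $c_P$, and likewise $\Delta \equiv c_Q$. In both cases $\Delta$ takes a single constant value $c$, so $\MMDsq = \norm{\Delta}^2 = \E_P\Delta - \E_Q\Delta = c - c = 0$, contradicting $\Delta \ne 0$. Part~\ref{it:same-sgn} rides on the same rigidity: $\Var_P(\Delta) = 0$ forces $\Delta$ globally constant and hence $\Var_Q(\Delta) = 0$, and symmetrically, so the two variances (equivalently the two inner products) are positive together.

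The genuinely delicate part is \ref{it:ne-degen}, an existence claim and the source of the paper's ``surprising'' phenomenon: I must produce $P \ne Q$ with $\Delta \ne 0$ yet $C_P \Delta = C_Q \Delta = \bzero$ while, say, $C_P \ne \bzero$. The obstruction analysis above shows such an example must have disjoint supports and a non-analytic kernel, so I would build it on a finite domain, where $\cH$ is just $\R^m$ with a PSD kernel matrix $K$ and $\Delta(x) = [K(p-q)]_x$ for probability vectors $p, q$. Taking $\mathrm{supp}(P) = \{1,2\}$ and $\mathrm{supp}(Q)$ a disjoint point mass (so $C_Q = \bzero$ trivially) reduces the requirement $\Var_P(\Delta) = 0$ to the single linear equation $\Delta(1) = \Delta(2)$, which I can arrange while keeping $p \ne q$ (so $\Delta \ne 0$, giving nonzero MMD) and $C_P \ne \bzero$; the non-degenerate half of \ref{it:ne-degen} needs only a generic example such as distinct Gaussians under a Gaussian kernel. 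The main work, and the only real obstacle, is verifying that this finite construction meets all constraints simultaneously rather than collapsing to $\MMDsq = 0$ or to $C_P = \bzero$.
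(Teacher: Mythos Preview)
Your proposal is correct and matches the paper's proof almost line by line: both read the infinite-degeneracy claim and parts \ref{it:equal}, \ref{it:degen-cond-continuous}, \ref{it:degen-cond-analytic}, \ref{it:same-sgn} directly off \cref{thm:mmdu-var-decomp} via the identity $\inner{\Delta}{C_P\Delta} = \Var_P(\Delta(X))$, using continuity/analyticity to upgrade ``$\Delta$ is $P$-a.s.\ constant'' to ``$\Delta$ is constant on $\mathrm{supp}(P)$'' and then the identity $\norm{\Delta}^2 = \E_P\Delta - \E_Q\Delta = c_P - c_Q$. For \ref{it:ne-degen} the paper instantiates exactly your finite-domain template with $k(x,y) = \max(1 - |x-y|,0)$ on $\R$, $P = \mathrm{Uniform}\{1,2\}$, $Q = \mathrm{Uniform}\{3,4\}$ (so the restricted kernel matrix is the identity and your constraint $\Delta(1)=\Delta(2)$ holds automatically), which you can plug in to finish the construction.
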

It is well-known that if $\mu_P = \mu_Q$, the $1/n$ term is zero.
To the best of our knowledge, however,
it has not been previously recognized in the literature that
even when $\mu_P \ne \mu_Q$, the estimator may be first-order degenerate, and several papers make (informal) claims to the contrary.\footnote{For instance, see the sentence just before (3) of \citet{sutherland:opt-mmd}, the last sentence in the paragraph following (3) of \citet{deka:mmd-bfair}, or the discussion around Theorem 1 of \citet{kubler:wits}.}
In many situations of interest, however,
this is impossible,
as shown in parts \ref{it:degen-cond-continuous} and \ref{it:degen-cond-analytic}.
We also note that \ref{it:degen-cond-analytic} remains true if only one of $P,Q$ has a support with positive Lebesgue measure.

The asymptotic behavior of U-statistics
is also highly relevant, and determined by the degree of degeneracy.
Here we need only textbook results,
as shown by \citet[Section 3.2]{lee2019u} or \citet[Section 5.5]{serfling1980approximation}.

\begin{theorem}
\label{thm:u_stat_asymp}
Let $U_n$ be a U-statistic of order two with kernel $h$. Suppose that $U_n$ has mean $\theta = \E h(X_1, X_2)$,
and let $\sigma_1^2 = 4 \Var_{X_1}[ \E_{X_2}[ h(X_1, X_2) \mid X_1] ]$
be the leading term in the variance decomposition.

It holds as $n \to \infty$ that
  $ \sqrt{n}\, (U_n - \theta) \xrightarrow{d} \cN(0, \sigma_1^2) $,
  where if $\sigma_1 = 0$ this convergence is to the point mass at $0$.

If $\sigma_1 = 0$, it also holds as $n \to \infty$ that
  $ n \, (U_n - \theta) \xrightarrow{d} \sum_{j=1}^{\infty} \lambda_j (V_j^2 - 1)$,
   where the $\{V_j\}_{j=1}^{\infty}$ are independent standard normal variables,
   and $(\lambda_j)_{j=1}^{\infty}$ are the eigenvalues of
   the integral equation $\E_{X_2}[ h(x_1, X_2) f(X_2) ] = \lambda f(x_1)$.
\end{theorem}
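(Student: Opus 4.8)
The plan is to establish both limits through the Hoeffding (ANOVA) decomposition of the kernel, under the standing assumption $\E[h(Z_1,Z_2)^2] < \infty$ (which in our MMD application is guaranteed by \cref{prop:l2-kernel}). First I would define the first projection $h_1(z) = \E_{Z_2}[h(z, Z_2)] - \theta$ and the canonical, completely degenerate second-order part $h_2(z_1, z_2) = h(z_1, z_2) - \theta - h_1(z_1) - h_1(z_2)$, which satisfies $\E_{Z_2}[h_2(z, Z_2)] = 0$ for every $z$. Substituting $h = \theta + h_1(z_i) + h_1(z_j) + h_2$ into $U_n$ and counting terms yields the exact identity
\[
    U_n - \theta = \frac{2}{n} \sum_{i=1}^n h_1(Z_i) + \frac{1}{n(n-1)} \sum_{i \ne j} h_2(Z_i, Z_j)
,\]
whose two summands are uncorrelated; the first is a centered i.i.d.\ average with per-term variance $\sigma_1^2/4$, and the second is a degenerate U-statistic of variance $O(1/n^2)$.

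For the non-degenerate scaling I would multiply by $\sqrt n$: the first term becomes $\frac{2}{\sqrt n}\sum_i h_1(Z_i)$, which converges to $\cN(0, \sigma_1^2)$ by the classical central limit theorem, while $\sqrt n$ times the second term has variance $O(1/n) \to 0$ and so vanishes in probability. Slutsky's theorem then gives $\sqrt n\,(U_n - \theta) \xrightarrow{d} \cN(0, \sigma_1^2)$. When $\sigma_1 = 0$, the projection $h_1$ is almost surely zero, so $U_n - \theta$ equals the degenerate part exactly and the same argument yields convergence to the point mass at $0$.

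For the degenerate limit ($\sigma_1 = 0$, so $U_n - \theta = \frac{1}{n(n-1)}\sum_{i\ne j} h_2(Z_i,Z_j)$) I would invoke the spectral theorem for the self-adjoint Hilbert--Schmidt integral operator $f \mapsto \E_{Z_2}[h_2(\cdot, Z_2) f(Z_2)]$ on $L^2(P)$, which is precisely the operator appearing in the stated integral equation. It has real eigenvalues $(\lambda_j)$ with $\sum_j \lambda_j^2 = \E[h_2(Z_1,Z_2)^2] < \infty$ and orthonormal eigenfunctions $(\phi_j)$; integrating the eigen-equation and using degeneracy forces $\E[\phi_j] = 0$ whenever $\lambda_j \ne 0$, so the eigenfunctions are mean-zero, giving the $L^2$ expansion $h_2(z_1,z_2) = \sum_j \lambda_j \phi_j(z_1)\phi_j(z_2)$. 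For each fixed truncation level $m$, writing $\sum_{i \ne j}\phi_k(Z_i)\phi_k(Z_j) = (\sum_i \phi_k(Z_i))^2 - \sum_i \phi_k(Z_i)^2$ and applying the multivariate CLT to $n^{-1/2}\sum_i(\phi_1(Z_i),\dots,\phi_m(Z_i)) \xrightarrow{d}(V_1,\dots,V_m)$ together with the law of large numbers $n^{-1}\sum_i \phi_k(Z_i)^2 \to 1$ shows that $n$ times the truncated statistic converges to $\sum_{k=1}^m \lambda_k(V_k^2 - 1)$.

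The main obstacle is the final truncation/tail step, passing from the finite-$m$ limit to the infinite series. I would bound the variance of the discarded tail $\frac{n}{n(n-1)}\sum_{i\ne j}\sum_{k>m}\lambda_k\phi_k(Z_i)\phi_k(Z_j)$ by a constant times $\sum_{k>m}\lambda_k^2$, uniformly in $n$, which tends to $0$ as $m\to\infty$; simultaneously $\sum_{k=1}^m\lambda_k(V_k^2-1)$ converges in $L^2$ to $\sum_{k=1}^\infty\lambda_k(V_k^2-1)$ since $\Var(V_k^2-1)=2$ and $\sum_k\lambda_k^2<\infty$. A standard double-limit argument (e.g.\ in the bounded-Lipschitz or Lévy metric, letting $n\to\infty$ and then $m\to\infty$) then upgrades the finite-dimensional convergence to the claimed weak limit $n\,(U_n-\theta)\xrightarrow{d}\sum_{j=1}^\infty\lambda_j(V_j^2-1)$.
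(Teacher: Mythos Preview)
Your proposal is correct and follows the standard textbook argument. Note, however, that the paper does not actually prove this particular theorem: it simply cites it as a classical result from \citet[Section 3.2]{lee2019u} and \citet[Section 5.5]{serfling1980approximation}. The paper does carry out essentially the same program---Hoeffding projection, CLT for the linear part, spectral expansion of the degenerate kernel, and a truncation/tail argument---only for the \emph{generalized} U-statistic in \cref{sec:gen-u-asymps} (\cref{lem:proj-approx}, \cref{thm:gen-U-alt}, \cref{thm:gen-U-degen-dist}); there the double-limit step is handled via characteristic functions rather than the bounded-Lipschitz metric you suggest, but the two are equivalent for this purpose.
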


\subsection{MMD-based tests}
The eigenvalues $\lambda_j$ in the degenerate case, as they depend on the kernel and the distribution,
are often difficult to find.
Because $\MMDhatsqu$ is degenerate under the null hypothesis, where $\MMD(P, Q) = 0$,
the test threshold must be set based on the more complex distribution.
The eigenvalues $\lambda_j$ can be estimated based on eigendecomposition of the sample kernel matrix \citep{eigendecomp-mmd},
but it is usually faster and more effective to choose a threshold
based on permutation testing \citep[e.g.][]{sutherland:opt-mmd}.
A variant of this procedure also achieves finite-sample valid test thresholds \citep{perm},
rather than the only asymptotic validity achieved from consistent estimates of $\lambda_j$.

\cref{thm:u_stat_asymp,prop: degen-conds} imply
that if $\mu_P = \mu_Q$,
then the $(1-\alpha)$th quantile of $\MMDhatsqu$
will be either $\Theta(1 / \sqrt n)$ or simply $0$.
On the other hand, if $\mu_P \ne \mu_Q$ so that $\MMDsq > 0$,
then $\MMDhatsq$ will be one of
$\MMDsq + \bigO_p(1/ \sqrt n)$,
$\MMDsq + \bigO_p(1 / n)$,
or simply $\MMDsq$, depending on the degeneracy behaviour.
Thus as $n \to \infty$,
any test whose asymptotic level is controlled
will be consistent,
regardless of the degree of degeneracy.
That is, whenever $\mu_P \ne \mu_Q$,
an MMD test will eventually reject as $n \to \infty$.
To do so, however, may require a very large number of samples
when the kernel is a poor match to the problem at hand;
for example, a Gaussian kernel based on image pixels
does a reasonable job at identifying pixel-level shifts on simple aligned image datasets like MNIST \citep[see][]{sutherland:opt-mmd},
but would require huge numbers of samples to identify ``semantic'' shifts in more complex natural image distributions.

To address this issue,
\citet{choice-linear,sutherland:opt-mmd,liu:deep-testing,automl-2st}
maximize the asymptotic \emph{power} of a given MMD test with equal sample sizes.
Assuming that 
$\sigma_1(P, Q)^2$ as defined in \cref{thm:u_stat_asymp} is nonzero,
\[
    \Pr\left( n \MMDhatsqu(\bS_P, \bS_Q) > c_\alpha \right)
    = \Pr\left(
    \sqrt n \frac{\MMDhatsqu(\bS_P, \bS_Q) - \MMDsq(P, Q)}{\sigma_1(P, Q)}
    > \frac{ \frac{c_\alpha}{\sqrt n} - \sqrt n \MMDsq(P, Q) }{\sigma_1(P, Q)}
    \right)
,\]
When $\sigma_1 > 0$,
the left-hand side of the inequality converges in distribution to a standard normal,
and so
\[
    \Pr\left( n \MMDhatsqu(\bS_P, \bS_Q) > c_\alpha \right)
    \sim
    1 - \Phi\left( \frac{ \frac{c_\alpha}{\sqrt n} - \sqrt n \MMDsq(P, Q)}{\sigma_1(P, Q)} \right)
    = \Phi\left( \sqrt n \frac{\MMDsq(P, Q)}{\sigma_1(P, Q)} - \frac{c_\alpha}{\sqrt n \sigma_1(P, Q)} \right)
,\]
where $a \sim b$ denotes $a / b \to 1$
and $\Phi$ is the cdf of the standard normal distribution.
Since $\MMD(P, Q)$, $\sigma_1(P, Q)$, and $c_\alpha$
are population quantities that do not depend on $n$,
for large sample sizes the asymptotic power expression is dominated by the signal-to-noise ratio $\MMDsq(P, Q) / \sigma_1(P, Q)$.
Thus, we can choose a kernel by maximizing a finite-sample estimate of this quantity on a training set,
then run a test with that kernel on an independent test set.

\section{Asymptotic distribution of the MMD estimate}

To generalize this approach to the case where $n_X \ne n_Y$,
we will derive the asymptotic distributions of the estimator $\MMDhatsq$,
rather than $\MMDhatsqu$.
To do so, we fill in results about the theory of
generalized U-statistics \citep[Section 5.5]{serfling1980approximation},
of which the $\MMDhatsq$ estimator is an instance even when $n_X \ne n_Y$.

\citet[Theorem 12]{Gretton2012} previously considered the asymptotics of $\MMDhatsq$
when $n_X \ne n_Y$,
showing that if $n_X / n_Y$ converges to a positive, finite constant and $\MMD(P, Q) = 0$,
then $(n_X + n_Y) \MMDhatsq$ converges in distribution to a slightly different sum of shifted chi-squared variates.
Our results, by contrast,
will also allow $n_X / n_Y \to 0$ or $n_X / n_Y \to \infty$;
to do this, instead of scaling by $n_X + n_Y$, we scale by $\min(n_X, n_Y)$.
In the proportional setting, this only differs by a constant,
but we also allow for non-proportional settings such as $n_Y = n_X^2$.

Before giving our results,
we first empirically demonstrate in \cref{fig:null,fig:alt}
that $\min(n_X, n_Y)$ is indeed the correct scaling.
In the proportional regime, either scaling works,
while in a non-proportional setting only the $\min(n_X, n_Y)$ scaling leads to convergence; it does so to the distributions predicted by our theorems.

\begin{figure}[htbp] 
    \centering
    \begin{subfigure}[b]{\textwidth}
    \centering
    \includegraphics[width = \textwidth]{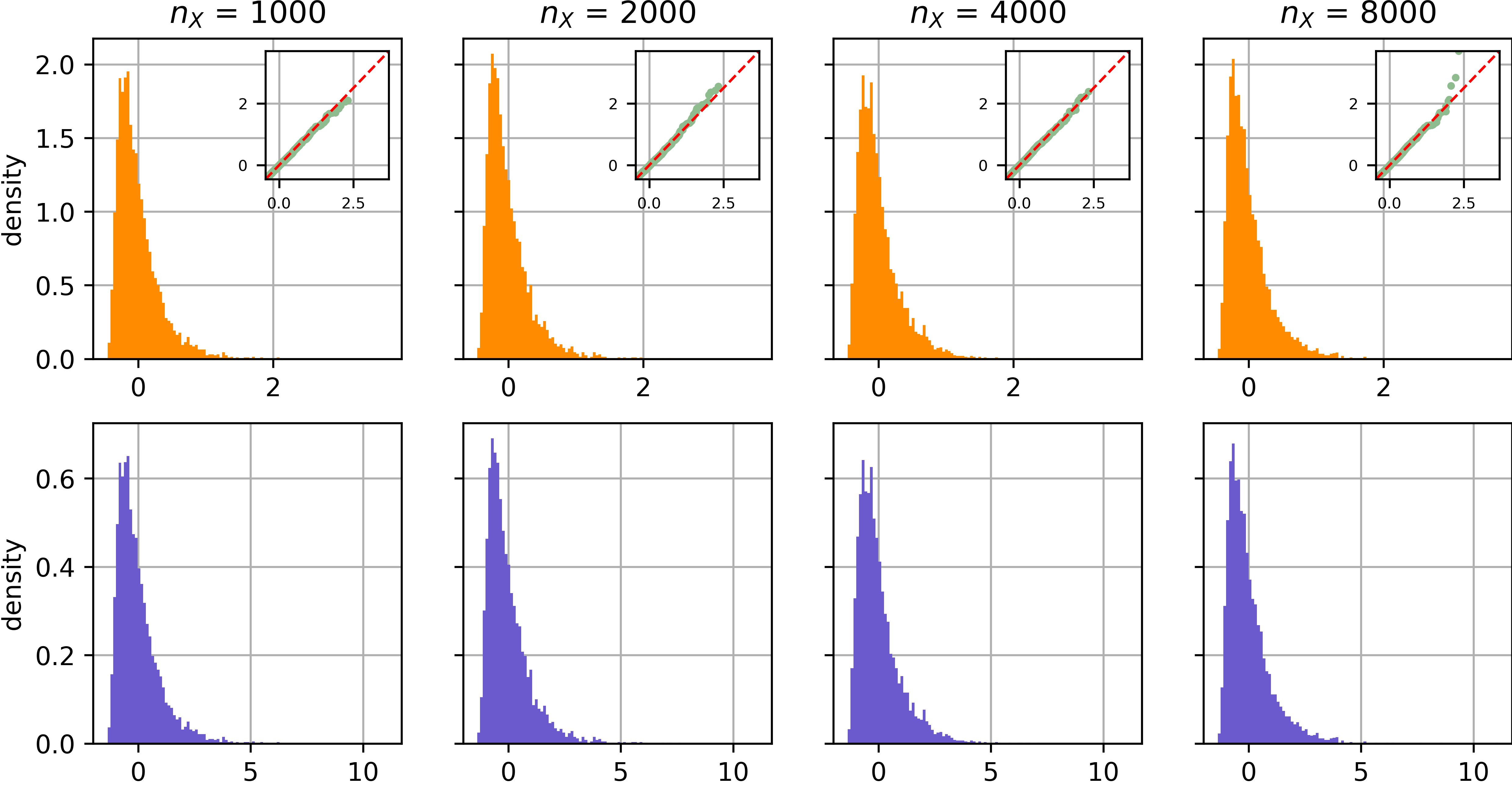}
    \caption{Proportional scaling, $n_Y = n_X/2$.}
    \label{fig:null:prop}
    \end{subfigure}
    \vspace{3mm}
    
    \centering
    \begin{subfigure}[b]{\textwidth}
    \centering
    \includegraphics[width = \textwidth]{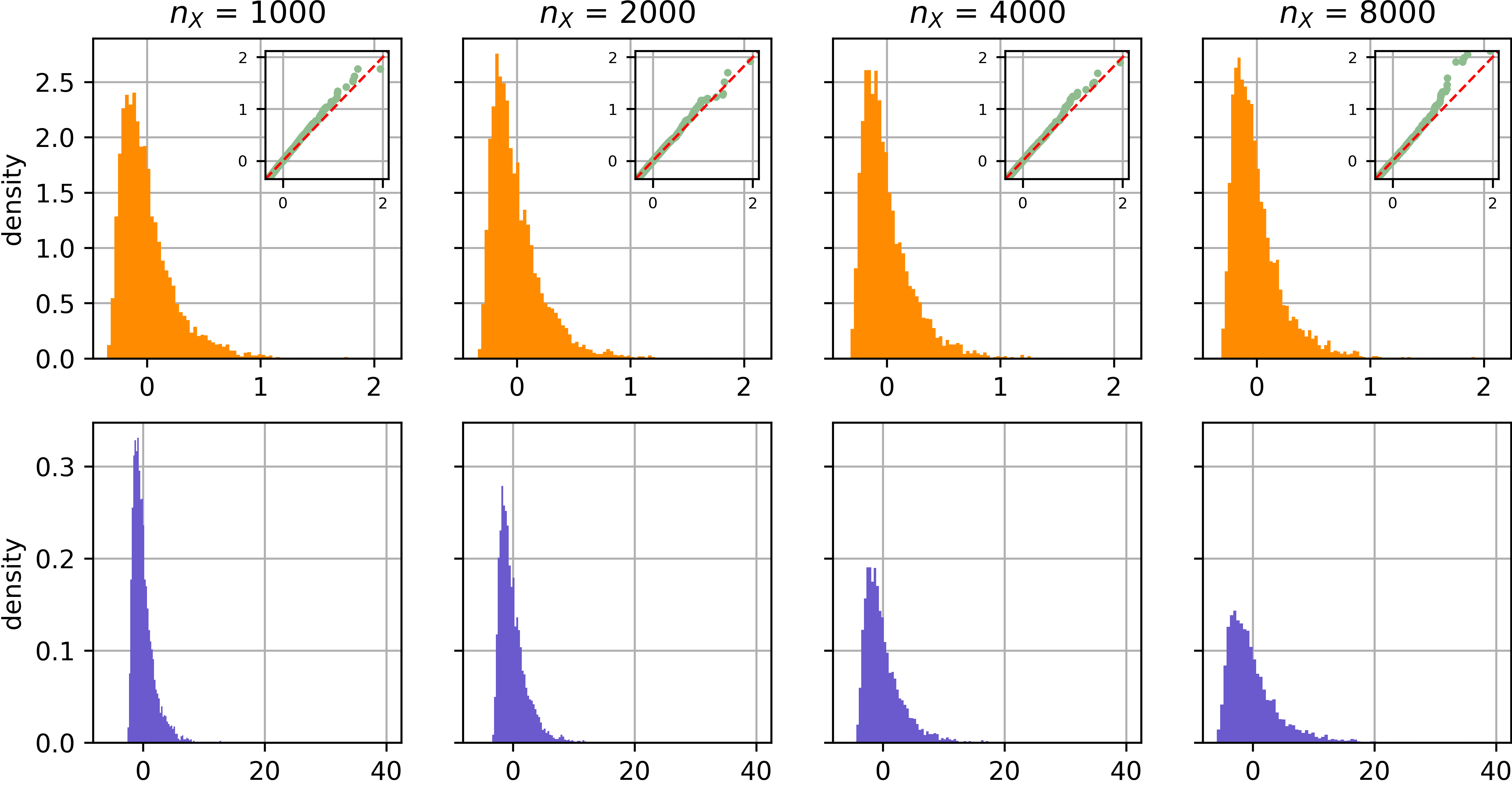}
    \caption{Non-proportional scaling, $n_Y = 5\sqrt{n_X}$.}
    \label{fig:null:nonprop}
    \end{subfigure}

    \caption{Histograms of $n \MMDhatsq$
    for $P=Q=\operatorname{Laplace}(0,1/\sqrt{2})$
    and a unit-lengthscale Gaussian kernel;
    orange (top) rows use $n_{\min} = \min(n_X, n_Y)$,
    while blue (bottom) rows use $n_+ = n_X + n_Y$.
    In the proportional setting (panel \subref{fig:null:prop}), both converge; the only difference is a constant scaling, since $n_X + n_X/2 = 3 \min(n_X, n_X / 2)$.
    In the non-proportional setting (panel \subref{fig:null:nonprop}), however,
    it is clear that the $n_X + n_Y$ scaling is not converging in distribution,
    while the $\min(n_X, n_Y)$ scaling is.
    The $\min(n_X, n_Y)$ results include inset Q-Q plots,
    comparing empirical quantiles to those predicted by the limiting distributions of \cref{thm:gen-U-null};
    eigenvalues in that distribution are estimated based on a sample with $n_X = n_Y = 5000$.}
    \label{fig:null}
\end{figure}

\begin{figure}[htbp]
    \centering
    \begin{subfigure}[b]{\textwidth}
    \centering
    \includegraphics[width = \textwidth]{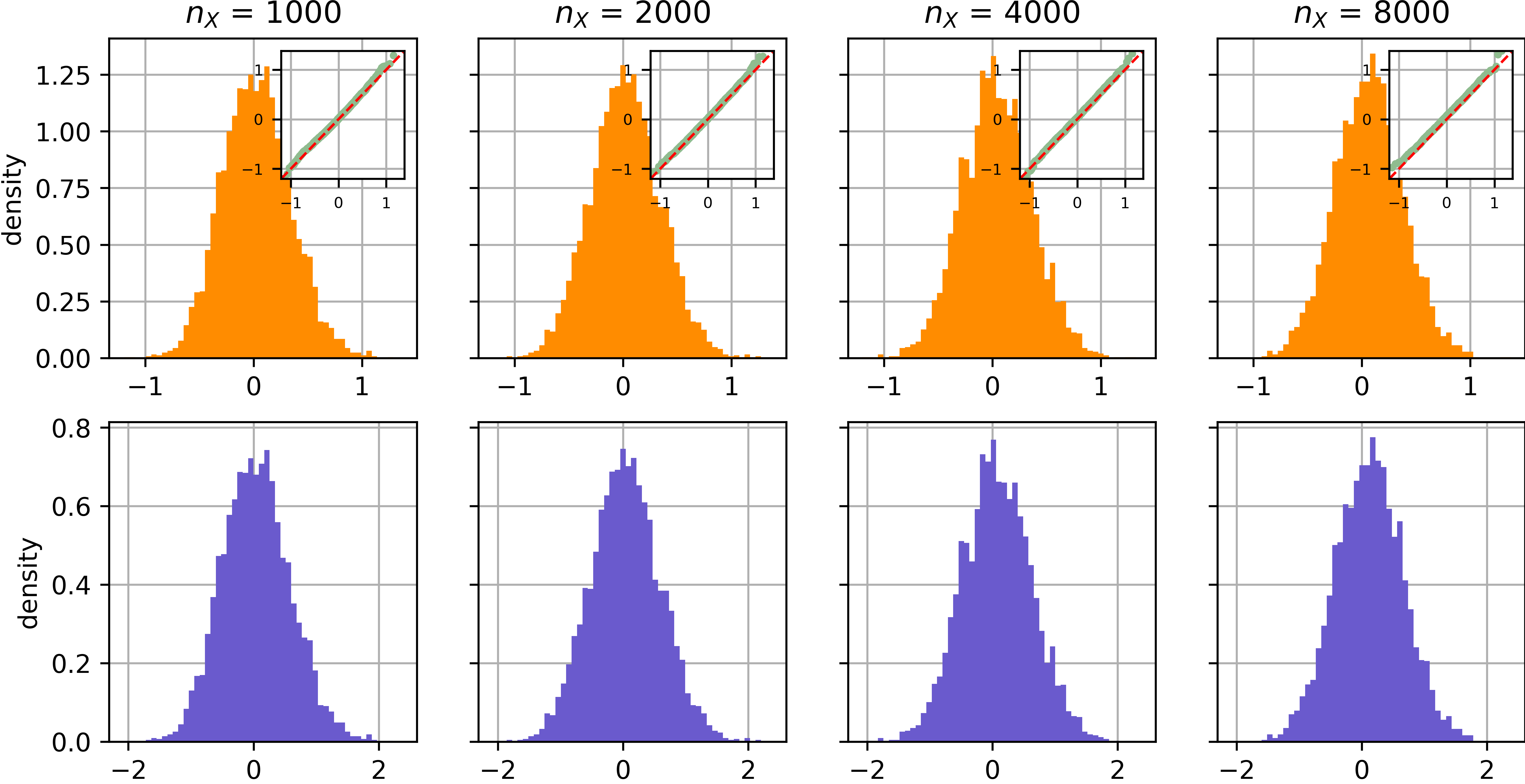}
    \caption{Proportional scaling, $n_Y = n_X/2$.}
    \label{fig:alt:prop}
    \end{subfigure}
    \vspace{3mm}
    
    \begin{subfigure}[b]{\textwidth}
    \centering
    \includegraphics[width = \textwidth]{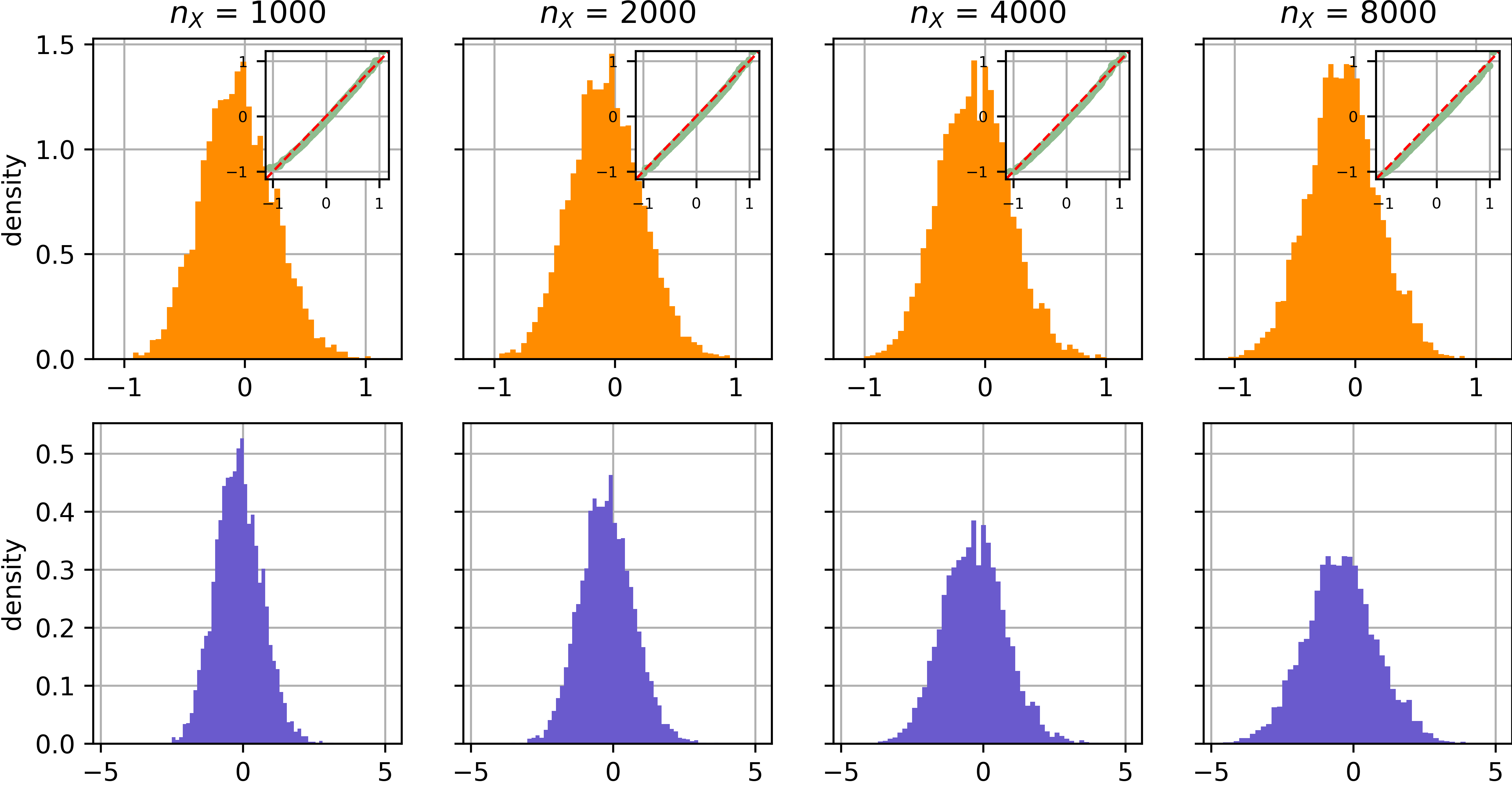}
    \caption{Non-proportional scaling, $n_Y = 5\sqrt{n_X}$.}
    \label{fig:alt:nonprop}
    \end{subfigure}

    \caption{Histograms of $\sqrt{n} (\MMDhatsq - \MMD)$ for $P= \operatorname{Laplace}(0,1/\sqrt{2})$, $Q=\operatorname{Laplace}(0,3)$, and $k$ is a unit-lengthscale Gaussian kernel; here $\MMD(P, Q) > 0$ is estimated based on 10000 samples. As in \cref{fig:null}, orange (top) rows use $n_{\min} = \min(n_X, n_Y)$, while blue (bottom) use $n_+ = n_X + n_Y$. We again see that the $n_X + n_Y$ scaling is clearly not converging in distribution in the non-proportional setting (panel \subref{fig:alt:nonprop}), while $\min(n_X, n_Y)$ converges in accordance with the distribution from \cref{thm:alt-dist}.
    Quantiles of the limiting distribution are computed based on a variance estimated based on a sample with $n_X = n_Y = 10000$.}
    \label{fig:alt}
\end{figure}

\subsection{Generalized U-Statistics}\label{sec:gen-U}
To analyze $\MMDhatsq$, we will use the following concept.
\begin{definition}[Generalized U-statistic] \label{def: gen-U}
For $j \in [c] = \{1, 2, \dots, c \}$,
let $(X_{ij})_{i \in [n_j]} \sim \mu_j^{n_j}$
be mutually independent random variables.
Let $h$ be a real-valued function of $m_1 + \dots + m_c$ arguments which is symmetric in the sense that the value of
$
    h(x_{11},\dots,x_{m_1 1}; \dots ; x_{1 c}, \dots, x_{m_c c})
$
remains unchanged if we permute any block of arguments $(x_{1j},\dots,x_{m_j j})$. The $c$-sample generalized U-statistic associated with kernel $h$ is defined by
\begin{equation} \label{eqn:gen-U}
    U_{n_{\min}} = \prod_{j=1}^c \binom{n_j}{m_j}^{-1} \sum_{\sigma_1} \dots \sum_{\sigma_c} h(X_{\sigma_1(1),1}, \dots, X_{\sigma_1(m_j),1}; \dots ; X_{\sigma_c(1),k}, \dots, X_{\sigma_c(m_c),c}),
\end{equation}
where $n_{\min} = \min\{n_1, \dots, n_c\}$,
and $\sigma_j$ varies over each injection from $[m_j]$ to $[n_j]$. For brevity, we may also refer to $U_n$ as a $c$-sample U-statistic.
\end{definition}

As has been previously pointed out by \citet{minimax-perm,mmd-agg},
$\MMDhatsq$ can be viewed as a generalized U-statistic with $c = 2$, $m_1 = m_2 = 2$, using the kernel
\begin{equation} \label{eq:gen-U-mmd}
h(x_i, x_j; y_i, y_j) := k(x_i, x_j) + k(y_i, y_j) - \frac{1}{2} \left[ k(x_i, y_j) + k(x_j, y_i) + k(x_i,y_i) + k(x_j,y_j) \right]
.\end{equation}
While a direct implementation of \eqref{eqn:gen-U} would sum over $\bigO(n_X^2 n_Y^2)$ evaluations of the function $h$,
in fact each term in this $h$ considers at most two elements;
thus many terms in the average are irrelevant.
An implementation taking this into account becomes exactly that of $\MMDhatsq$ in \eqref{eq:mmdhatsq},
with $\bigO( (n_X + n_Y)^2 )$ kernel evaluations.

Let $\bX = (X_{11},\dots,X_{m_11};\dots;X_{1c},\dots,X_{m_c c}) \sim \prod_{j=1}^c \mu_j^{m_j}$.
\Citet{sen-1974-gen-U}
derived the variance of a generalized U-statistic as
\begin{equation}
\begin{split}
     \Var(U_n)
     = \sum_{d_1 = 0}^{m_1} \dots \sum_{d_c = 0}^{m_c} \Biggl( \prod_{j = 1}^c \binom{n_j}{m_j}^{-1} \binom{m_j}{d_j} \binom{n_j - m_j}{m_j - d_j} \Biggr) \zeta_{d_1 \dots d_c}
     \\\text{where}\quad
\zeta_{d_1 \dots d_c} = \Var(\E(h(\bX) \mid X_{1 1},\dots, X_{d_1 1}, \dots, X_{1 c},\dots, X_{d_c c}))
.\end{split}
\label{eq:gen-U-var}
\end{equation}
This motivates the following definition.
\begin{definition} \label{def:ord-degen}
    We say that a generalized U-statistic $U_{n_{\min}}$ has \emph{order of degeneracy $r$}
    if $\zeta_{d_1 \dots d_c} = 0$ for all $d_1 +  \dots + d_c \leq r$,
    and $\zeta_{d_1 \dots d_c} > 0$ for at least one $d_1 + \dots + d_c = r + 1$,
    where the $\zeta$s are as in \eqref{eq:gen-U-var}.
    If all $\zeta_{d_1 \dots d_c} = 0$, we say its order of degeneracy is infinite.
    If $U_{n_{\min}}$ has order of degeneracy zero, we say it is non-degenerate.
\end{definition}

The following result is then immediate from \eqref{eq:gen-U-var},
since $\binom{n_j}{m_j}^{-1} \binom{m_j}{d_j} \binom{n_j - m_j}{m_j - d_j} = \Theta(n_j^{-d_j})$:
\begin{proposition} \label{thm:gen-U-var-bigO}
For a generalized U-statistic with finite order of degeneracy $r$,
\begin{align}
     \Var(U_{n_{\min}})
     &= \sum_{d_1 = 0}^{m_1} \dots \sum_{d_c = 0}^{m_c} \Theta \left( \prod_{j=1}^c n_j^{-d_j} \right) \zeta_{d_1 \dots d_c}
     \label{eq:gen-U-var-bigO}
     = \bigO\left( n_{\min}^{-(r + 1)} \right)
.\end{align}
\end{proposition}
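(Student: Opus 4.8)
The plan is to start from the exact variance formula \eqref{eq:gen-U-var} of \citet{sen-1974-gen-U} and reduce the claim to (i) a purely combinatorial asymptotic estimate of the coefficients and (ii) the definition of the order of degeneracy. Since $c$ and $m_1, \dots, m_c$ are fixed constants while $n_1, \dots, n_c \to \infty$, the sum in \eqref{eq:gen-U-var} contains a fixed finite number $\prod_{j=1}^c (m_j + 1)$ of terms, so it suffices to control each term separately and then add.

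First I would establish the coefficient asymptotics factor by factor. For a fixed index $j$, the quantity $\binom{n_j}{m_j}$ is a polynomial in $n_j$ of degree $m_j$ with positive leading coefficient $1/m_j!$, so $\binom{n_j}{m_j} = \Theta(n_j^{m_j})$; the quantity $\binom{n_j - m_j}{m_j - d_j}$ is a polynomial in $n_j$ of degree $m_j - d_j$, so it is $\Theta(n_j^{m_j - d_j})$; and $\binom{m_j}{d_j}$ is a constant independent of $n_j$. Multiplying these gives
\[
\binom{n_j}{m_j}^{-1} \binom{m_j}{d_j} \binom{n_j - m_j}{m_j - d_j} = \Theta(n_j^{-m_j}) \cdot \Theta(1) \cdot \Theta(n_j^{m_j - d_j}) = \Theta(n_j^{-d_j}),
\]
with hidden constants depending only on $m_j, d_j$. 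Taking the product over $j \in [c]$ turns the coefficient of $\zeta_{d_1 \dots d_c}$ in \eqref{eq:gen-U-var} into $\Theta(\prod_{j=1}^c n_j^{-d_j})$, which is precisely the first equality in \eqref{eq:gen-U-var-bigO}.

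For the $\bigO$ bound I would invoke \cref{def:ord-degen}: because the order of degeneracy is $r$, every $\zeta_{d_1 \dots d_c}$ with $d_1 + \dots + d_c \le r$ vanishes, so those terms drop out. For a surviving multi-index with $d_1 + \dots + d_c \ge r + 1$, the fact that $n = \min_j n_j$ gives $n_j \ge n$ and hence $n_j^{-d_j} \le n^{-d_j}$ for each $j$; multiplying yields $\prod_{j=1}^c n_j^{-d_j} \le n^{-(d_1 + \dots + d_c)} \le n^{-(r+1)}$. Each nonzero term is therefore $\bigO(n^{-(r+1)})$, and a sum of finitely many such terms is again $\bigO(n^{-(r+1)})$.

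I do not anticipate a genuine obstacle here; the content is essentially bookkeeping, and the only points requiring care are (a) confirming that the $\Theta$ constants are uniform in the $n_j$, depending only on the fixed integers $m_j, d_j$, and (b) noting that each $\zeta_{d_1 \dots d_c}$ is a finite constant independent of the sample sizes---guaranteed because it appears in the finite variance expression \eqref{eq:gen-U-var}, which presumes $\E[h(\bX)^2] < \infty$---so that multiplying the rate $\Theta(\prod_{j=1}^c n_j^{-d_j})$ by $\zeta_{d_1 \dots d_c}$ is legitimate. (Note that only the $\bigO$ upper bound is claimed, so possible cancellation among the surviving terms, or a faster rate in a genuinely non-proportional regime, is harmless.)
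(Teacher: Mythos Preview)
Your proposal is correct and follows exactly the approach the paper takes: the paper states the result as ``immediate from \eqref{eq:gen-U-var}, since $\binom{n_j}{m_j}^{-1} \binom{m_j}{d_j} \binom{n_j - m_j}{m_j - d_j} = \Theta(n_j^{-d_j})$,'' and your write-up simply fills in the elementary combinatorial and degeneracy-definition details that justify this one-line claim.
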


For the MMD in particular,
we can express the variance
in a form similar in spirit to \eqref{eq:mmdu-var}.
\begin{restatable}{theorem}{mmdestVar}
\label{prop:var-gen-mmd}
Under \cref{def:settings},
it holds for any $n_X,n_Y \geq 2$ that
\begin{align*} 
    \Var_{\substack{\bS_P \sim P^{n_X}\\\bS_Q \sim Q^{n_Y}}}(\MMDhatsq(\bS_P, \bS_Q))
    &=
    \frac{4}{n_X} \left( 1 - \frac{4}{n_Y} \cdot \frac{n_X - 2}{n_X - 1} \cdot \frac{n_Y - 2}{n_Y - 1} \right) \inner{\mu_P - \mu_Q}{C_P (\mu_P - \mu_Q)}
    \\
    &+ \frac{4}{n_Y} \left( 1 - \frac{4}{n_X} \cdot \frac{n_X - 2}{n_X - 1} \cdot \frac{n_Y - 2}{n_Y - 1} \right) \inner{\mu_P - \mu_Q}{C_Q (\mu_P - \mu_Q)}
  \\&
    + \frac{2}{n_X (n_X-1)} \norm{C_P}_\HS^2
    + \frac{2}{n_Y (n_Y-1)} \norm{C_Q}_\HS^2
    + \frac{4}{n_X n_Y} \inner{C_P}{C_Q}_\HS
  \\&
    + \frac{16}{n_X n_Y} \left( \frac{n_X-2}{n_X-1} \cdot \frac{n_Y-2}{n_Y-1} \right) \left[ \inner{\mu_P}{C_P \mu_P} + \inner{\mu_Q}{C_Q \mu_Q}\right]
.\end{align*}
\end{restatable}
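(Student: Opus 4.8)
The plan is to bypass the general variance formula \eqref{eq:gen-U-var} and instead expand $\Var(\MMDhatsq)$ directly, keeping every quantity in the Hilbert--Schmidt language of \cref{prop:l2-kernel}. Write $\MMDhatsq = A + B - 2C$, where $A = \binom{n_X}{2}^{-1}\sum_{i<j}k(x_i,x_j)$ and $B = \binom{n_Y}{2}^{-1}\sum_{i<j}k(y_i,y_j)$ are ordinary order-two U-statistics and $C = \frac{1}{n_X n_Y}\sum_{i,j}k(x_i,y_j)$. Because $\bS_P$ and $\bS_Q$ are independent, $\Cov(A,B)=0$, so $\Var(\MMDhatsq) = \Var(A) + \Var(B) + 4\Var(C) - 4\Cov(A,C) - 4\Cov(B,C)$ and it suffices to evaluate these five terms. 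I write $\phi(x) = k(\cdot,x)$ so that $k(x,x') = \inner{\phi(x)}{\phi(x')}_\cH$, and abbreviate $\tilde C_P = C_P + \mu_P\otimes\mu_P = \E[\phi(X)\otimes\phi(X)]$. (One could equivalently substitute the nine conditional-expectation variances $\zeta_{d_1 d_2}$ into \eqref{eq:gen-U-var}, where the factors $\frac{n_X-2}{n_X-1}\cdot\frac{n_Y-2}{n_Y-1}$ enter through the combinatorial coefficients $\binom{n_j}{m_j}^{-1}\binom{m_j}{d_j}\binom{n_j-m_j}{m_j-d_j}$, but the direct route is cleaner.)

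For $\Var(A)$ and $\Var(B)$ I apply \cref{thm:u_stat_variance} with RKHS kernel $k$. The conditional mean is $\E_{x'}[k(x,x')\mid x] = \inner{\phi(x)}{\mu_P}_\cH$, whose variance is $\inner{\mu_P}{C_P\mu_P}$ by the identity $\inner{f}{C_P g} = \Cov(f(X),g(X))$. The full-kernel variance is the first genuinely quadratic object: using $\inner{a}{b}_\cH^2 = \inner{a\otimes a}{b\otimes b}_\HS$ and the Bochner interchange of \cref{prop:l2-kernel}, I get $\E[k(x,x')^2] = \norm{\tilde C_P}_\HS^2 = \norm{C_P}_\HS^2 + 2\inner{\mu_P}{C_P\mu_P} + \norm{\mu_P}_\cH^4$, hence $\Var_{x,x'}[k(x,x')] = \norm{C_P}_\HS^2 + 2\inner{\mu_P}{C_P\mu_P}$. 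The combinatorial prefactors of \cref{thm:u_stat_variance} then collapse the $\inner{\mu_P}{C_P\mu_P}$ coefficient to $4/n_X$ and leave $\frac{2}{n_X(n_X-1)}\norm{C_P}_\HS^2$, with the symmetric statement for $B$.

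The three cross pieces are where $\MMDhatsq$ departs from the equal-size U-statistic, and I evaluate them by classifying pairs of summands according to their shared indices. For $\Var(C)$ only three overlap patterns survive: both indices shared, yielding $\Var_{x,y}[k(x,y)]$; a shared $x$ only, yielding $\Var_x\inner{\phi(x)}{\mu_Q} = \inner{\mu_Q}{C_P\mu_Q}$; and a shared $y$ only, yielding $\inner{\mu_P}{C_Q\mu_P}$, with multiplicities $n_X n_Y$, $n_X n_Y(n_Y-1)$, and $n_X(n_X-1)n_Y$. The doubly-shared term again needs a Hilbert--Schmidt expansion, $\E[k(x,y)^2] = \inner{\tilde C_P}{\tilde C_Q}_\HS$, giving $\Var_{x,y}[k(x,y)] = \inner{C_P}{C_Q}_\HS + \inner{\mu_Q}{C_P\mu_Q} + \inner{\mu_P}{C_Q\mu_P}$. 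In $\Cov(A,C)$ the only nonzero overlap is when the $x$-index of a $C$-summand equals one of the two indices of an $A$-summand; integrating out the free $y$ reduces each such term to $\Cov_x(\inner{\phi(x)}{\mu_P},\inner{\phi(x)}{\mu_Q}) = \inner{\mu_P}{C_P\mu_Q}$, and $\Cov(B,C)$ is symmetric.

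Finally I assemble the five pieces and complete the square in the embeddings: the $1/n_X$-order contributions group as $\inner{\mu_P}{C_P\mu_P} - 2\inner{\mu_P}{C_P\mu_Q} + \inner{\mu_Q}{C_P\mu_Q} = \inner{\mu_P-\mu_Q}{C_P(\mu_P-\mu_Q)}$ (and symmetrically for $C_Q$), while the $\norm{C_P}_\HS^2$, $\norm{C_Q}_\HS^2$, and $\inner{C_P}{C_Q}_\HS$ terms come straight from the doubly-shared patterns. I expect the main obstacle to be twofold. First, the ``bilinear'' second moments $\Var_{x,x'}[k(x,x')]$ and $\Var_{x,y}[k(x,y)]$ each require rewriting a squared inner product as a Hilbert--Schmidt inner product of uncentered covariance operators and then re-centering; this is the only place real operator theory is used, and getting the $\inner{\mu}{C\mu}$ remainder terms right is delicate. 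Second, the combinatorial bookkeeping of overlap multiplicities in the cross terms is what converts raw counts into the exact $n_X,n_Y$ dependence, and must be tracked precisely to obtain a formula valid for every $n_X,n_Y\ge 2$ rather than only asymptotically. As a sanity check, setting $n_X = n_Y$ should reproduce \cref{thm:mmdu-var-decomp} up to the contribution of the diagonal terms $k(x_i,y_i)$ that $\MMDhatsqu$ omits.
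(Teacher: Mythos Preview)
Your approach is genuinely different from the paper's. The paper computes all eight nontrivial conditional variances $\zeta_{X},\zeta_{Y},\zeta_{XX'},\zeta_{YY'},\zeta_{XY},\zeta_{XX'Y},\zeta_{XYY'},\zeta_{XX'YY'}$ and substitutes into the general formula \eqref{eq:gen-U-var}, then collapses the combinatorial coefficients. You bypass that machinery entirely: splitting $\MMDhatsq=A+B-2C$ reduces the problem to two ordinary one-sample U-statistic variances (\cref{thm:u_stat_variance}) plus three cross terms that each involve only a single conditional covariance, and every ``hard'' second moment is handled by the same $\langle a,b\rangle_\cH^2=\langle a\otimes a,b\otimes b\rangle_\HS$ trick. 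This is cleaner, needs no bookkeeping of the $\binom{n_j}{m_j}^{-1}\binom{m_j}{d_j}\binom{n_j-m_j}{m_j-d_j}$ coefficients, and all your intermediate computations are correct.

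There is, however, a serious issue you should be prepared for: your method will \emph{not} reproduce the formula as stated. Assembling your five pieces gives
\[
\Var(\MMDhatsq)=\frac{4}{n_X}\nu_P+\frac{4}{n_Y}\nu_Q+\frac{2}{n_X(n_X-1)}\norm{C_P}_\HS^2+\frac{2}{n_Y(n_Y-1)}\norm{C_Q}_\HS^2+\frac{4}{n_Xn_Y}\inner{C_P}{C_Q}_\HS,
\]
with $\nu_P=\inner{\mu_P-\mu_Q}{C_P(\mu_P-\mu_Q)}$ and $\nu_Q$ analogously, and \emph{no} $\frac{n_X-2}{n_X-1}\cdot\frac{n_Y-2}{n_Y-1}$ correction factors and no $\inner{\mu_P}{C_P\mu_P}+\inner{\mu_Q}{C_Q\mu_Q}$ term. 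The discrepancy is not a gap in your argument but a sign error in the paper's proof: in the computation of $\zeta_{XY}$ the two covariance terms $\Cov\!\bigl(\mu_P(X)-\tfrac12\mu_Q(X),\,k(X,Y)\bigr)$ and $\Cov\!\bigl(\mu_Q(Y)-\tfrac12\mu_P(Y),\,k(X,Y)\bigr)$ enter with the wrong sign (they should be $-\Cov$, since $k(X,Y)$ carries a factor $-\tfrac12$). The correct value is $\zeta_{XY}=\tfrac14\inner{C_P}{C_Q}_\HS+\nu_P+\nu_Q$, which, when fed back into \eqref{eq:gen-U-var}, exactly cancels the extra terms and recovers your formula. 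A quick sanity check: for the linear kernel $k(x,y)=xy$ on $\R$, $\MMDhatsq$ is invariant under a common shift $x_i\mapsto x_i+c$, $y_j\mapsto y_j+c$, so its variance cannot depend on $\inner{\mu_P}{C_P\mu_P}=(\E X)^2\Var X$; the stated formula does, while yours does not.
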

Notice that each of the factors in large parentheses has limit $1$ as $n_X, n_Y \to \infty$, regardless of their relative rate.
The proof of \cref{prop:var-gen-mmd},
which follows from \eqref{eq:gen-U-var}, is in \cref{sec:var-mmd-calc}.
The computation of conditional variances also yields the following result about degeneracy.
\begin{proposition}
    $\MMDhatsq$ has the same order of degeneracy as $\MMDhatsqu$;
    thus \cref{prop: degen-conds}
    applies to $\MMDhatsq$ as well.
\end{proposition}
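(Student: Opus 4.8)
The plan is to match the three degeneracy regimes---non-degenerate, first-order degenerate, and infinitely degenerate---for the two estimators, exploiting the fact that for both of them the order of degeneracy is determined intrinsically by which conditional variances vanish (the $\zeta_{d_1 d_2}$ of \cref{def:ord-degen} for the generalized statistic $\MMDhatsq$, and the Hoeffding variance components for $\MMDhatsqu$), rather than by any sample-size--dependent scaling. Throughout write $\delta = \mu_P - \mu_Q$.

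First I would compute the two leading conditional variances of $\MMDhatsq$. Conditioning the kernel \eqref{eq:gen-U-mmd} on a single $P$-argument $X_{11} = x$ and integrating out the other three arguments leaves $\E[h(\bX) \mid X_{11} = x]$ equal to $\delta(x)$ plus a constant independent of $x$, so that $\zeta_{10} = \Var_{X \sim P}[\delta(X)] = \inner{\delta}{C_P \delta}_\cH$, and symmetrically $\zeta_{01} = \inner{\delta}{C_Q \delta}_\cH$; both are nonnegative by positivity of $C_P, C_Q$. These are exactly the conditional-expectation computations already carried out en route to \cref{prop:var-gen-mmd} in \cref{sec:var-mmd-calc}. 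Since $\MMDhatsq$ is non-degenerate iff $\zeta_{10} > 0$ or $\zeta_{01} > 0$, and the two are nonnegative, this happens iff $\inner{\delta}{(C_P + C_Q)\delta}_\cH > 0$---which by \cref{thm:mmdu-var-decomp} is precisely the condition under which the $1/n$ leading term of $\Var(\MMDhatsqu)$ is nonzero, i.e.\ iff $\MMDhatsqu$ is non-degenerate.

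Next I would treat infinite degeneracy. Because every $\zeta_{d_1 d_2} \ge 0$ and, by \eqref{eq:gen-U-var}, $\Var(\MMDhatsq)$ is for any fixed $n_X, n_Y \ge 4$ a combination of the $\zeta$s with strictly positive coefficients, $\MMDhatsq$ is infinitely degenerate iff $\Var(\MMDhatsq) = 0$. I would then observe that every summand in \cref{prop:var-gen-mmd} is nonnegative: the terms $\norm{C_P}_\HS^2, \norm{C_Q}_\HS^2$ obviously, $\inner{C_P}{C_Q}_\HS$ because the Hilbert--Schmidt inner product of two positive (semidefinite) operators is nonnegative, and $\inner{\mu_P}{C_P\mu_P}_\cH, \inner{\mu_Q}{C_Q\mu_Q}_\cH$ by positivity, while all prefactors are nonnegative for $n_X, n_Y \ge 2$. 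Hence the total variance vanishes iff each term does; this forces $\norm{C_P}_\HS = \norm{C_Q}_\HS = 0$, i.e.\ $C_P = C_Q = \bzero$, and conversely $C_P = C_Q = \bzero$ kills every term. So $\MMDhatsq$ is infinitely degenerate iff $C_P = C_Q = \bzero$, which by \cref{prop: degen-conds} is exactly when $\MMDhatsqu$ is. The remaining case---$\inner{\delta}{(C_P+C_Q)\delta}_\cH = 0$ but $C_P + C_Q \ne \bzero$---is then first-order degenerate for both, since both fail to be non-degenerate and fail to be infinitely degenerate; as these three conditions partition all possibilities and agree across the two estimators, the orders coincide and \cref{prop: degen-conds} transfers.

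The main obstacle I anticipate is careful bookkeeping rather than any deep idea: pinning down that $\zeta_{10}, \zeta_{01}$ really equal $\inner{\delta}{C_P\delta}_\cH, \inner{\delta}{C_Q\delta}_\cH$ with no stray constants (the rearranged coefficients in \cref{prop:var-gen-mmd} mix $\zeta_{10}$ with higher-order components, so these must be read off from the raw conditional-expectation computation, not from the assembled formula), and being scrupulous about nonnegativity at the $n = 2$ boundary and about the sign of $\inner{C_P}{C_Q}_\HS$. Conceptually, the one point that must not be fudged is that ``first-order degenerate'' means some $\zeta_{d_1 d_2}$ with $d_1 + d_2 = 2$ is positive---an intrinsic statement about the $\zeta$s---and is \emph{not} the same as $\Var = \Theta(\min(n_X, n_Y)^{-2})$, which can fail when, say, only $C_P \ne \bzero$ while $n_X \gg n_Y$.
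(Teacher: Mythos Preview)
Your proposal is correct and matches the paper's approach: the paper simply asserts that the conditional-variance computations in \cref{sec:var-mmd-calc} (the $\zeta_X, \zeta_Y, \zeta_{XX'}, \ldots$ table) yield the claim, and your argument is precisely the fleshing-out of that assertion. The only minor deviation is that for infinite degeneracy you route through the assembled formula of \cref{prop:var-gen-mmd} and argue term-by-term nonnegativity, whereas the more direct path implicit in the paper is to read straight from the $\zeta$ table that $\zeta_{XX'} = \norm{C_P}_\HS^2 + 2\zeta_X$ and $\zeta_{YY'} = \norm{C_Q}_\HS^2 + 2\zeta_Y$, so all $\zeta$s vanish iff $C_P = C_Q = \bzero$; both arguments are valid.
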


\begin{remark}
In the proportional regime $n_i = \Theta(n_{\min})$,  
the rate of \cref{thm:gen-U-var-bigO} becomes $\Theta(n_{\min}^{-(r+1)})$, i.e.\ the variance characterization is tight for any generalized U-statistic.
The same holds for $\MMDhatsq$ in \cref{setting:analytic}
via \cref{prop: degen-conds} part \ref{it:same-sgn}.
In general, however, we only have $\bigO(n_{\min}^{-(r+1)})$, not $\Theta$:
for instance, if $C_P = 0$,
$\inner{\mu_P - \mu_Q}{C_Q (\mu_P - \mu_Q)} > 0$,
and $\inner{\mu_Q}{C_Q \mu_Q} > 0$,
$\MMDhatsq$ is non-degenerate but
its variance in \cref{eq:mmdu-var} is
$\Theta\bigl(0 + \frac{1}{n_Y} + 0 + \frac{1}{n_Y^2} + 0 + \frac{1}{n_X n_Y}\bigr)
= \Theta\bigl( \frac{1}{n_Y} \bigr)
$,
which for $n_Y = n_X^{10}$ is $\Theta( n_{\min}^{-10} )$, not $\Theta( n_{\min}^{-1} )$.
\end{remark}

\subsection{Distribution when the MMD is zero}\label{sec:null-dist}

We first consider the distribution of the estimator when $\mu_P = \mu_Q$, which implies first-order degeneracy.
(Per \cref{prop: degen-conds}, first-order degeneracy is also possible when $\mu_P \ne \mu_Q$, but relatively uncommon; since it makes the proof more difficult, we do not handle that case here.)

In \cref{thm:gen-U-degen-dist}, stated in \cref{sec:gen-u-asymps},
we derive the asymptotic distribution of a general U-statistic with first order degeneracy and a square-integrable kernel.
To our knowledge, this result (in its treatment of the case $c > 2$ and arbitrary $m_j$) is a substantial generalization of classical theorems. Our proof extends the approach of \citet[Section 5.5.2]{serfling1980approximation}
and \citet{anderson:2st},
which was later used specifically for $\MMDhatsq$ in the proportional setting
by \citet[Theorem 12]{Gretton2012}. In the degenerate case, we also treat asymmetric kernels in a shared manner to \citet[theorem 4.5.3]{koroljuk-1993}, with the use of an orthogonal expansion (\cref{lem:kernel-decomp}).

Since our assumptions are minimal, the resulting distribution is somewhat complicated to describe.
As such, we leave the details to \cref{sec:gen-u-asymps}, and only give the particular case for $\MMDhatsq$ here.

\begin{restatable}{theorem}{thmgenUnull} 
\label{thm:gen-U-null}
Assume \cref{def:settings} and that $\MMDsq(P,Q) = 0$. Assume $\min\{n_X, n_Y\}/n_X \to \rho_X$ and $\min\{n_X, n_Y\}/n_Y \to \rho_Y$ for some $\rho_X, \rho_Y$ in $[0, 1]$. $\MMDhatsq$ converges in distribution as
\[
    \min\{n_X, n_Y\} \MMDhatsq
    \xrightarrow{d}
    (\rho_X + \rho_Y) \sum_{l=1}^\infty \lambda_l \left(Z^2_l - 1 \right)
,\]
where each $Z_l$ is independently $\cN(0,1)$, and the $\lambda_l$ are the eigenvalues of the integral equation
\begin{equation}
    \E_X \left[ \inner{\phi(X) - \mu_P}{\phi(y) - \mu_P}  g(X) \right] = \lambda g(y). \label{eq:kernel-integral}
\end{equation} 
\end{restatable}

That is, $\min\{n_X, n_Y\} \MMDhatsq$
converges in distribution to an infinite mixture of centered chi-squared distributions,
where the mixture components are distribution- and kernel-dependent.

For comparison's sake, in our notation
Theorem 12 of \citet{Gretton2012} says that\footnote{%
They wrote their limiting distribution as 
$\sum_{l=1}^\infty \lambda_l \left( (\ell_X^{-1/2} A_l - \ell_Y^{-1/2} B_l)^2 - (\ell_X \ell_Y)^{-1} \right)$ for $A_l, B_l$ standard normal;
this implies the above result
by noting that $\ell_X^{-1/2} A_l - \ell_Y^{-1/2} B_l
\stackrel{d}{=} \sqrt{\frac{1}{\ell_X} + \frac{1}{\ell_Y}} Z_l
= \sqrt{\frac{\ell_Y + \ell_X}{\ell_X \ell_Y}} Z_l
= \sqrt{\frac{1}{\ell_X \ell_Y}} Z_l$.}
if $n_X / (n_X + n_Y) \to \ell_X$ and $n_Y / (n_X + n_Y) \to \ell_Y = 1 - \ell_X$
for $\ell_X, \ell_Y \in (0, 1)$,
then
\[
(n_X + n_Y) \MMDhatsq
\xrightarrow{d}
\frac{1}{\ell_X \ell_Y} \sum_{l=1}^\infty \lambda_l \left( Z_l^2  - 1 \right)
.\]
To confirm our results agree,
suppose $n_Y = \frac{1}{\rho_Y} n_X$, with $\rho_Y \in (0, 1]$, so $\rho_X = 1$.
Thus
$\ell_X = \frac{1}{1 + \frac{1}{\rho_Y}} = \frac{\rho_Y}{1 + \rho_Y}$
and $\ell_Y = \frac{1}{1 + \rho_Y}$.
Let $D = \sum_l \lambda_l (Z_l^2 - 1)$.
Then our result is
$n_X \MMDhatsq \xrightarrow{d} \left( 1 + \rho_Y \right) D = \frac{1}{\ell_Y} D$,
while
since $n_X + n_Y = (1 + \frac{1}{\rho_Y}) n_X = \frac{1}{\ell_X} n_X$,
theirs equivalently says that
$\frac{1}{\ell_X} n_X \MMDhatsq \xrightarrow{d} \frac{1}{\ell_X \ell_Y} D$.
As demonstrated by \cref{fig:null,fig:alt}, however,
their result cannot be generalized to non-proportional asymptotics.

\subsection{Non-degenerate asymptotic normality} \label{sec:alt-dist}
Our derivation of the asymptotic distribution under the alternative follows from a result on generalized U-statistics,
which is simpler to state than the corresponding result for the degenerate case. The proceeding theorem follows readily from existing literature, with a similar result being presented as an exercise, without solution, by \citet[section 5.5.1]{serfling1980approximation} as well as in \citet[theorem 4.5.1]{koroljuk-1993}. In more recent times, the asymptotic normality of generalized U-statistics has been considered for product form estimators \citep{kuntz-2022} although these results specialize to the case $n_1 = \dots = n_c$.

\begin{restatable}{theorem}{thmgenUalt} \label{thm:gen-U-alt} %
 Let $U_{n_{\min}}$ be a $c$-sample U-statistic with kernel $h$,
 where $n_{\min} = \min\{n_1, \dots, n_c\}$.
 If $n_{\min} / n_j \to \rho_j \in [0, 1]$ for each $j \in [c]$,
 then 
 \begin{align*}
     \sqrt{n_{\min}} (U_{n_{\min}} - \E[h(\mathbf{X})])
     \xrightarrow{d}
     \cN \left( 0, \sum_{j=1}^c \rho_j m_j^2 \Var\left( \E\left[ h(\mathbf{X}) \mid X_{1j} \right] \right) \right)
 ,\end{align*}
 where $\mathcal N(0,0)$ is interpreted as a point mass at 0. In particular, if $U_{n_{\min}}$ is non-degenerate and each $\rho_j > 0$ (the proportional regime), the distribution above is normal with positive variance.
\end{restatable}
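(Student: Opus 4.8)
The plan is to linearize $U_n$ through its Hájek projection and control the residual. Write $\theta = \E[h(\bX)]$, and for each $j \in [c]$ set $\psi_j(x) = \E[h(\bX) \mid X_{1j} = x] - \theta$, so that $\zeta_{(j)} := \Var(\psi_j(X_{1j})) = \Var(\E[h(\bX)\mid X_{1j}])$ is the first-order variance component appearing in \eqref{eq:gen-U-var} (a single $d_j = 1$, the rest zero). Throughout I use square-integrability $\E[h(\bX)^2] < \infty$, which makes every $\zeta$ finite. Since a fixed observation from population $j$ lies in a uniformly random size-$m_j$ index set with probability $m_j/n_j$, symmetry of $h$ gives $\E[U_n \mid X_{ij}] = \theta + \frac{m_j}{n_j}\psi_j(X_{ij})$, so the Hájek projection of $U_n$ onto sums of single-observation functions is
\[
    \hat U_n = \theta + \sum_{j=1}^c \frac{m_j}{n_j}\sum_{i=1}^{n_j}\psi_j(X_{ij}) ,
\]
which is exactly the rank-one part of the Hoeffding decomposition of $U_n$, with $\Var(\hat U_n) = \sum_{j=1}^c \frac{m_j^2}{n_j}\zeta_{(j)}$.

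First I would establish the CLT for $\sqrt n(\hat U_n - \theta)$. The $c$ inner sums involve disjoint, independent populations, and for fixed $j$ the variables $\psi_j(X_{1j}), \dots, \psi_j(X_{n_j j})$ are i.i.d., mean zero, with variance $\zeta_{(j)}$. Writing the $j$th contribution as $m_j\sqrt{n/n_j}\cdot n_j^{-1/2}\sum_{i=1}^{n_j}\psi_j(X_{ij})$, the ordinary CLT gives $n_j^{-1/2}\sum_i \psi_j(X_{ij}) \xrightarrow{d} \cN(0, \zeta_{(j)})$, while $\sqrt{n/n_j} \to \sqrt{\rho_j}$ deterministically, so by Slutsky this contribution converges to $\cN(0, \rho_j m_j^2 \zeta_{(j)})$ --- a point mass at $0$ when $\rho_j = 0$. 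As the $c$ contributions are independent, they converge jointly to the product of these laws, and hence $\sqrt n(\hat U_n - \theta) \xrightarrow{d} \cN(0, \sum_{j=1}^c \rho_j m_j^2 \zeta_{(j)})$.

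Next I would show the residual is negligible after scaling. By orthogonality of the Hoeffding decomposition, $\Var(U_n - \hat U_n) = \Var(U_n) - \Var(\hat U_n)$ equals the sum of the Hoeffding components of $U_n$ of rank $d_1 + \dots + d_c \ge 2$. By the order estimates behind \cref{thm:gen-U-var-bigO} (see \eqref{eq:gen-U-var-bigO}), each such component has variance $\Theta(\prod_{j} n_j^{-d_j})$; since $n_j \ge n$ for every $j$, any term with $\sum_j d_j \ge 2$ is $\bigO(n^{-2})$. Thus $\Var(U_n - \hat U_n) = \bigO(n^{-2}) = o(1/n)$, so $\sqrt n(U_n - \hat U_n) \to 0$ in $L^2$ and therefore in probability. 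Slutsky's theorem then combines the two steps via $\sqrt n(U_n - \theta) = \sqrt n(\hat U_n - \theta) + \sqrt n(U_n - \hat U_n)$, yielding the claimed limit. When $U_n$ is non-degenerate some $\zeta_{(j)} > 0$, and if additionally every $\rho_j > 0$ the limiting variance is strictly positive.

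The main obstacle is the non-proportional regime. Classical treatments scale by a single common sample size, whereas here the scaling is $n = \min_j n_j$ and we must allow $\rho_j = 0$. Two observations are the crux: the factor $\sqrt{n/n_j} \to 0$ automatically suppresses the projection contribution of any over-sampled population, and the elementary bound $n_j \ge n$ forces every rank-$\ge 2$ variance term to be $\bigO(n^{-2})$ irrespective of the relative growth rates of the $n_j$. Carrying out this variance bookkeeping uniformly --- in particular confirming that no higher-order term survives the $\sqrt n$ scaling when one population grows arbitrarily faster than another --- is the most delicate part, but it reduces to $n_j \ge n$ together with the order estimates already recorded in \cref{thm:gen-U-var-bigO}.
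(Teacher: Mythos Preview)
Your proposal is correct and follows essentially the same route as the paper: compute the first-order (H\'ajek) projection $\hat U_{n,1}$, apply the CLT populationwise with Slutsky to handle the factors $\sqrt{n/n_j}\to\sqrt{\rho_j}$, and show the remainder is $o_p(n^{-1/2})$ via an $L^2$ bound. The only cosmetic difference is in how the remainder is controlled: the paper packages this as \cref{lem:proj-approx}, observing that $U_n-\hat U_{n,1}$ is itself a generalized U-statistic whose kernel has degeneracy order at least one and then invoking \cref{thm:gen-U-var-bigO} directly, whereas you use Pythagoras $\Var(U_n-\hat U_n)=\Var(U_n)-\Var(\hat U_n)$ and bound the higher-rank contributions term by term; both arguments reduce to the same $n_j\ge n$ combinatorics.
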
 

The following application to MMD is immediate from \cref{thm:gen-U-alt} and \cref{prop: degen-conds} part \ref{it:same-sgn}.
\begin{theorem}\label{thm:alt-dist}
Assume $\min\{n_X, n_Y\}/n_X \to \rho_X$
and $\min\{n_X, n_Y\}/n_Y \to \rho_Y$
for some $\rho_X, \rho_Y \in [0, 1]$.
The estimator $\MMDhatsq$ is asymptotically normal, following
\[
\sqrt{\min\{n_X, n_Y\}}(\MMDhatsq - \MMDsq) \xrightarrow{d} \cN(0, 4\rho_X \zeta_X + 4 \rho_Y \zeta_Y)
,\]
where $\zeta_X = \inner{\mu_P - \mu_Q}{C_P (\mu_P - \mu_Q)}$
and
$\zeta_Y = \inner{\mu_P - \mu_Q}{C_Q (\mu_P - \mu_Q)}$.

In \cref{setting:analytic}, when $\MMDhatsq$ is non-degenerate the distribution above has positive variance.
\end{theorem}

\subsubsection*{Power maximization}
Paralleling the case for $\MMDhatsqu$,
\cref{thm:alt-dist} implies that when $\sigma^2 = 4 \rho_X \zeta_X + 4 \rho_Y \zeta_Y > 0$ and $n_{\min} = \min\{ n_X, n_Y \}$,
\[
    \Pr\left( n_{\min} \MMDhatsq(\bS_P, \bS_Q) > c_\alpha \right)
    \sim \Phi\left( \sqrt{n_{\min}} \frac{\MMDsq}{\sigma} - \frac{c_\alpha}{\sqrt{n_{\min}} \, \sigma} \right)
.\]
Thus we can choose the asymptotically best kernel for a test with a given sample size ratio
by maximizing the signal-to-noise ratio $\MMDsq / \sigma$,
which we can estimate by $\MMDhatsq / (\hat\sigma + \lambda)$ for some small $\lambda > 0$.

This argument does not apply when the variance $\sigma$ of \cref{sec:alt-dist} 
is zero,
or equivalently when $\MMDhatsq$ is degenerate.
This will always be the case when the true population MMD is zero,
but this is not a concern,
as under this null hypothesis
permutation testing will ensure our test behaves appropriately
regardless of the selected kernel.
It remains to wonder, however,
what might happen when $\MMDhatsq$ is degenerate
but $\MMD > 0$,
as in \cref{prop: degen-conds} part \ref{it:ne-degen}.
If our optimization has come across such a kernel,
the true signal-to-noise ratio will be infinite
(a positive numerator, a zero denominator),
and in fact the power of a test with this kernel should be excellent.
The estimated signal-to-noise will also be extremely large,
since it will estimate a high numerator and (after regularization) a very small denominator.
Thus, while we do not have a full theoretical understanding of the asymptotics of this situation,
it is not a problem for power maximization.

\subsubsection*{Estimating the variance} %
To find an estimator $\hat\sigma$, notice that
\begin{align*}
    \zeta_X &= \Var_{X}\bigl(\E_{X'}[k(X,X')]\bigr) + \Var_X\bigl(\E_Y[k(X,Y)]\bigr) - 2 \Cov_X \bigl(\E_{X'}[k(X,X')], \E_Y[k(X,Y)]\bigr)\\
    \zeta_Y &= \Var_Y\bigl(\E_{Y'}[k(Y,Y')]) + \Var_Y\bigl(\E_X[k(X,Y)]) - 2 \Cov_Y\bigl(\E_{Y'}[k(Y,Y')], \E_X[k(X,Y)]\bigr).
\end{align*}
We use simple plug-in estimators of the following form:
\begin{align}
\Var_{X} [\E_{X'}[k(X,X')]] &\approx \frac{1}{n_X}\sum_{j=1}^{n_X}\left(\frac{1}{n_X}\sum_{i=1}^{n_X}k(x_i, x_j)\right)^2 - \left(\frac{1}{n_X^2}\sum_{j=1}^{n_X}\sum_{i=1}^{n_X} k(x_i, x_j)\right)^2
\label{eq:est-varxx}
\\[2mm]
\Var_X(\E_Y[k(X,Y)]) &\approx \frac{1}{n_X}\sum_{i=1}^{n_X}\left(\frac{1}{n_Y}\sum_{j=1}^{n_Y} k(x_i, y_j) \right)^2 - \left(\frac{1}{n_X n_Y}\sum_{j=1}^{n_Y}\sum_{i=1}^{n_X} k(x_i, y_j) \right)^2
\label{eq:est-varxy}
 \\[2mm]
\Cov_X (\E_{X'}[k(X,X')], \E_Y[k(X,Y)]) &\approx \frac{1}{n_X}\sum_{i=1}^{n_X} \left(\frac{1}{n_X}\sum_{a=1}^{n_X} k(x_i, x_a) \right)\left(\frac{1}{n_Y}\sum_{b=1}^{n_Y} k(x_i, y_b) \right)
\notag\\&\quad\quad
- \left(\frac{1}{n_X^2}\sum_{i=1}^{n_X}\sum_{a=1}^{n_X} k(x_i, x_a) \right)\left(\frac{1}{n_X n_Y}\sum_{j=1}^{n_X}\sum_{b=1}^{n_Y} k(x_j, y_b)\right)
\label{eq:est-covxxy}
,\end{align}
giving $\hat\zeta_X$ as $\eqref{eq:est-varxx} + \eqref{eq:est-varxy} - 2 \cdot \eqref{eq:est-covxxy}$.
The estimator $\hat\zeta_Y$ is analogous,
and finally $\hat\sigma = 2\sqrt{\rho_X \hat\zeta_X + \rho_Y \hat\zeta_Y}$.

These terms, as various means of the $X$-to-$X$, $X$-to-$Y$, and $Y$-to-$Y$ kernel matrices,
can be written straightforwardly in modern automatic differentiation libraries.
Thus $\MMDhatsq / (\hat\sigma + \lambda)$ can be easily written as an objective function for gradient-based optimization of kernel parameters.
These estimators are very similar to the variance estimator used by \citet{liu:deep-testing} and are based on the same sums of kernel terms,
simply scaled in a slightly different way;
thus the difficulty of implementation
and computational cost are essentially the same as that of \citet{liu:deep-testing}.

These plug-in estimators are biased;
unbiased estimators (perhaps also incorporating the lower-order terms of \cref{prop:var-gen-mmd}), could also be derived
as was done for $\MMDhatsqu$ by \citet{sutherland:unbiased-mmd-variance}.
In the setting of SNR maximization, however,
an unbiased estimator of $\sigma^2$ does \emph{not} imply an unbiased estimator of $\frac{1}{\sigma}$;
in fact, no such estimator can exist \citep[Proposition 1]{deka:mmd-bfair}.
\citet{deka:mmd-bfair} also report that the biased estimator worked better in their experiments in a closely related setting.

The uniform convergence result of \citet{liu:deep-testing}
for the signal-to-noise ratio estimator of $\MMDhatsqu$
readily generalizes to this estimator,
showing that optimizing this signal-to-noise ratio estimator
does indeed optimize the true signal-to-noise ratio
for $\MMDhatsq$.
We give an informal summary here,
but in fact all results from \citet{liu:deep-testing}
-- in particular their Theorem 6, Propositions 7 to 9,
and the formal versions in their appendix
Theorem 11 and Corollaries 12 and 13 --
exactly apply when their sample size $n$
is replaced by the harmonic mean $n_\harm = 2 / (1/n_X + 1/n_Y) \ge n_{\min}$.
Details are given in \cref{sec:unif-conv}.
\begin{theorem}[Informal] \label{thm:snr-gen-informal}
Consider a smoothly parameterized set of kernels $k_\omega$
lying in a finite-dimensional Banach space.
Then, with appropriate regularization $\lambda$,
$\MMDhatsq / (\hat\sigma + \lambda)$ converges uniformly to $\MMDhat / \sigma$
over bounded sets of parameters with variance bounded away from zero.
Thus the maximizer of the estimate approaches the maximizer of the population signal-to-noise ratio.
\end{theorem}

\subsection*{Scaling with the harmonic mean}
To conclude this section, we consider an alternative scaling to $n_{\min}$ which reveals more explicit comparisons between $\MMDhatsq$ and $\MMDhatsqu$.
Assuming that the appropriate limits exist, we have
\begin{equation*}
    n_{\harm} = \frac{2}{1/n_X + 1/n_Y} = \frac{2 n_{\min}}{n_{\min}/n_x + n_{\min}/n_x} \sim \frac{2 n_{\min}}{\rho_X + \rho_Y}.
\end{equation*}
The above directly leads the following corollary to \cref{thm:alt-dist} and \cref{thm:gen-U-null}.
\begin{corollary}
    Under the conditions of \cref{thm:alt-dist}, we have
    \begin{equation}
        n_{\harm} \MMDhatsq \xrightarrow{d} 2 \sum_{l=1}^\infty \lambda_l \left(Z^2_l - 1 \right) \label{eq:harm-null}
    \end{equation}
    where the $Z_l$ are independently $\mathcal N(0,1)$ and the $\lambda_l$ are the eigenvalues of the integral equation \eqref{eq:kernel-integral}. Under the conditions of \cref{thm:gen-U-null}, we have 
    \begin{equation}
        \sqrt{n_{\harm}}(\MMDhatsq - \MMDsq) \xrightarrow{d} \cN \left(0, 8 \left[ \frac{\rho_X}{\rho_X + \rho_Y} \zeta_X + \frac{\rho_Y}{\rho_X + \rho_Y} \zeta_Y \right] \right) \label{eq:harm-alt}
    \end{equation}
    where $\zeta_X = \inner{\mu_P - \mu_Q}{C_P (\mu_P - \mu_Q)}$ and $\zeta_Y = \inner{\mu_P - \mu_Q}{C_Q (\mu_P - \mu_Q)}$.
\end{corollary}
Consider the hypothetical scenario where samples from $P$ are limited to $n_X$ points, though $Q$ can be easily sampled. Under the null, 
\citet[Theorem 19]{gretton2008kernelmethodtwosampleproblem} calculated that $n_X \MMDhatsqu \xrightarrow{d} 2 \sum_{l=1}^\infty \lambda_l \left(Z^2_l - 1 \right)$, with $\lambda_l,Z_l$ defined as in \eqref{eq:harm-null}. In terms of variance this means that $\Var(\MMDhatsqu)/\Var(\MMDhatsq) \approx n_{\harm}/n_X = 2/(1 + 1/n_Y)$. Thus maximally exploiting an excess of samples from $Q$ (taking $n_Y \ra \infty$) shrinks the variance of the sampling distribution up to a factor of two when using the estimator $\MMDhatsq$. While a factor of two might seem modest, the upcoming experiments demonstrate that two-sample testing can benefit immensely from this effect.

\section{Experimental comparison of tests}
In this section,
we evaluate the performance of two-sample tests
based on our estimators when $n_X \ne n_Y$,
showing that power is improved when we use as many samples as are available even when that number is asymmetric.
We use a variant of permutation tests
such that the Type-I error rate is exactly controlled under the null \citep{perm}.
We compare \emph{CIFAR-10} \citep{cifar-10}, consisting of 60,000 images, with the \emph{CIFAR-10.1} dataset \citep{cifar-10.1} of 2,031 images. Following the experimental setup of \citet{liu:deep-testing}, we extend the framework to accommodate unequal sample sizes, taking advantage of the flexibility provided by our proposed method.
We study the following procedures to compare the test power of $\MMDhatsqu$ and $\MMDhatsq$:
\begin{itemize}
    \item \textbf{Ratio/equal (RE):} Extending \citet{liu:deep-testing}'s experiment, we train on 1000 samples from \emph{CIFAR-10.1}, but select $1000 \, r$ training  samples from \emph{CIFAR-10}. Learning a deep kernel by maximizing $\MMDhatsqu / (\hat{\sigma}_1 + \lambda)$ remains possible through mini-batching images from both image sets in equal proportions. Finally the power is estimated through 100 runs of a permutation test using $\MMDhatsqu$ on the remaining data.
    \item \textbf{Ratio/ratio (RR):} The training set selection is identical to RE. For training, we construct mini-batches with the proportion of \emph{CIFAR-10.1} to \emph{CIFAR-10} samples being $1/r$ as well. Correspondingly, we maximize the objective $\MMDhatsq / (\hat\sigma + \lambda)$ using with $\rho_X, \rho_Y$ chosen to reflect the mini-batch proportions. The power is estimated through 100 runs of a permutation test using $\MMDhatsq$, again taking the proportion of the image sets to be $1/r$.
\end{itemize}
In both RE and RR, we parametrize our deep kernel using the same CNN architecture as in \citet{liu:deep-testing} and maximize the objectives using the Adam optimizer \citep{kingma2017adammethodstochasticoptimization}.
Results are shown in \cref{fig:expts};
we see improvement both from using more test data
as well as training using our variance criterion.

\begin{figure}
    \centering
    \begin{subfigure}{\textwidth}
        \centering
        \includegraphics[width=0.35\linewidth]{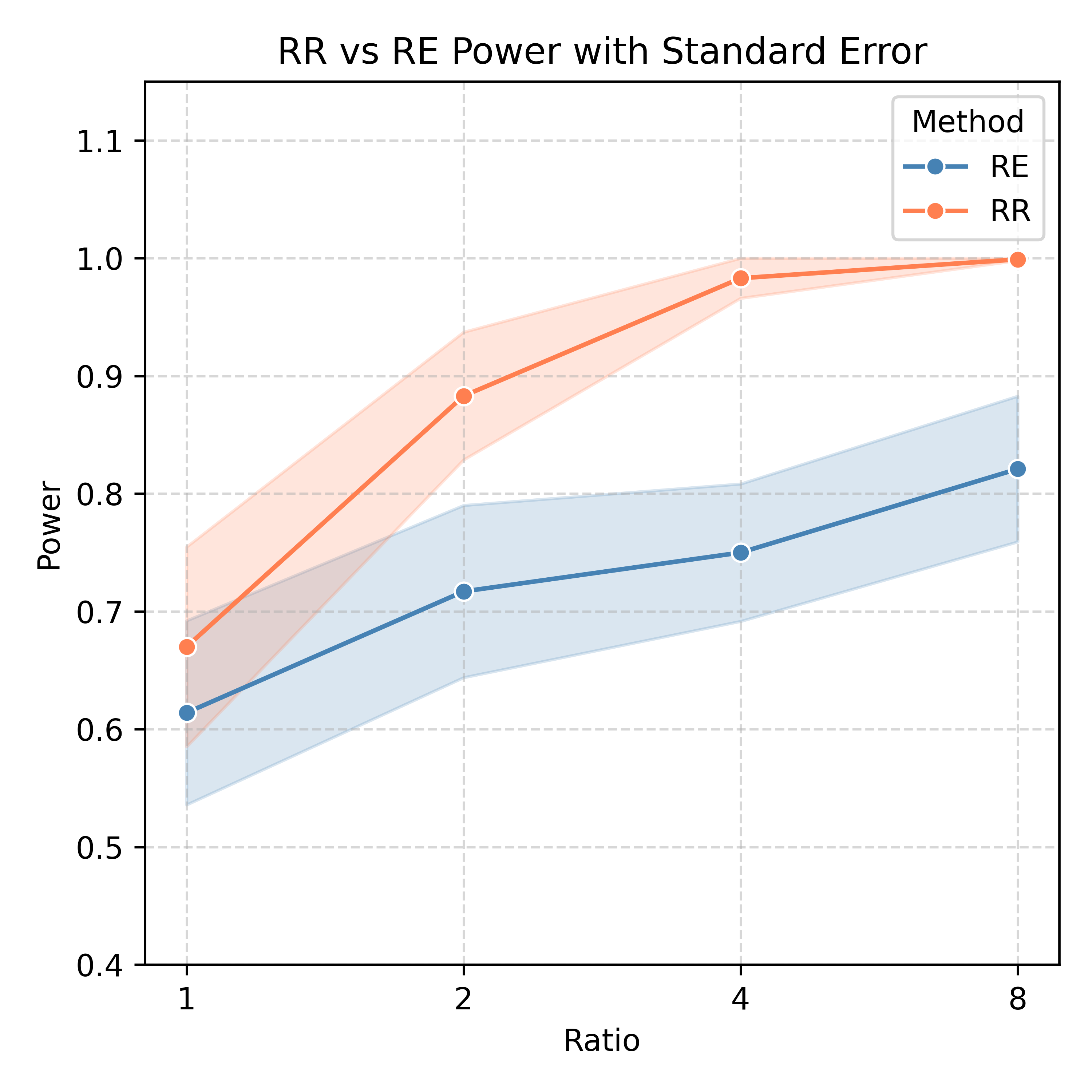}
    \end{subfigure}
    \\[10pt]
    \begin{subfigure}{\textwidth}
        \centering
        \begin{tabular}{c|cccc}
            \hline
             \textbf{Method} & $r = 1$ & $r = 2$ & $r = 4$ &  $r=8$\\
             \hline
             RE & $0.614 \pm 0.074$ & $0.717 \pm 0.069$ & $0.75 \pm 0.055$ & $0.821 \pm 0.058$ \\
             RR & $0.67 \pm 0.080$ & $0.883 \pm 0.051$ & $0.983 \pm 0.016$ & $0.999 \pm 0.00094$ \\
             \hline
        \end{tabular}
    \end{subfigure}
    
    \caption{Comparison of RE and RR when training on $1000$ samples from \emph{CIFAR-10.1} and $r*1000$ samples from \emph{CIFAR-10}. The columns below $r = 1,2,4,8$ contain the observed power $\pm$ standard error estimated from 10 separate rounds of training. We use $\lambda = 1\times 10^{-8}$ for the regularization parameter in both tests. At each sample size we see that RR outperforms RE, with significantly lower variance as $r$ becomes large. These improvement are despite both procedures having access to the same amount of data during training.}
    \label{fig:expts}
\end{figure}

\section{Conclusion}
In this paper we have shown that the generalized U-statistic estimator $\MMDhatsq$ provides an effective framework for two-sample testing with unequal sample sizes. $\MMDhatsq$ is compatible with the classical setting where the samples sizes are proportional, but our results extend to the situation where one sample is asymptotically dominant by changing the scaling from $n_X + n_Y$ to $\min\{n_X,n_Y\}$. To derive the asymptotic distributions of $\MMDhatsq$, we characterized generalized U-statistics based on the degeneracy of their kernel and matched this order of degeneracy with the hypotheses $H_0$ and $H_1$ under appropriate assumptions.
Even when $n_X = n_Y$, while $\MMDhatsq$ and $\MMDhatsqu$ are asymptotically equivalent in probability, $\MMDhatsq$ has strictly lower variance on finite samples if $\inner{C_P}{C_Q} \ne 0$;
when $n_X \ne n_Y$, the variance gap can be large.

Leveraging our theoretical results, we derived an approximation to the asymptotic test power, which is (nearly) monotonic in the signal-to-noise ratio $\MMDsq / \sigma$. This yields a power-optimization scheme based on maximizing $\MMDhatsq / (\hat\sigma + \lambda)$, with which we demonstrated improved performance in the challenging problem of distinguishing CIFAR-10.1 from CIFAR-10.

The main theoretical aspect we did not fully characterize is determining the conditions under which $\zeta_X$ and $\zeta_Y$ are positive, or have matching signs.
\cref{prop: degen-conds} makes significant progress towards this,
in the case of continuous kernels (with overlapping support)
and/or analytic kernels (with potentially non-overlapping support);
typical kernels defined by neural networks will be continuous but not analytic,
leaving the disjoint-support question with such kernels unknown.
These questions, however, seem to be of particular interest only when the supports are very complex;
distributions with ``simple'' disjoint supports are easy to distinguish.

\makeatletter
\if@accepted
\subsubsection*{Acknowledgements}

The authors would like to thank Antonin Schrab and Arthur Gretton for productive discussions.

This work was enabled by part by supported provided by
the Natural Sciences and Engineering Research Council of Canada,
the Canada CIFAR AI Chairs Program,
Calcul Québec,
the BC DRI Group,
and the Digital Research Alliance of Canada.
\fi
\makeatother

\printbibliography

\clearpage
\appendix

\section{Proof of \cref{thm:comp}} \label{sec:thm-comp-proof}

\thmComp*

\begin{proof}
Letting $K = \sup_{x \in \cX} k(x, x)$,
we have that
\[ k(x, y) = \inner{k(x, \cdot)}{k(y, \cdot)} \le \norm{k(x, \cdot)} \norm{k(y, \cdot)} = \sqrt{k(x, x) k(y, y)} \le K .\]
The difference between the estimators is
\begin{align*}
     \MMDhatsq - \MMDhatsqu
  &= - \frac{2}{n^2} \sum_{i, j} k(x_i, y_j)
     + \frac{2}{n (n-1)} \sum_{i \ne j} k(x_i, y_j)
\\&= - \frac{2}{n^2} \sum_{i=1}^n k(x_i, y_i)
     + \frac{2}{n} \left( \frac{1}{n-1} - \frac1n \right)  \sum_{i \ne j} k(x_i, y_j)
\\&= \frac2n \left(
     - \frac1n \sum_{i=1}^n k(x_i, y_i)
     + \frac{1}{n (n-1)} \sum_{i \ne j} k(x_i, y_j)
     \right)
.\end{align*}
This sum satisfies bounded differences:
if we consider changing a single $x_i$,
it changes a single term in the first sum by at most $2 K$,
meaning that the first average changes by at most $2 K / n$.
At most $n - 1$ terms change in the second sum, again each by up to $2K$,
giving again a total change of at most $2 K / n$.
Thus the overall difference in estimators changes by at most
$8 K / n^2$.
The same holds for changing a single $y_i$.
This difference is also mean zero,
so since all of the $2n$ arguments $x_i$, $y_i$ are mutually independent,
we can apply McDiarmid's inequality to get
\[
    \Pr\left( \abs{\MMDhatsq - \MMDhatsqu} \le \frac{8 K}{n^2} \sqrt{\frac12 (2 n) \log\frac2\delta}  \right) \ge 1 - \delta
.\qedhere\]
\end{proof}

\section{Proof of U-statistic variance decomposition (\cref{thm:mmdu-var-decomp})} \label{sec:mmd-u-var-decom-proof}

\mmdUvar*
\begin{proof}
    This result follows directly from Theorem 6.1 of \citet{he:kci-is-hard},
    but we reproduce their approach here for completeness.
    
Define $\delta_{xy} = k(x, \cdot) - k(y, \cdot) \in \cH$, for any $x, y \in \cX$.
Then $h$ of \eqref{eq:mmd-ustat-kernel} is
$
    h((x, y), (x', y'))
    = \langle \delta_{x y}, \delta_{x' y'} \rangle_{\cH}
$.
We also have that
$\E \delta_{XY} = \E k(X, \cdot) - \E k(Y, \cdot) = \mu_P - \mu_Q$
and
\begin{align*}
     \E \delta_{XY} \otimes \delta_{XY}
  &= \E k(X, \cdot) \otimes k(X, \cdot)
   - \E k(X, \cdot) \otimes k(Y, \cdot)
   - \E k(Y, \cdot) \otimes k(X, \cdot)
   + \E k(Y, \cdot) \otimes k(Y, \cdot)
\\&= (C_P + \mu_P \otimes \mu_P)
   - \mu_P \otimes \mu_Q
   - \mu_Q \otimes \mu_P
   + (C_Q + \mu_Q \otimes \mu_Q)
\\&= C_P + C_Q + (\mu_P - \mu_Q) \otimes (\mu_P - \mu_Q)
.\end{align*}

For the first term in the variance decomposition of \cref{thm:u_stat_variance},
\begin{align*}
     \E_{X', Y'}[ h((x, y), (X', Y')) ]
  &= \langle \delta_{xy}, \mu_P - \mu_Q \rangle_\cH
\\
     \Var_{X,Y}\left[ \E_{X', Y'}[ h((X, Y), (X', Y')) \mid X, Y ] \right]
  &= \inner{\mu_P - \mu_Q}{(C_P + C_Q) (\mu_P - \mu_Q)}_\cH
.\end{align*}
For the second term,
\begin{align*}
     \Var_{X', Y'}[ h((x, y), (X', Y')) ]
  &= \Var_{X', Y'}[ \inner{\delta_{xy}}{\delta_{X'Y'}}]
\\&= \E\left[ \inner{\delta_{xy}}{\delta_{X'Y'}}^2 \right]
   - \left( \E \inner{\delta_{xy}}{\delta_{X'Y'}} \right)^2
\\&= \E \inner{\delta_{xy}}{\delta_{X'Y'}} \inner{\delta_{X'Y'}}{\delta_{xy}}
   - \inner{\delta_{xy}}{\mu_P - \mu_Q}^2
\\&= \E \inner*{\delta_{xy}}{(\delta_{X'Y'} \otimes \delta_{X'Y'}) \delta_{xy}}
   - \inner*{\delta_{xy}}{\left((\mu_P - \mu_Q) \otimes (\mu_P - \mu_Q)\right) \delta_{xy}}
\\&= \inner*{\delta_{xy}}{\bigl( \E (\delta_{X'Y'} \otimes \delta_{X'Y'}) - (\mu_P - \mu_Q) \otimes (\mu_P - \mu_Q) \bigr) \delta_{xy}}
\\&= \inner*{\delta_{xy}}{\bigl( C_P + C_Q \bigr) \delta_{xy}}
\\&= \inner*{\delta_{xy} \otimes \delta_{xy}}{C_P + C_Q}_{\HS}
\\
     \E_{X, Y}\left[ \Var_{X', Y'}[ h((X, Y), (X', Y')) \mid X, Y] \right]
  &= \inner*{ \E_{X,Y}\left[ \delta_{XY} \otimes \delta_{XY} \right] }{ C_P + C_Q }_{\HS}
\\&= \norm{C_P + C_Q}_{\HS}^2
   + \inner*{(\mu_P - \mu_Q) \otimes (\mu_P - \mu_Q)}{C_P + C_Q}_{\HS}
\\&= \norm{C_P + C_Q}^2_{\HS}
   + \langle \mu_P - \mu_Q, (C_P + C_Q) (\mu_P - \mu_Q) \rangle_\cH
.\end{align*}
Letting $\nu = \inner{\mu_P - \mu_Q}{(C_P + C_Q) (\mu_P - \mu_Q)}_\cH$ for brevity,
we have obtained \eqref{eq:mmdu-var}:
\[
    \Var\left(\MMDhatsqu\right)
  = \frac{4n - 6}{n (n-1)} \nu
  + \frac{2}{n (n-1)} \left( \norm{C_P + C_Q}^2_{\HS} + \nu \right)
  = \frac{4}{n} \nu + \frac{2}{n (n-1)} \norm{C_P + C_Q}^2_{\HS}
.\qedhere\]
\end{proof}

\section{Proof of degeneracy characterization (\cref{prop: degen-conds})} 
\label{sec:degen-cond-proofs}

\degenConds*

\begin{proof}

Because $C_P$ and $C_Q$ are each positive semi-definite,
$C_P + C_Q = 0$ if and only if $C_P = 0 = C_Q$.
\cref{thm:mmdu-var-decomp}
then immediately gives both
the result about infinite degeneracy
and \ref{it:equal}.

\paragraph{Proof of \ref{it:ne-degen}}
Most usual settings are non-degenerate;
for an explicit example,
consider $\cX = \R$,
$P = \cN(0, 1)$, $Q = \cN(1, 1)$,
and $k(x, y) = x y$.
Then $\mu_P = (t \mapsto 0) = \bzero$, $\mu_Q = (t \mapsto t)$,
$C_P = C_Q = \mathrm{Id}$,
and so $\inner{\mu_P - \mu_Q}{(C_P + C_Q) (\mu_P - \mu_Q)} = 2 \norm{\mu_Q}_\cH^2 = 2 \norm{k(\cdot, 1)}_\cH^2 = 2 k(1, 1) = 2$.

For a degenerate case,
consider $\cX = \R$,
$k(x, y) = \max(1 - \abs{x - y}, 0)$,
$P = \mathrm{Uniform}(\{1, 2\})$,
and $Q = \mathrm{Uniform}(\{3, 4\})$.
These have nonzero kernel covariance operators:
for instance,
\[ \inner{k(1, \cdot)}{C_P k(1, \cdot)} = \Var_{X \sim P}[ k(1, X) ] = \tfrac12 \cdot 1^2 + \tfrac12 \cdot 0^2 - \left( \tfrac12 \cdot 1 + \tfrac12 \cdot 0 \right)^2 = \tfrac12 - \tfrac14 = \tfrac14 > 0 .\]
We also have $\mu_P \ne \mu_Q$,
as can be seen by considering their inner products with e.g.\ $k(2.2, \cdot)$,
or because $P \ne Q$ and $k$ is characteristic \citep[Corollary 8]{bharath:colt-characteristic}.
Notice, however, that $\Pr_{X \sim P, Y \sim Q}(k(X, Y) = 0) = 1$.
Thus, if we write
\[
    \MMDhatsqu
    = \frac{1}{n (n-1)} \sum_{i \ne j} k(x_i, x_j)
    + \frac{1}{n (n-1)} \sum_{i \ne j} k(y_i, y_j)
    - \frac{2}{n^2} \sum_{i \ne j} k(x_i, y_j)
,\]
the last sum is identically zero,
leaving us with the sum of two independent U-statistics
with kernel $k$.
Applying \cref{thm:u_stat_variance} for the first such sum,
$
    \Var[\E[k(X, X') \mid X]]
    = \Var_{X' \sim P}[\mu_P(X')]
    = 0
$
because $\mu_P(1) = \mu_P(2) = \frac12$,
and $P = \mathrm{Uniform}(\{1, 2\})$;
however,
for any $x \in \{1, 2\}$,
$\Var[k(x, X)] = \tfrac14$ as seen above,
and hence the variance is $\frac{2}{n (n-1)} \cdot \frac14 = \frac{1}{2 n (n-1)}$.
The other term is independent and has the same variance.
Thus the variance of $\MMDhatsqu$
is $1 / (n (n-1))$,
so it must be first-order degenerate:
we have a situation where
$\mu_P \ne \mu_Q$, $C_P \ne 0$, $C_Q \ne 0$,
and yet $\MMDhatsqu$ is degenerate.

\paragraph{Proof of \ref{it:degen-cond-continuous} and \ref{it:degen-cond-analytic}}
If $\mu_P = \mu_Q$, then \cref{thm:mmdu-var-decomp} immediately implies that $\MMDhatsqu$ is degenerate.
Thus, suppose that $\MMDhatsqu$ is degenerate for $C_P$, $C_Q$ not both zero;
we will show that $\mu_P = \mu_Q$,
or equivalently that $\Delta = \mu_P - \mu_Q$ is the zero function.
Since $C_P$ and $C_Q$ are positive semi-definite and we have assumed degeneracy,
\cref{thm:mmdu-var-decomp} implies that we must have
\[
\inner{\mu_P - \mu_Q}{C_P (\mu_P - \mu_Q)} = \Var_{X \sim P}( \Delta(X) ) = 0
\quad\text{and}\quad
\inner{\mu_P - \mu_Q}{C_Q (\mu_P - \mu_Q)} = \Var_{Y \sim Q}( \Delta(Y) ) = 0
.\]
Thus, there must be $c_P, c_Q \in \R$
such that
\[
\Pr_{X \sim P}\left(\Delta(X) = c_P \right) = 1
\qquad\text{and}\qquad
\Pr_{Y \sim Q}\left(\Delta(Y) = c_Q \right) = 1
.\]
But notice that
\[
      \norm{\mu_P - \mu_Q}^2
   = \inner{\mu_P}{\Delta}
   - \inner{\mu_Q}{\Delta}
   = \E_{X \sim P}\left[ \Delta(X)\right]
   - \E_{Y \sim Q}\left[ \Delta(Y)\right]
   = c_P - c_Q
,\]
and so if $c_P = c_Q$
then $\mu_P = \mu_Q$.

For \ref{it:degen-cond-continuous},
let $x$ be in the support of $P$,
and suppose that $\Delta = \mu_P - \mu_Q$
has $\Delta(x) \ne c_P$.
But then by continuity there is an open neighbourhood $N_x \ni x$
for which $\abs{\Delta(x') - c_P} > \ep$ for all $x' \in N_x$;
since $x$ is in the support of $P$,
$P(N_x) > 0$,
contradicting that $\Delta(x) = c_P$ $P$-almost surely.
We similarly have $\Delta(y) = c_Q$ for all $y$ in the support of $Q$;
thus, if there is a point in both supports, we must have $c_P = c_Q$.

For \ref{it:degen-cond-analytic},
$k$ being bounded and real-analytic implies that 
each $f \in \cH$ is as well \citep[Lemma 1]{analytic}.
As before, degeneracy implies $\Delta(x) = c_P$ for all $x$ in the support of $P$,
and $\Delta(y) = c_Q$ for all $y$ in the support of $Q$. \citet{mityagin2015analytic} shows that if a real-analytic function is zero on a set of positive Lebesgue measure,
the function is identically zero;
since $\Delta - c_P$ is analytic,
it must be zero on all of $\cX$.
But since $\R^d$ is Hausdorff, the support of $Q$ is nonempty,
and hence $c_Q = c_P$.

\paragraph{Proof of \ref{it:same-sgn}:} Reusing our previous notation, positive semi-definiteness implies $\inner{\Delta}{C_P \Delta}, \inner{\Delta}{C_Q \Delta}$ are either zero or positive.
Suppose $\inner{\Delta}{C_P \Delta} = 0$; then following the proof of \ref{it:degen-cond-analytic}
we know that the function $\Delta$ is almost surely a constant $c_P$ on the support of $P$,
and since it is also real-analytic and $P$'s support has positive Lebesgue measure,
it is equal to $c_P$ everywhere.
Thus $\inner{\Delta}{C_Q \Delta} = \Var_{Y \sim Q} \Delta(Y) = 0$.
The same reasoning applies for $Q$,
and so $\inner{\Delta}{C_Q \Delta} = 0$ implies $\inner{\Delta}{C_P \Delta} = 0$.
\end{proof}

\section{Two Sample MMD Variance} \label{sec:var-mmd-calc}
We now decompose the variance of $\MMDhatsq$
based on the formula \eqref{eq:gen-U-var}.
\mmdestVar*
\begin{proof}
We will need to compute various conditional expectations of the MMD estimator kernel function
$h(x, x'; y, y') = \inner{k(x, \cdot) - k(y, \cdot)}{k(x', \cdot) - k(y', \cdot)}$.
For brevity, in this proof we will use $\phi(x)$ to denote the feature map $k(x, \cdot)$.
We can omit some calculations because of the 
additional symmetry $h(x,x';y,y') = h(y,y';x,x')$.

As $C_P,C_Q$ are self-adjoint, we have the useful identity
\begin{align*}
    \langle \mu_P, C_P \mu_P \rangle + \langle \mu_Q, C_P \mu_Q \rangle - 2 \langle \mu_Q, C_P \mu_P \rangle &= [\langle \mu_P, C_P \mu_P \rangle -  \langle \mu_P, C_P \mu_Q \rangle] + [\langle \mu_Q, C_P \mu_Q \rangle - \langle \mu_Q, C_P \mu_P \rangle] \\
    &= \inner{\mu_P}{C_P(\mu_P - \mu_Q)} + \inner{\mu_Q}{C_P(\mu_Q - \mu_P)} \\
     &= \inner{\mu_P - \mu_Q}{C_P(\mu_P - \mu_Q)},
\end{align*}
and similarly $\langle \mu_Q, C_Q \mu_Q \rangle + \langle \mu_P, C_Q \mu_P \rangle - 2 \langle \mu_P, C_Q \mu_Q \rangle = \inner{\mu_P - \mu_Q}{C_Q(\mu_P - \mu_Q)}$.

\paragraph{Order 1 terms.}
\begin{align*}
    \E(h(X,X';Y,Y') \mid X) &= \mu_P(X) + \E k(Y,Y') - \mu_Q(X) - \E k(X',Y) \\[2mm]
    \zeta_X &= \Var(\mu_P(X) - \mu_Q(X) + \mathrm{const}) \\
    &= \inner{\mu_P - \mu_Q}{C_P(\mu_P - \mu_Q)}
\end{align*}

\paragraph{Order 2 terms.}
\begin{align*}
    \E(h(X,X';Y,Y') | X,X') &= k(X,X') + \E k(Y,Y') - \mu_Q(X) - \mu_Q(X') \\[2mm]
    \Var(k(X,X')) &= \E[k(X,X')^2] - (\E[k(X,X')])^2 \\
    &= \E \langle \phi(X) \otimes \phi(X), \phi(X') \otimes \phi(X') \rangle_{\HS} - \norm{\mu_P}^4\\
    &= (\norm{C_P}_{\HS}^2 + 2\langle C_P, \mu_P \otimes \mu_P \rangle_{\HS} + \norm{\mu_P}^4) - \norm{\mu_P}^4 \\
    &= \norm{C_P}_{\HS}^2 + 2\langle \mu_P, C_P \mu_P \rangle \\[2mm]
    \Cov(\mu_Q(X), k(X,X')) &= \E [\mu_Q(X) k(X,X')] - \E[k(X,Y)] \E[k(X,X')] \\
    &= \langle \mu_Q, C_P \mu_P \rangle \\[2mm]
    \zeta_{XX'} &= \Var(k(X,X')) + \Var(\mu_Q(X)) + \Var(\mu_Q(X')) \\ 
    &\quad - 2\Cov(\mu_Q(X), k(X,X')) - 2\Cov(\mu_Q(X'), k(X,X')) \\
    &= \norm{C_P}^2_{\HS} + 2\langle \mu_P, C_P \mu_P \rangle + 2\langle \mu_Q, C_P \mu_Q \rangle - 4 \langle \mu_Q, C_P \mu_P \rangle \\
    &= \norm{C_P}^2_{\HS} + 2\langle \mu_P - \mu_Q, C_P (\mu_P - \mu_Q) \rangle
\end{align*}
Let $\tilde C_P = \E [\phi(X) \otimes \phi(X)]$ and define $\tilde C_Q$ analogously.
\begin{align*}
    \E(h(X,X';Y,Y') | X,Y) &= \mu_P(X) + \mu_Q(Y) - \frac{1}{2} \left[\mu_Q(X) + \mu_P(Y) + k(X,Y) + \E k(X,Y) \right] \\[2mm]
    \langle C_P, C_Q \rangle_{\HS} &= \langle \tilde C_P, \tilde C_Q \rangle_{\HS} - \langle \tilde C_P, \mu_Q \otimes \mu_Q \rangle_{\HS} - \langle \tilde C_Q, \mu_P \otimes \mu_P \rangle_{\HS} + \langle \mu_P, \mu_Q \rangle^2\\
    &= \E[k(X,Y)^2] - \langle C_P, \mu_Q \otimes \mu_Q \rangle_{\HS} - \langle C_Q, \mu_P \otimes \mu_P \rangle_{\HS} - \langle \mu_P, \mu_Q \rangle^2 \\
    &= \Var(k(X,Y)) - \langle \mu_Q, C_P \mu_Q \rangle - \langle \mu_P, C_Q \mu_P \rangle \\[2mm]
    \zeta_{XY} &= \Var \left((\mu_P(X) - \frac{1}{2}\mu_Q(X) \right) + \Var\left(\mu_Q(Y) - \frac{1}{2}\mu_P(Y) \right) + \frac{1}{4}\Var(k(X,Y)) \\
    &\quad + \Cov\left(\mu_P(X) - \frac{1}{2}\mu_Q(X), k(X,Y) \right) + \Cov\left(\mu_Q(Y) - \frac{1}{2}\mu_P(Y), k(X,Y) \right) \\
    &= \inner{\mu_P - \frac{1}{2}\mu_Q}{C_P(\mu_P - \frac{1}{2}\mu_Q)} + \inner{\mu_Q - \frac{1}{2}\mu_P}{C_Q(\mu_Q - \frac{1}{2}\mu_P)} \\
    &\quad + \frac{1}{4} \inner{C_P}{C_Q}_{\HS} + \frac{1}{4} \inner{\mu_Q}{C_P \mu_Q} + \frac{1}{4} \inner{\mu_P}{C_Q \mu_P} + \inner{\mu_P - \frac{1}{2} \mu_Q}{C_P \mu_Q} \\
    &\quad + \inner{\mu_Q - \frac{1}{2} \mu_P}{C_Q \mu_P} \\
    &= \frac{1}{4} \inner{C_P}{C_Q}_{\HS} + \langle \mu_P, C_P \mu_P \rangle + \langle \mu_Q, C_Q \mu_Q \rangle,
\end{align*}
where the last line follows by
\begin{align*}
    &\inner{\mu_P - \frac{1}{2}\mu_Q}{C_P(\mu_P - \frac{1}{2}\mu_Q)} + \frac{1}{4} \inner{\mu_Q}{C_P \mu_Q} + \inner{\mu_P - \frac{1}{2} \mu_Q}{C_P \mu_Q} \\
    &= \left[ \inner{\mu_P - \frac{1}{2}\mu_Q}{C_P(\mu_P - \frac{1}{2}\mu_Q)} + \inner{\mu_P - \frac{1}{2} \mu_Q}{C_P \frac{1}{2}\mu_Q} \right] + \left[ \frac{1}{4} \inner{\mu_Q}{C_P \mu_Q} + \inner{\frac{1}{2} \mu_P - \frac{1}{4} \mu_Q}{C_P \mu_Q} \right] \\
    &= \inner{\mu_P - \frac{1}{2}\mu_Q}{C_P \mu_P} + \inner{\frac{1}{2} \mu_P}{C_P \mu_Q} = \inner{\mu_P - \frac{1}{2}\mu_Q}{C_P \mu_P} + \inner{\frac{1}{2} \mu_Q}{C_P \mu_P} = \inner{\mu_P}{C_P \mu_P}.
\end{align*}

\paragraph{Order 3 terms.}
\begin{align*}
    \E(h(X,X';Y,Y') \mid X,X',Y) &= k(X,X') + \mu_Q(Y) - \frac{1}{2}[k(X,Y) + \mu_Q(X') + k(X',Y) + \mu_Q(X)] 
\end{align*}
Let $R := k(X,X') - \frac{1}{2}[k(X,Y) + k(X',Y)]$, then we may write
\begin{align*}
    \zeta_{XX'Y} &= \Var(\mu_Q(Y)) + \frac{1}{4} \Var(\mu_Q(X')) + \frac{1}{4} \Var(\mu_Q(X)) + \Var(R) \\
    &\quad + 2\Cov(\mu_Q(Y), R) - \Cov(\mu_Q(X'), R) -\Cov(\mu_Q(X), R) \\
    &= \inner{\mu_Q}{C_Q \mu_Q} + \frac{1}{2}\inner{\mu_Q}{C_P \mu_Q} + \Var(R) + 2\Cov(\mu_Q(Y), R)\\ 
    &\quad - 2 \Cov(\mu_Q(X'), R).
\end{align*}
We may simplify the terms including $R$ 
\begin{align*}
    \Var(R) &= \Var(k(X,X')) + \frac{1}{4} \Var(k(X,Y)) + \frac{1}{4} \Var(k(X',Y)) - \Cov(k(X,X'),k(X,Y)) \\
    &\quad - \Cov(k(X,X'), k(X',Y)) + \frac{1}{2} \Cov(k(X,Y), k(X',Y)) \\
    &= \norm{C_P}_{\HS}^2 + 2\inner{\mu_P}{C_P \mu_P} + \frac{1}{2} \left[ \inner{C_P}{C_Q}_{\HS} + \inner{\mu_Q}{C_P \mu_Q} + \inner{\mu_P}{C_Q \mu_P} \right]\\
    &\quad - 2 \inner{\mu_P}{C_P \mu_Q} + \frac{1}{2} \inner{\mu_P}{C_Q \mu_P} \\
    &= \norm{C_P}_{\HS}^2 + 2\inner{\mu_P}{C_P \mu_P} + \frac{1}{2} \inner{C_P}{C_Q}_{\HS} + \frac{1}{2} \inner{\mu_Q}{C_P \mu_Q} + \inner{\mu_P}{C_Q \mu_P} - 2 \inner{\mu_P}{C_P \mu_Q}
    \\[2mm]
    \Cov(\mu_Q(Y), R) &= \Cov \left( \mu_Q(Y), -\frac{1}{2}  [k(X,Y) + k(X',Y)] \right) = - \Cov(\mu_Q(Y), k(X,Y)) = -\inner{\mu_Q}{C_Q \mu_P} \\[2mm]
    \Cov(\mu_Q(X'), R) &= \Cov\left( \mu_Q(X'), k(X,X') - \frac{1}{2} k(X',Y) \right) = \inner{\mu_Q}{C_P \mu_P} - \frac{1}{2} \inner{\mu_Q}{C_P \mu_Q}
\end{align*}
Plugging the above into the variance equation gives
\begin{align*}
    \zeta_{XX'Y} &= \inner{\mu_Q}{C_Q \mu_Q} + \frac{1}{2}\inner{\mu_Q}{C_P \mu_Q} + \norm{C_P}_{\HS}^2 + 2\inner{\mu_P}{C_P \mu_P} \\ 
    &\quad + \frac{1}{2} \inner{C_P}{C_Q}_{\HS} + \frac{1}{2} \inner{\mu_Q}{C_P \mu_Q} + \inner{\mu_P}{C_Q \mu_P} - 2 \inner{\mu_P}{C_P \mu_Q}\\ 
    &\quad - 2 \inner{\mu_Q}{C_Q \mu_P} - 2 \inner{\mu_Q}{C_P \mu_P} + \inner{\mu_Q}{C_P \mu_Q} \\
    &= \norm{C_P}_{\HS}^2 + \frac{1}{2} \inner{C_P}{C_Q}_{\HS} + 2[\inner{\mu_P}{C_P \mu_P} + \inner{\mu_Q}{C_P\mu_Q} - 2\inner{\mu_P}{C_P \mu_Q}]\\ 
    &\quad + [\inner{\mu_Q}{C_Q \mu_Q} + \inner{\mu_P}{C_Q\mu_P} - 2\inner{\mu_Q}{C_Q \mu_P}] \\
    &= \norm{C_P}_{\HS}^2 + \frac{1}{2} \inner{C_P}{C_Q}_{\HS} + 2 \langle \mu_P - \mu_Q, C_P (\mu_P - \mu_Q) \rangle\\ 
    &\quad + \langle \mu_P - \mu_Q, C_Q (\mu_P - \mu_Q) \rangle \\
    &= \norm{C_P}_{\HS}^2 + \frac{1}{2} \inner{C_P}{C_Q}_{\HS} + \inner{\mu_P - \mu_Q}{(2C_P + C_Q)(\mu_P - \mu_Q)}
\end{align*}

\paragraph{Order 4 terms.}
Let $R' := k(Y,Y') - \frac{1}{2}[ k(X',Y') + k(X,Y')]$, we have $h(X,X';Y,Y') = R + R'$. Similar to the calculation for $R$ we have
\begin{equation*}
    \Var(R') = \norm{C_Q}_{\HS}^2 + 2\inner{\mu_Q}{C_Q \mu_Q} + \frac{1}{2} \inner{C_P}{C_Q}_{\HS} + \frac{1}{2} \inner{\mu_Q}{C_P \mu_Q} + \inner{\mu_P}{C_Q \mu_P} - 2 \inner{\mu_Q}{C_Q \mu_P}.
\end{equation*}
Next we have
\begin{align*}
    2\Cov(R,R') &= 2\Cov\left( k(X,X') - \frac{1}{2}[k(X,Y) + k(X',Y)], k(Y,Y') - \frac{1}{2}[ k(X',Y') + k(X,Y')] \right) \\
    &= - \Cov(k(X,X'), k(X',Y') + k(X,Y')) - \Cov(k(Y,Y'), k(X,Y) + k(X',Y)) \\
    &\quad + \frac{1}{2}\Cov(k(X,Y) + k(X',Y), k(X',Y') + k(X,Y')) \\
    &= -2\inner{\mu_P}{C_P \mu_Q} - 2\inner{\mu_Q}{C_Q \mu_P} +\inner{\mu_Q}{C_P \mu_Q} 
.\end{align*}
Finally we have
\begin{align*}
    \zeta_{XX'YY'} &= \Var(R) + \Var(R') + 2\Cov(R,R') \\
    &= \norm{C_P}_{\HS}^2 + \norm{C_Q}_{\HS}^2 + \inner{C_P}{C_Q}_{\HS} + 2[\inner{\mu_P}{C_P \mu_P} + \inner{\mu_Q}{C_P\mu_Q} - 2\inner{\mu_P}{C_P \mu_Q}]\\ 
    &\quad + 2[\inner{\mu_Q}{C_Q \mu_Q} + \inner{\mu_P}{C_Q\mu_P} - 2\inner{\mu_Q}{C_Q \mu_P}] \\
    &= \norm{C_P}_{\HS}^2 + \norm{C_Q}_{\HS}^2 + \inner{C_P}{C_Q}_{\HS} + 2\langle \mu_P - \mu_Q, (C_P + C_Q)(\mu_P - \mu_Q) \rangle 
\end{align*}

\paragraph{Simplification} Plugging the $\zeta$ values into \eqref{eq:gen-U-var} yields the variance for $\MMDhatsq$ of
\begin{gather*} 
      \frac{4}{n_X} \left( \frac{n_X-2}{n_X-1} \cdot \frac{n_Y-2}{n_Y} \cdot \frac{n_Y-3}{n_Y-1} \right) \zeta_X
    + \frac{4}{n_Y} \left( \frac{n_X-2}{n_X} \cdot \frac{n_X-3}{n_X-1} \cdot \frac{n_Y-2}{n_Y-1} \right) \zeta_Y
    \\
    + \frac{2}{n_X (n_X - 1)} \left( \frac{n_Y-2}{n_Y} \cdot \frac{n_Y-3}{n_Y-1} \right) \zeta_{XX'}
    + \frac{2}{n_Y (n_Y-1)} \left( \frac{n_X-2}{n_X} \cdot \frac{n_X-3}{n_X-1} \right) \zeta_{YY'}
    \\
    + \frac{16}{n_X n_Y} \left( \frac{n_X-2}{n_X-1} \cdot \frac{n_Y-2}{n_Y-1} \right) \zeta_{XY}
    \\
    + \frac{8}{n_X (n_X-1) n_Y} \left( \frac{n_Y-2}{n_Y-1} \right) \zeta_{XX'Y}
    + \frac{8}{n_X n_Y (n_Y - 1)} \left( \frac{n_X-2}{n_X-1} \right) \zeta_{XYY'}
    \\
    + \frac{4}{n_X(n_X-1) n_Y(n_Y-1)} \zeta_{XX'YY'} 
,\end{gather*}
where
\begin{gather*}
    \zeta_X = \inner{\mu_P - \mu_Q}{C_P(\mu_P - \mu_Q)}
    \qquad
    \zeta_Y = \inner{\mu_P - \mu_Q}{C_Q(\mu_P - \mu_Q)} \\
    \zeta_{XX'} = \norm{C_P}^2_{\HS} + 2\langle \mu_P - \mu_Q, C_P (\mu_P - \mu_Q) \rangle
    \qquad
    \zeta_{YY'} = \norm{C_Q}^2_{\HS} + 2\langle \mu_P - \mu_Q, C_Q (\mu_P - \mu_Q) \rangle\\
    \zeta_{XY} = \frac{1}{4} \inner{C_P}{C_Q}_{\HS} + \langle \mu_P, C_P \mu_P \rangle + \langle \mu_Q, C_Q \mu_Q \rangle \\
    \zeta_{XX'Y} = \norm{C_P}_{\HS}^2 + \frac{1}{2} \inner{C_P}{C_Q}_{\HS} + \inner{\mu_P - \mu_Q}{(2C_P + C_Q)(\mu_P - \mu_Q)} \\ 
    \zeta_{XYY'} = \norm{C_Q}_{\HS}^2 + \frac{1}{2} \inner{C_P}{C_Q}_{\HS} + \inner{\mu_P - \mu_Q}{(C_P + 2C_Q)(\mu_P - \mu_Q)} \\
    \zeta_{XX'YY'} = \norm{C_P}_{\HS}^2 + \norm{C_Q}_{\HS}^2 + \inner{C_P}{C_Q}_{\HS} + 2\langle \mu_P - \mu_Q, (C_P + C_Q)(\mu_P - \mu_Q) \rangle.
\end{gather*}
Let $\nu_P = \inner{\mu_P - \mu_Q}{C_P (\mu_P - \mu_Q)}$
and $\nu_Q = \inner{\mu_P - \mu_Q}{C_Q (\mu_P - \mu_Q)}$.
Then
\begin{gather*}
    \zeta_X = \nu_P
    \qquad
    \zeta_Y = \nu_Q \\
    \zeta_{XX'} = \norm{C_P}^2_{\HS} + 2 \nu_P
    \qquad
    \zeta_{YY'} = \norm{C_Q}^2_{\HS} + 2 \nu_Q
    \\
    \zeta_{XY} = \frac{1}{4} \inner{C_P}{C_Q}_{\HS} + \inner{\mu_P}{C_P \mu_P} + \inner{\mu_Q}{C_Q \mu_Q} \\
    \zeta_{XX'Y} = \norm{C_P}_{\HS}^2 + \frac{1}{2} \inner{C_P}{C_Q}_{\HS} + 2 \nu_P + \nu_Q \\ 
    \zeta_{XYY'} = \norm{C_Q}_{\HS}^2 + \frac{1}{2} \inner{C_P}{C_Q}_{\HS} + \nu_P + 2 \nu_Q \\
    \zeta_{XX'YY'} = \norm{C_P}_{\HS}^2 + \norm{C_Q}_{\HS}^2 + \inner{C_P}{C_Q}_{\HS} + 2 \nu_P + 2 \nu_Q.
\end{gather*}
This means that $\Var(\MMDhatsq)$ is
\begin{multline*} 
      \frac{4}{n_X} \left( \frac{n_X-2}{n_X-1} \cdot \frac{n_Y-2}{n_Y} \cdot \frac{n_Y-3}{n_Y-1} \right) \nu_P
    + \frac{4}{n_Y} \left( \frac{n_X-2}{n_X} \cdot \frac{n_X-3}{n_X-1} \cdot \frac{n_Y-2}{n_Y-1} \right) \nu_Q
    \\
    + \frac{2}{n_X (n_X - 1)} \left( \frac{n_Y-2}{n_Y} \cdot \frac{n_Y-3}{n_Y-1} \right) (\norm{C_P}_\HS^2 + 2 \nu_P)
    + \frac{2}{n_Y (n_Y-1)} \left( \frac{n_X-2}{n_X} \cdot \frac{n_X-3}{n_X-1} \right) (\norm{C_Q}_\HS^2 + 2 \nu_Q)
    \\
    + \frac{4}{n_X n_Y} \left( \frac{n_X-2}{n_X-1} \cdot \frac{n_Y-2}{n_Y-1} \right) \left[\inner{C_P}{C_Q}_{\HS} + 4 \inner{\mu_P}{C_P \mu_P} + 4 \inner{\mu_Q}{C_Q \mu_Q}\right]
    \\
    + \frac{8}{n_X (n_X-1) n_Y} \left( \frac{n_Y-2}{n_Y-1} \right)
    \left[ \norm{C_P}_{\HS}^2 + \frac{1}{2} \inner{C_P}{C_Q}_{\HS} + 2 \nu_P + \nu_Q \right]
    \\
    + \frac{8}{n_X n_Y (n_Y - 1)} \left( \frac{n_X-2}{n_X-1} \right)     \left[ \norm{C_Q}_{\HS}^2 + \frac{1}{2} \inner{C_P}{C_Q}_{\HS} + \nu_P + 2 \nu_Q \right]
    \\
    + \frac{4}{n_X(n_X-1) n_Y(n_Y-1)}
    \left[ \norm{C_P}_{\HS}^2 + \norm{C_Q}_{\HS}^2 + \inner{C_P}{C_Q}_{\HS} + 2 \nu_P + 2 \nu_Q \right]
.\end{multline*}
Gathering $\nu_P$ terms, we get a coefficient of $4 / n_X$ times
\begin{align*}
    &\phantom{{}={}}
    \frac{
        (n_X - 2) (n_Y - 2) (n_Y - 3)
        + (n_Y - 2) (n_Y - 3)
        + 4 (n_Y - 2)
        + 2 (n_X - 2)
        + 2
    }{(n_X-1) n_Y (n_Y - 1)}
    \\&=
    \frac{
        (n_X - 1) (n_Y - 2) (n_Y - 3)
        + 4 (n_Y - 2)
        + 2 (n_X - 1)
    }{(n_X-1) n_Y (n_Y - 1)}
    \\&=
    \frac{
        (n_Y - 2) (n_Y - 3) + 2
    }{n_Y (n_Y - 1)}
    + \frac{
        4 (n_Y - 2)
    }{(n_X-1) n_Y (n_Y - 1)}
    \\&=
    \frac{
        n_Y^2 - 5 n_Y + 8
    }{n_Y (n_Y - 1)}
    + \frac{
        4 (n_Y - 2)
    }{(n_X-1) n_Y (n_Y - 1)}
    \\&=
    \frac{
        n_Y (n_Y - 1) - 4 (n_Y - 2)
    }{n_Y (n_Y - 1)}
    + \frac{
        4 (n_Y - 2)
    }{(n_X-1) n_Y (n_Y - 1)}
    \\&= 1 - \frac{4 (n_Y - 2)}{n_Y (n_Y - 1)} \left( 1 - \frac{1}{n_X - 1}  \right)
    \\&= 1 - \frac{4}{n_Y} \cdot \frac{n_X - 2}{n_X - 1} \cdot \frac{n_Y - 2}{n_Y - 1} 
;\end{align*}
the $\nu_Q$ terms are symmetric, so that term's coefficient is
\[
    \frac{4}{n_Y} \left( 
    1 - \frac{4}{n_X} \cdot \frac{n_X - 2}{n_X - 1} \cdot \frac{n_Y - 2}{n_Y - 1}
    \right)
.\]
Next gathering the $\norm{C_P}_\HS^2$ terms, we find a coefficient of
\[
    \frac{2}{n_X (n_X - 1)} \cdot
    \frac{(n_Y - 2) (n_Y - 3) + 4 (n_Y - 2) + 2}{n_Y (n_Y - 1)}
    = \frac{2}{n_X (n_X - 1)}
    \frac{n_Y^2 - n_Y}{n_Y (n_Y - 1)}
    = \frac{2}{n_X (n_X - 1)}
,\]
and the $\norm{C_Q}_\HS^2$ terms are again symmetric for a coefficient of $2 / (n_Y (n_Y - 1))$.
Finally, the $\inner{C_P}{C_Q}_\HS$ terms gather to a coefficient of $4 / (n_X n_Y)$, since
\[
    \frac{(n_X - 2) (n_Y - 2) + (n_Y - 2) + (n_X - 2) + 1}{(n_X - 1) (n_Y - 1)}
    = \frac{(n_X - 1) (n_Y - 2) + (n_X - 1)}{(n_X - 1) (n_Y - 1)}
    = \frac{(n_X - 1) (n_Y - 1)}{(n_X - 1) (n_Y - 1)}
    = 1
.\]
The result follows.
\end{proof}

\section{Asymptotics of Generalized U-statistics} \label{sec:gen-u-asymps}
This section characterizes the asymptotic behaviour of generalized U-statistics by filling in and generalizing the approach  of \citet[Chapter 5]{serfling1980approximation}. Throughout the section we fix a generalized U-statistic $U_n$ constructed from the kernel $h(x_{11},\dots,x_{m_1 1},\dots,x_{1 c},\dots,x_{m_c c})$ and random samples $\bS = \{X_{ij}: j \in [c], i \in [n_j]\}$, where $(X_{ij})_{i \in [n_j]} \sim \mu_j^{n_j}$ for each $j$. 
For brevity we write $n = n_{\min} = \min\{n_1, \dots, n_c\}$.
Additionally define 
\begin{equation*}
    \bX = (X_{ij}: j\ \in[c],\ i \in [m_j]) \sim \prod_{j=1}^c \mu_j^{m_j}, \hspace{5mm} \theta = \E [h(\bX)], \hspace{5mm} \tilde h = h - \theta.
\end{equation*}
To make our presentation more concise, we introduce an abbreviation to our expression in \cref{def: gen-U} with $ U_n = \prod_{j=1}^c \binom{n_j}{m_j}^{-1} \sum_{\mathbb{X}} h(\mathbb{X})$, $\mathbb{X}$ varying over each distinct collection of arguments in $\bS$. The following definition generalizes the projection in \citet[section 5.3.4]{serfling1980approximation} to the case of generalized U-statistics, which itself is a simplification of Hoeffding's decomposition in his seminal work on the almost sure behaviour of one-sample U-statistics \citep[lemma 1]{hoeffding-slln}. \citet{janson-1991} consider a special case of these projections for two-sample U-statistics where one sample represents graph vertices and the other graph edges.
\begin{definition}
The order $r$ projection of $U_n$ is given by
\begin{align*}
\hat U_{n,r} = \theta + \sum_{(i_1j_1), \dots, (i_{r} j_{r})} \left( \E[U_n \mid X_{i_1j_1},\dots,X_{i_{r} j_{r}}] - \theta \right),
\end{align*}
where the sum is over each subset of $\bS$ of size $r$.
\end{definition}

Often we will simplify the projection of $U_n$ using its order of degeneracy. This is made precise by the following proposition.

\begin{proposition} \label{prop:gen-U-kernel-cond}
    Let $\X, \X'$ be subsets of the sample $\bS$ where $\X \sim \prod_{j=1}^c \mu_j^{m_j}$, then $\E[h(\X) \mid \X'] = \E[h(\X) \mid \X \cap \X']$ where we interpret $\E[h(\X) \mid \emptyset] = \E[h(\X)]$. In particular if $U_n$ is order $r$ degenerate and $|\X \cap \X' | \leq r$ then $\E[\tilde h(\X) \mid \X'] = 0$.
\end{proposition}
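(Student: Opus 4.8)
The plan is to prove the two claims separately: the equality of conditional expectations is a pure consequence of the mutual independence of the samples, while the ``in particular'' consequence follows by matching the relevant conditional expectation against the $\zeta$ quantities of \cref{def:ord-degen}.

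For the first claim, I would decompose $\X$ relative to $\X'$ by writing $A = \X \cap \X'$, $B = \X \setminus \X'$, and $C = \X' \setminus \X$, so that $h(\X)$ is a function of the variables in $A \cup B$ while we condition on the variables in $A \cup C = \X'$. Since all samples $X_{ij}$ in $\bS$ are mutually independent and $A$, $B$, $C$ index disjoint collections, the variables in $B$ are independent of those in $A \cup C$. The key step is then the standard fact that integrating out an independent block leaves a function only of the conditioned-on variables that actually occur in $h$: evaluating $\E[h(\X) \mid A = a, C = c]$ amounts to averaging $h(a, B)$ over the marginal law of $B$, which does not depend on $c$. Hence $\E[h(\X) \mid \X']$ is $\sigma(A)$-measurable, and the tower property gives $\E[h(\X) \mid \X'] = \E[h(\X) \mid A] = \E[h(\X) \mid \X \cap \X']$. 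The convention $\E[h(\X) \mid \emptyset] = \E[h(\X)]$ is exactly the case $A = \emptyset$, where the result is the unconditional mean.

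For the ``in particular'' statement, I would first note that since $\X \sim \prod_{j=1}^c \mu_j^{m_j}$ has the same law as $\bX$, we have $\E[h(\X)] = \theta$, so the first part gives $\E[\tilde h(\X) \mid \X'] = \E[h(\X) \mid \X \cap \X'] - \theta$. Let $d_j$ be the number of type-$j$ samples in $\X \cap \X'$, so that $\sum_{j=1}^c d_j = \abs{\X \cap \X'} \le r$. The point is that, by the within-block symmetry of $h$ together with the fact that the type-$j$ samples are i.i.d., the variable $\E[h(\X) \mid \X \cap \X']$ has the same distribution as $\E[h(\bX) \mid X_{11}, \dots, X_{d_1 1}, \dots, X_{1 c}, \dots, X_{d_c c}]$; conditioning on any $d_j$ of the type-$j$ arguments is equivalent to conditioning on the first $d_j$ of them. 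Its variance is therefore exactly $\zeta_{d_1 \dots d_c}$, which vanishes by order-$r$ degeneracy since $\sum_j d_j \le r$. A zero-variance random variable equals its mean almost surely, and that mean is $\theta$, so $\E[h(\X) \mid \X \cap \X'] = \theta$ almost surely and $\E[\tilde h(\X) \mid \X'] = 0$.

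I expect the only delicate point to be the bookkeeping in the second part: one must check that the overlap $\X \cap \X'$ respects the block structure (contributing at most $m_j$ arguments of each type) and invoke symmetry of $h$ to line the conditional expectation up with the specific ordering used to define $\zeta_{d_1 \dots d_c}$. The independence argument of the first part is routine once the disjoint decomposition $\X = (\X \cap \X') \cup (\X \setminus \X')$ is in place.
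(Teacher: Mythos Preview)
Your proposal is correct and follows essentially the same approach as the paper's proof: independence reduces the conditional expectation to the overlap, and then block-symmetry of $h$ identifies the conditional variance with the relevant $\zeta_{d_1 \dots d_c}$, which vanishes by degeneracy. The paper's version is terser (it simply says ``the symmetries of $h$ imply $\Var(\E[\tilde h(\X) \mid \X \cap \X']) = 0$''), but your explicit $A,B,C$ decomposition and $d_j$ bookkeeping spell out exactly the same argument.
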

\begin{proof}
Since $\bS$ consists of independent variables, $\E[h(\X) \mid \X']$ is obtained by integrating out the arguments in $h(\X)$ which are not shared with $\X'$. It then follows that $\E[h(\X) \mid \X'] = \E[h(\X) \mid \X \cap \X']$. Now if $U_n$ is order $r$ degenerate and $|\X \cap \X'| \leq r$, the symmetries of $h$ imply $\Var(\E[\tilde h(\X) | \X \cap \X' ]) = 0$ and in turn $\E[\tilde h(\X) | \X'] = \E[\tilde h(\X)] = 0$.
\end{proof}

The following lemma shows that if $U_n$ has order-$r$ degeneracy (identifying $r=0$ with non-degeneracy) then its asymptotic distribution can be approximated by $\hat U_{n,r+1}$ in the mean-square norm.\\

\begin{lemma} \label{lem:proj-approx}
    If $U_n$ is a generalized U-statistic with order of degeneracy at least $r$, then $\E|U_n - \hat U_{n,r+1}|^2 = \bigO(n^{-(r+2)})$. 
\end{lemma}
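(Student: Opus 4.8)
The plan is to construct the canonical (Hoeffding/ANOVA) decomposition of the generalized U-statistic, show that $\hat U_{n,r+1}$ reproduces exactly its components of total order $r+1$ and no lower ones, and then read the residual's second moment off of orthogonality. First I would index the $m_1 + \dots + m_c$ arguments of $h$ by slots $(a,j)$ with $j \in [c]$, $a \in [m_j]$, and for every set $S$ of slots set $g_S = \E[h \mid \text{arguments in } S]$ and define the canonical kernel $h_S = \sum_{T \subseteq S} (-1)^{|S| - |T|} g_T$ by M\"obius inversion. These satisfy complete degeneracy, $\E[h_S \mid X_T] = 0$ as soon as $T$ omits a slot of $S$ (a slotwise refinement of \cref{prop:gen-U-kernel-cond}), and by the block symmetry of $h$ the constant $\sigma_{\mathbf d}^2 := \norm{h_S}_{L^2}^2$ depends only on the type $\mathbf d(S) = (|S \cap \text{block }j|)_{j}$. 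Writing $h_{\mathbf d}$ for the symmetric canonical kernel of a slot set of type $\mathbf d$ and grouping $h = \sum_S h_S$ by type yields
\[
    U_n = \theta + \sum_{\mathbf d \neq \mathbf 0} \Bigl(\prod_{j=1}^c \binom{m_j}{d_j}\Bigr) J_{\mathbf d},
    \qquad
    J_{\mathbf d} = \prod_{j=1}^c \binom{n_j}{d_j}^{-1} \sum_{\mathbb X \text{ of type } \mathbf d} h_{\mathbf d}(\mathbb X),
\]
where complete degeneracy makes the $J_{\mathbf d}$ mutually orthogonal with $\Var(J_{\mathbf d}) = \prod_j \binom{n_j}{d_j}^{-1} \sigma_{\mathbf d}^2 = \Theta(\prod_j n_j^{-d_j})$. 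Since $\zeta_{\mathbf d} = \Var(g_S) = \sum_{\mathbf 0 \neq \mathbf e \le \mathbf d} \binom{\mathbf d}{\mathbf e}\sigma_{\mathbf e}^2$ is an invertible triangular transform of the $\sigma_{\mathbf e}^2$ (compare \eqref{eq:gen-U-var} and \cref{def:ord-degen}), order of degeneracy at least $r$ forces $\sigma_{\mathbf d}^2 = 0$, hence $J_{\mathbf d} \equiv 0$ in $L^2$, for every $\sum_j d_j \le r$.

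Next I would evaluate $\E[U_n \mid X_A]$ for $|A| = r+1$. Complete degeneracy gives $\E[h_{\mathbf d}(\mathbb X) \mid X_A] = h_{\mathbf d}(\mathbb X)$ when $\mathbb X \subseteq A$ and $0$ otherwise, so summing over the $\binom{N}{r+1}$ subsets $A$ of $\bS$ (with $N = \sum_j n_j$) and counting, for a fixed $\mathbb X$ of size $s = \sum_j d_j$, the $\binom{N-s}{\,r+1-s\,}$ sets $A \supseteq \mathbb X$, gives
\[
    \hat U_{n,r+1} - \theta = \sum_{1 \le \sum_j d_j \le r+1} \binom{N - \sum_j d_j}{\,r+1 - \sum_j d_j\,} \Bigl(\prod_{j=1}^c \binom{m_j}{d_j}\Bigr) J_{\mathbf d}.
\]
The decisive cancellation is that the $J_{\mathbf d}$ with $\sum_j d_j \le r$ vanish by degeneracy, while at the surviving top level $\sum_j d_j = r+1$ the spurious factor collapses to $\binom{N-(r+1)}{0} = 1$; thus $\hat U_{n,r+1} - \theta$ matches the order-$(r+1)$ part of $U_n$ term by term, leaving
\[
    U_n - \hat U_{n,r+1} = \sum_{\sum_j d_j \ge r+2} \Bigl(\prod_{j=1}^c \binom{m_j}{d_j}\Bigr) J_{\mathbf d}.
\]

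Finally, orthogonality collapses the second moment into the finite sum $\E|U_n - \hat U_{n,r+1}|^2 = \sum_{\sum_j d_j \ge r+2} \bigl(\prod_j \binom{m_j}{d_j}\bigr)^2 \Var(J_{\mathbf d})$, and since $n_j \ge n$ forces $\prod_j n_j^{-d_j} \le n^{-\sum_j d_j} \le n^{-(r+2)}$ for every surviving index, the claimed $\bigO(n^{-(r+2)})$ follows (the infinitely degenerate case is trivial, as then $U_n = \theta$ almost surely). I expect the main obstacle to lie not in this final estimate but in the bookkeeping of the multi-sample canonical decomposition—verifying complete degeneracy and orthogonality of the $h_S$ blockwise and pinning down the combinatorial coefficients—with the one genuinely delicate observation being that the mismatched multiplicities $\binom{N-s}{r+1-s}$ for $s \le r$ are harmless precisely because degeneracy annihilates those components.
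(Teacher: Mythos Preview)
Your argument is correct, but it is a genuinely different route from the one the paper takes. You build the full multi-sample Hoeffding/ANOVA decomposition $U_n = \theta + \sum_{\mathbf d} \bigl(\prod_j \binom{m_j}{d_j}\bigr) J_{\mathbf d}$ with completely degenerate, orthogonal components $J_{\mathbf d}$, then identify $\hat U_{n,r+1}$ as exactly the sum of the blocks with $\sum_j d_j \le r+1$ (the mismatched multiplicities $\binom{N-s}{r+1-s}$ for $s \le r$ being harmless because degeneracy kills those $J_{\mathbf d}$), so the residual is the orthogonal tail $\sum_{\sum_j d_j \ge r+2}$ and Pythagoras gives the bound. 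The paper instead avoids the canonical decomposition entirely: it writes $U_n - \hat U_{n,r+1}$ directly as a generalized U-statistic with the modified kernel $\tilde h(\X) - \sum_{(i_1j_1),\dots,(i_{r+1}j_{r+1}) \in \X} \E[\tilde h(\X)\mid X_{i_1j_1},\dots,X_{i_{r+1}j_{r+1}}]$, checks in two lines via \cref{prop:gen-U-kernel-cond} that this kernel has order of degeneracy at least $r+1$, and then simply quotes the variance bound of \cref{thm:gen-U-var-bigO}. The paper's route is shorter and needs no machinery beyond what is already stated in \cref{sec:gen-U}; your route is heavier up front (establishing complete degeneracy and orthogonality of the $h_S$ in the multi-sample setting, the combinatorics you flag as the main obstacle) but is more transparent about \emph{why} the projection works, since it shows explicitly that $\hat U_{n,r+1}$ reproduces the order-$(r{+}1)$ component exactly.
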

\begin{proof}
Expanding $\E[U_n \mid X_{i_1j_1},\dots,X_{i_{r+1}j_{r+1}}]$ yields
\begin{align*}
        U_n - \hat U_{n,r+1} = \prod_{j=1}^c \binom{n_j}{m_j}^{-1} \sum_{\X} \left[ \tilde h(\mathbb{X}) - \sum_{(i_1 j_1), \dots, (i_{r+1} j_{r+1})} \E[\tilde h(\mathbb{X}) \mid X_{i_1 j_1},\dots,X_{i_{r+1} j_{r+1}}] \right] 
.\end{align*}
Turning to the inner sum, \cref{prop:gen-U-kernel-cond} implies $\E[\tilde h(\X) \mid X_{i_1 j_1},\dots,X_{i_{r+1} j_{r+1}}] = 0$ if $\X$ and $X_{i_1j_1},\dots,X_{i_{r+1} j_{r+1}}$ share less than $r+1$ terms. Thus removing these zero-valued terms leaves us with
\begin{equation}
        U_n - \hat U_{n,r} = \prod_{j=1}^c \binom{n_j}{m_j}^{-1} \sum_{\X} \left[\tilde h(\X) -\sum_{(i_1 j_1), \dots, (i_{r+1} j_{r+1}) \in \X} \E[ \tilde h(\X) \mid X_{i_1 j_1}, \dots, X_{i_{r+1} j_{r+1}} ] \right],
\end{equation}
where $(i_1 j_1), \dots, (i_{r+1} j_{r+1}) \in \X$ denotes that $\X$ contains $X_{i_1j_1},\dots,X_{r_{r+1},j_{r+1}}$. The above puts $U_n - \hat U_{n,r}$ in the form a generalized U-statistic and furthermore conditioning on any $r+1$ terms in $\bX$ gives
\begin{align*}
    &\E \left[ \tilde h(\bX) -\sum_{(i_1 j_1), \dots, (i_{r+1} j_{r+1}) \in \bX} \E[\tilde h(\bX) \mid X_{i_1 j_1}, \dots, X_{i_{r+1} j_{r+1}}] \ \Bigg\vert \ X_{s_1 t_1},\dots,X_{s_{r+1} t_{r+1}} \right] \\[1mm]
    &\quad = \E [ \tilde h(\bX) \mid X_{s_1 t_1},\dots,X_{s_{r+1} t_{r+1}}] - \E[\tilde h(\bX) \mid X_{s_1 t_1},\dots,X_{s_{r+1} t_{r+1}}] = 0 \quad\text{by \cref{prop:gen-U-kernel-cond}}
.\end{align*}
If we had conditioned on fewer than $r+1$ variables, the term above would be zero again due to $U_n$'s order of degeneracy. Therefore $U_n - \hat U_{n,r}$'s order of degeneracy is at least $r+1$. Since $U_n - \hat U_{n,r}$ has zero mean, it follows from the variance formula \eqref{eq:gen-U-var-bigO} that $\E |U_n - \hat U_{n,r}|^2 = \bigO(n^{-(r+2)})$.
\end{proof}

The next two results builds on our lemma and deduces the asymptotic distribution of generalized U-statistics based on their first and second order projections.\\

\thmgenUalt*
\begin{proof}
    By \cref{lem:proj-approx} we have that 
    \begin{align*}
        \E[|\sqrt{n} (U_n - \hat U_{n,1})|^2] = n \E[|U_n - \hat U_{n,1}|^2] = n\bigO(n^{-2}) = \bigO(n^{-1}),
    \end{align*}
    so it suffices to deduce the distribution of $\sqrt{n} (\hat U_{n,1} - \theta)$. Define $\tilde h_j = \E[\tilde h(\bX) \mid X_{1 j} = \cdot]$, we may write
    \begin{align*}
         \hat U_{n,1} - \theta = \prod_{j=1}^c \binom{n_j}{m_j}^{-1} \sum_{j=1}^c \sum_{i=1}^{n_j} \sum_{\X} \E[\tilde h(\X) \mid X_{ij}] 
    \end{align*}
    Note that if $X_{ij}$ is not contained in $\X$ then \cref{prop:gen-U-kernel-cond} implies $\E[\tilde h(\X) \mid X_{ij}] = 0$, thus counting the number of $\X$ containing $X_{ij}$ leaves us with
    \begin{align*}
         \hat U_{n,1} - \theta &= \prod_{j=1}^c \binom{n_j}{m_j}^{-1} \sum_{j=1}^c \sum_{i=1}^{n_j} \binom{n_j - 1}{m_j - 1} \prod_{j' \neq j} \binom{n_{j'}}{m_{j'}} \tilde{h}_j (X_{ij}) = \sum_{j=1}^c \frac{m_j}{n_j} \sum_{i=1}^{n_j} \tilde{h}_j (X_{ij}).
    \end{align*}
    If $\Var(\E[h(\bX) \mid X_{1j}]) > 0$, we may combine the central limit theorem and Slutsky's theorem to get
    \begin{align*}
        \sqrt{n} \left( \frac{m_j}{n_j} \sum_{i=1}^{n_j} \tilde{h}_j (X_{ij}) \right) = \sqrt{\frac{n}{n_j}} m_j \left( \frac{1}{\sqrt{n_j}} \sum_{i=1}^{n_j} \tilde{h}_j (X_{ij}) \right) \xrightarrow[]{d} \mathcal N \left(0,\rho_j m_j^2 \Var(\E[h(\bX) \mid X_{1j}]) \right).
    \end{align*}
    Otherwise if $\Var(\E[h(\bX) \mid X_{1j}]) = 0$ then $\sum_{i=1}^{n_j} \tilde{h}_j (X_{ij})$ is almost surely zero. The desired limiting distribution follows by applying the continuous mapping theorem to $\sqrt{n}(\hat U_{n,1} - \theta)$. If $U_n$ is non-degenerate then $\Var(\E[h(\bX) \mid X_{1j}]) > 0$ for some $j$, thus proportionality guarantees that the variance is positive.
\end{proof}
\bigbreak

To deal with the asymmetric kernel associated with second order projections of generalized U-statistics, we have a preliminary result analogous to a singular value decomposition (SVD) for square-integrable kernels which is stated in \citet[equation 4.5.21]{koroljuk-1993} without proof. 

\begin{lemma}[Kernel SVD]\label{lem:kernel-decomp}
    Let $k \in L_2(\cX \times \cY, \mu \times \nu)$ and $L_2(\cX,\mu),L_2(\cY,\nu)$ be separable. Then $k$ has the representation as an $L_2$ limit
    \begin{align*}
        k(x,y) = \sum_n \sigma_n v_n(x)u_n(y), 
    \end{align*}
    where $\sigma_n$ are the singular values of the operator $T: L_2(\cX, \mu) \ra L_2(\cY, \nu)$ defined by
    \[
        T:f \mapsto \int k(x,\cdot) f(x) \ d\mu(x),
    \]
    and $\{v_n\} \subset L_2(\cX, \mu), \{u_n\} \subset L_2(\cY, \nu)$ are orthonormal.
\end{lemma}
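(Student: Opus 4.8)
The plan is to recognize the operator $T$ as a Hilbert--Schmidt operator, invoke the singular value decomposition for compact operators, and then transport the resulting operator-level decomposition back into an $L_2$ statement about the kernel itself.

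First I would use separability to fix orthonormal bases $\{e_i\}$ of $L_2(\cX,\mu)$ and $\{f_j\}$ of $L_2(\cY,\nu)$, so that $\{e_i \otimes f_j\}$, with $(e_i \otimes f_j)(x,y) = e_i(x) f_j(y)$, is an orthonormal basis of $L_2(\cX \times \cY, \mu \times \nu)$. Writing $k = \sum_{i,j} c_{ij}\, e_i \otimes f_j$ with $\sum_{i,j} \abs{c_{ij}}^2 = \norm{k}_{L_2}^2 < \infty$, a direct Fubini computation gives $\inner{T e_i}{f_j}_{L_2(\cY)} = \inner{k}{e_i \otimes f_j}_{L_2(\mu \times \nu)} = c_{ij}$. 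Hence $T$ is well-defined and $\norm{T}_{\HS}^2 = \sum_{i,j} \abs{c_{ij}}^2 = \norm{k}_{L_2}^2 < \infty$, so $T$ is Hilbert--Schmidt and therefore compact. The same computation shows the assignment $k \mapsto T$ from $L_2(\mu \times \nu)$ into the Hilbert--Schmidt operators is a linear isometry, hence injective; this injectivity is the key fact I will exploit at the very end.

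Next I would apply the standard SVD for a compact operator between separable Hilbert spaces: diagonalizing the compact, positive, self-adjoint operator $T^* T$ on $L_2(\cX,\mu)$ via the spectral theorem yields an orthonormal family $\{v_n\}$ of eigenfunctions with strictly positive eigenvalues $\lambda_n$; setting $\sigma_n = \sqrt{\lambda_n}$ and $u_n = \sigma_n^{-1} T v_n$ produces an orthonormal family $\{u_n\} \subset L_2(\cY,\nu)$ with $T f = \sum_n \sigma_n \inner{f}{v_n}_{L_2(\cX)} u_n$ and $\sum_n \sigma_n^2 = \norm{T}_{\HS}^2 < \infty$.

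Finally I would set $\tilde k = \sum_n \sigma_n\, v_n \otimes u_n$. Since $\{v_n \otimes u_n\}$ is orthonormal in $L_2(\mu \times \nu)$ and $\sum_n \sigma_n^2 < \infty$, this series converges in $L_2(\mu \times \nu)$, so $\tilde k$ is a genuine $L_2$ element. Its associated operator satisfies $T_{\tilde k} f = \sum_n \sigma_n \inner{f}{v_n} u_n = T f$, so $T_{\tilde k} = T = T_k$; by the injectivity of $k \mapsto T_k$ established above, $\tilde k = k$ in $L_2(\mu \times \nu)$, which is precisely the claimed representation. The main obstacle is exactly this final bookkeeping that ties the operator SVD to an $L_2$ identity for the kernel: one must verify that the kernel-to-operator map is an isometry (hence injective), so that equality of operators forces equality of kernels. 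Once that isometry is in hand, the compactness of $T$ and the operator SVD are entirely standard.
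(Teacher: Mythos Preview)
Your proposal is correct and follows essentially the same approach as the paper: both show $T$ is Hilbert--Schmidt via the identity $\norm{k}_{L_2} = \norm{T}_{\HS}$, diagonalize $T^*T$ to obtain the singular system $(\sigma_n, v_n, u_n)$, and then identify $k$ with $\sum_n \sigma_n\, v_n \otimes u_n$. The only difference is in the closing step: the paper directly expands $\norm{k - \sum_{n \le L} \sigma_n v_n u_n}_{L_2}^2$ to $\norm{k}_{L_2}^2 - \sum_{n \le L} \sigma_n^2$ and lets $L \to \infty$, whereas you instead argue that $k$ and $\tilde k$ induce the same operator and invoke the injectivity of the isometry $k \mapsto T_k$; these are equivalent uses of the same Hilbert--Schmidt identity.
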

\begin{proof}
    Since $k$ is square integrable, $T$ is Hilbert-Schmidt and hence compact. Letting $T^*$ denote the adjoint, $T^* T$ is a compact self-adjoint operator and hence provides a countable orthonormal basis of eigenvectors $\{v_n\}$ of $T^* T$ with corresponding eigenvectors $\{\lambda_n\}$ arranged so that $\lambda_1 \geq \lambda_2 \geq \dots \geq 0$. For all non-zero $\lambda_n$, put $\sigma_n = \sqrt{\lambda_n}$ and $u_n = \sigma_n^{-1} Tv_n$. Notice that 
    \[ \langle u_n, u_m \rangle = \frac{1}{\sigma_n \sigma_m} \langle T v_n, T v_m \rangle = \frac{1}{\sigma_n \sigma_m} \langle T^* T v_n, v_m \rangle = \frac{\sigma_n}{\sigma_m} \langle v_n, v_m \rangle \]
    so $\{u_n\}$ is orthonormal in $L_2(\cY,\nu)$. Thus we have
    \begin{align*}
        &\iint \left| k(x,y) - \sum_{n=1}^L \sigma_n v_n(x) u_n(y) \right|^2 \ d\mu(x) d \nu(y)\\ 
        &= \iint \left( |k(x,y)|^2 - 2 \sum_{n=1}^L \sigma_n k(x,y) v_n(x) u_n(y) + \sum_{n=1}^L \sigma_n^2 v_n^2(x) u_n^2(y) \right) \ d\mu(x) d \nu(y)\\
        &= \norm{k}_2^2 - 2 \sum_{n=1}^L \sigma_n \int  u_n(y) (T v_n)(y) \ d\nu(y) + \sum_{n=1}^L \sigma_n^2\\ 
        &= \norm{k}_2^2 - \sum_{n=1}^L \sigma_n^2 = \norm{k}_2^2 - \sum_{n=1}^L \inner{T^* T v_n}{ v_n}_{2} = \norm{k}_2^2 - \sum_{n=1}^L \norm{T v_n}_{2}^2.
    \end{align*}
    It is a standard result that $\norm{k}_2 = \norm{T}_{\HS}$. 
    Therefore the final term vanishes as $L \ra \infty$ .
\end{proof}

Observe that $\hat U_{n,2}$ is a linear combination of the conditional kernels 
\begin{align*}
    {\tilde h}_{jj} &:= \E(\tilde h(\bX) \mid X_{1j} = \cdot, X_{2j} = \cdot),\quad 1 \leq j \leq c\\
    {\tilde h}_{st} &:= \E(\tilde h(\bX) \mid X_{1s} = \cdot, X_{1t} = \cdot),\quad 1 \leq s < t \leq c.
\end{align*}
Now assuming each ${\tilde h}_{jj} \in L_2(\mu_j^2)$ and ${\tilde h}_{st} \in L_2(\mu_s \times \mu_t)$ leads to a variety of kernel decompositions. For the symmetric kernel ${\tilde h}_{jj}$, \citet[Section VI, Exercises 44 and 56]{dunford1988linear} give the decomposition
\begin{equation}
    {\tilde h}_{jj}(x_1,x_2) = \sum_{l=1}^\infty \lambda_{jl} \psi_{jl}(x_1) \psi_{jl}(x_2), \label{eq:decomp-sym} 
\end{equation}
where the limit is taken with in $L_2$ norm, and $\{\psi_{jl}\}, \{\lambda_{jl}\}$ are the (orthonormal) eigenfunction-eigenvalue pairs of $T_j := f \mapsto \E_{X \sim \mu_j} [{\tilde h}_{jj} (\cdot, X) f(X)]$. The aforementioned \cref{lem:kernel-decomp} applies to ${\tilde h}_{st}$ giving 
\begin{equation}
   {\tilde h}_{st}(x_1,x_2) = \sum_{l=1}^\infty \sigma_{stl} v_{stl}(x_1) u_{stl}(x_2). \label{eq:decomp-asym}
\end{equation}
These results are applicable to the coming proof. 

\begin{theorem}\label{thm:gen-U-degen-dist}
Let $U_n$ be a $c$-sample generalized U-statistic with an associated kernel $h$, where $n = \min\{n_1,\dots,n_c\}$ and $\rho_j := \lim_{n \ra \infty} n/n_j$ exists for each $j$. Further assume $\tilde{h}_{jj} \in L_2(\mu_j^2)$ for $j \in [c]$ and $\tilde{h}_{st} \in L_2(\mu_s \times \mu_t)$ for $1 \leq s < t \leq c$, admitting the decompositions \eqref{eq:decomp-sym} and \eqref{eq:decomp-asym}. If $U_n$ is first-order degenerate then $n (U_n - \theta)$ converges in distribution to $Y$, where
    \[ Y = \sum_{j=1}^c \binom{m_j}{2} \rho_j \sum_{l=1}^\infty \lambda_{jl} (a_{jl}^2 - 1) + \sum_{s < t} m_s m_t \sqrt{\rho_s \rho_t} \sum_{l=1}^\infty \sigma_{stl} b_{stl} c_{stl}, \]
and $\{a_{jl}\}, \{b_{stl}\}, \{c_{stl}\}$ are marginally $\mathcal N(0,1)$ variables obtained from the weak limits
\begin{align*}
   \frac{1}{\sqrt{n_j}} \sum_{i=1}^{n_j} \psi_{jl}(X_{ij}) \ra_d a_{jl},\quad \frac{1}{\sqrt{n_s}} \sum_{i=1}^{n_s} v_{stl}(X_{is}) \ra_d b_{stl},\quad \frac{1}{\sqrt{n_t}} \sum_{i=1}^{n_t} u_{stl}(X_{it}) \ra_d c_{stl}.
\end{align*}
\end{theorem}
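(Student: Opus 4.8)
The plan is to follow the projection method behind \cref{thm:gen-U-alt}, but now carried to second order. Since $U_n$ is first-order degenerate, \cref{lem:proj-approx} with $r=1$ gives $\E|U_n - \hat U_{n,2}|^2 = \bigO(n^{-3})$, hence $\E|n(U_n - \hat U_{n,2})|^2 = \bigO(n^{-1}) \to 0$; by Slutsky's theorem it then suffices to find the limiting law of $n(\hat U_{n,2} - \theta)$. I would expand $\hat U_{n,2}$ and invoke \cref{prop:gen-U-kernel-cond}: for a size-two subset of the sample, $\E[\tilde h(\X)\mid \cdot]$ equals a conditional expectation given $\X \cap (\text{pair})$, which vanishes whenever the pair overlaps a given copy $\X$ in at most one point (either $\E[\tilde h(\X)]=0$, or single-point conditioning, which is zero by degeneracy). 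Thus only pairs lying jointly inside a common argument block survive. Separating these into pairs drawn from a single group $j$ (contributing $\tilde h_{jj}$) and from two distinct groups $s<t$ (contributing $\tilde h_{st}$), and simplifying the binomial prefactors via $\binom{n_j}{m_j}^{-1}\binom{n_j-2}{m_j-2} = \frac{m_j(m_j-1)}{n_j(n_j-1)}$ and $\binom{n_s}{m_s}^{-1}\binom{n_s-1}{m_s-1} = \frac{m_s}{n_s}$, I obtain
\[
    \hat U_{n,2} - \theta
    = \sum_{j=1}^c \frac{m_j(m_j-1)}{n_j(n_j-1)} \sum_{i_1 < i_2} \tilde h_{jj}(X_{i_1 j}, X_{i_2 j})
    + \sum_{s < t} \frac{m_s m_t}{n_s n_t} \sum_{i=1}^{n_s}\sum_{i'=1}^{n_t} \tilde h_{st}(X_{i s}, X_{i' t})
.\]

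Next I substitute the decompositions \eqref{eq:decomp-sym} and \eqref{eq:decomp-asym}, first truncating each series at a level $L$. For the symmetric part I use $\sum_{i_1<i_2}\psi(X_{i_1})\psi(X_{i_2}) = \tfrac12[(\sum_i\psi(X_i))^2 - \sum_i\psi(X_i)^2]$ to rewrite the inner sum in terms of the normalized sums $\frac{1}{\sqrt{n_j}}\sum_i\psi_{jl}(X_{ij})$ and the averages $\frac{1}{n_j}\sum_i\psi_{jl}(X_{ij})^2$; the cross part factors directly into products of $\frac{1}{\sqrt{n_s}}\sum_i v_{stl}(X_{is})$ and $\frac{1}{\sqrt{n_t}}\sum_i u_{stl}(X_{it})$. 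First-order degeneracy forces the marginals of $\tilde h_{jj}$ and $\tilde h_{st}$ to vanish, so the $\psi_{jl}$ and the singular functions $v_{stl},u_{stl}$ are mean-zero and orthonormal in their respective $L_2$ spaces; the multivariate central limit theorem, applied within each group and using independence across groups, then gives joint convergence in distribution of all these normalized sums to a centered Gaussian family $\{a_{jl}, b_{stl}, c_{stl}\}$, while the law of large numbers gives $\frac{1}{n_j}\sum_i\psi_{jl}(X_{ij})^2 \to 1$. Tracking the scalars after multiplying by $n$, the prefactors converge as $\frac{n\,m_j(m_j-1)}{n_j(n_j-1)}\cdot\frac{n_j}{2}\to\binom{m_j}{2}\rho_j$ and $\frac{n\,m_s m_t}{\sqrt{n_s n_t}}\to m_s m_t\sqrt{\rho_s\rho_t}$, so the continuous mapping theorem sends the $L$-truncated statistic to $Y^{(L)}$, the $L$-truncated version of $Y$, in which the within-group terms appear as $\binom{m_j}{2}\rho_j\lambda_{jl}(a_{jl}^2-1)$ and the cross terms as $m_sm_t\sqrt{\rho_s\rho_t}\,\sigma_{stl}b_{stl}c_{stl}$.

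Finally I pass $L\to\infty$. The square-integrability hypotheses yield $\sum_l\lambda_{jl}^2 = \norm{\tilde h_{jj}}_2^2 < \infty$ and $\sum_l\sigma_{stl}^2 = \norm{\tilde h_{st}}_2^2 < \infty$, and a direct second-moment estimate (using \eqref{eq:gen-U-var} applied to the tail kernels) bounds the mean-square error between $n(\hat U_{n,2}-\theta)$ and its $L$-truncation by a constant times $\sum_{l>L}\lambda_{jl}^2$ and $\sum_{l>L}\sigma_{stl}^2$, \emph{uniformly in $n$}. Since these tails vanish as $L\to\infty$ and $Y^{(L)}\xrightarrow{d} Y$, the standard approximation theorem for convergence in distribution (Billingsley's Theorem~3.2) yields $n(\hat U_{n,2}-\theta)\xrightarrow{d} Y$, and hence $n(U_n-\theta)\xrightarrow{d} Y$. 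I expect the main obstacle to be precisely this interchange of limits: establishing the uniform-in-$n$ mean-square control of the spectral tails so that the finite-dimensional Gaussian limits can be assembled into the full infinite-series limit, since the within-group quadratic terms $(\sum_i\psi_{jl})^2$ couple the eigenfunctions in a way that must be shown not to accumulate over the tail.
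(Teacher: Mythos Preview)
Your proposal is correct and follows essentially the same route as the paper: reduce to the second-order projection via \cref{lem:proj-approx}, expand $\hat U_{n,2}$ into within-group and cross-group sums with the stated coefficients, truncate the spectral decompositions at level $L$, apply the multivariate CLT plus continuous mapping for fixed $L$, and then pass $L\to\infty$ using uniform-in-$n$ $L_2$ control of the tails. The only cosmetic difference is that the paper carries out the last interchange via a direct three-term characteristic-function bound rather than citing Billingsley's Theorem~3.2, and for the tail estimate it works directly with the bilinear form $\sum_{i_1<i_2}\psi_{jl}(X_{i_1 j})\psi_{jl}(X_{i_2 j})$ (so orthonormality of $\{\psi_{jl}\}_l$ kills the cross-$l$ terms immediately), which dissolves the ``coupling'' concern you flag at the end.
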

    
\begin{proof}
    By \cref{lem:proj-approx}, $n(U_n - \theta)$ converges in $L_2$ to $V_n := n(\hat U_{n,2} - \theta)$. Writing the projection explicitly 
    \begin{align*}
        n^{-1} V_n = \sum_{j=1}^c \sum_{i_1 < i_2 } [\E(U_n \mid X_{i_1j}, X_{i_2j}) - \theta] + \sum_{s < t} \sum_{i_1,i_2} [\E(U_n \mid X_{i_1 s}, X_{i_2 t}) - \theta],
    \end{align*}
    which may be simplified by removing the zero-valued terms according to \cref{prop:gen-U-kernel-cond}
    \begin{align*}
        \sum_{i_1 < i_2 } [\E(U_n \mid X_{i_1j}, X_{i_2j}) - \theta] &= \prod_{j'=1}^c \binom{n_{j'}}{m_{j'}}^{-1} \sum_{i_1 < i_2 } \sum_{\X} \E(\tilde h(\X) \mid X_{i_1j}, X_{i_2j}) \\[1mm]
        &= \prod_{j'=1}^c \binom{n_{j'}}{m_{j'}}^{-1} \sum_{i_1 < i_2 } \binom{n_j - 2}{m_j - 2} \prod_{j' \neq j} \binom{n_{j'}}{m_{j'}} {\tilde h}_{jj}(X_{i_1j}, X_{i_2j}) \\[1mm]
        &= \binom{m_j}{2} \binom{n_j}{2}^{-1} \sum_{i_1 < i_2 } {\tilde h}_{jj}(X_{i_1j}, X_{i_2j}) \\[2mm]
        \sum_{i_1,i_2} [\E(U_n \mid X_{i_1 s}, X_{i_2 t}) - \theta] &= \prod_{j=1}^c \binom{n_j}{m_j}^{-1} \sum_{i_1, i_2} \sum_{\X} \E(\tilde h(\X) \mid X_{i_1 s}, X_{i_2 t}) \\[1mm]
        &= \prod_{j=1}^c \binom{n_j}{m_j}^{-1} \sum_{i_1, i_2} \binom{n_s - 1}{m_s - 1} \binom{n_t - 1}{m_t - 1} \prod_{j \neq s,t} \binom{n_j}{m_j} {\tilde h}_{st}(X_{i_2,s}, X_{i_2 t}) \\[1mm]
        &= \frac{m_s m_t}{n_s n_t} \sum_{i_1, i_2} {\tilde h}_{st}(X_{i_2,s}, X_{i_2 t}).
    \end{align*}
    Thus we're left with
    \begin{equation*} n^{-1} V_n = \sum_{j=1}^c \binom{m_j}{2} \binom{n_j}{2}^{-1} \sum_{i_1 < i_2 } {\tilde h}_{jj}(X_{i_1j}, X_{i_2j}) + \sum_{s < t} \frac{m_s m_t}{n_s n_t} \sum_{i_1, i_2} {\tilde h}_{st}(X_{i_1 s}, X_{i_2 t}).
    \end{equation*}
    Replacing ${\tilde h}_{jj}$ and ${\tilde h}_{st}$ with their $L_2$ expansions, we define the truncation of $V_n$ to $L$ eigenfunctions
    \begin{align}
        n^{-1} V_{nL} := &\sum_{j=1}^c \binom{m_j}{2} \binom{n_j}{2}^{-1} \sum_{l=1}^L \sum_{i_1 < i_2 } \lambda_{jl} \psi_{jl}(X_{i_1j}) \psi_{jl}(X_{i_2j}) \notag \\[1mm] 
        &+ \sum_{s < t} \frac{m_s m_t}{n_s n_t} \sum_{l=1}^L \sum_{i_1, i_2} \sigma_{stl} v_{stl}(X_{i_1 s}) u_{stl}(X_{i_2 t}). \label{eq:V_nL}
    \end{align}
    and the truncation of $Y$ by
    \begin{align*}
        Y_L := \sum_{j=1}^c \binom{m_j}{2} \rho_j \sum_{l=1}^L \lambda_{jl} (a_{jl}^2 - 1) + \sum_{s < t} m_s m_t \sqrt{\rho_s \rho_t} \sum_{l=1}^L \sigma_{stl} b_{stl} c_{stl}.
    \end{align*}
    Using $V_{nL}$ and $Y_L$ as points of comparison, we have the following error decomposition in terms of characteristic functions:
    \begin{align*}
        |\E \exp(ix V_n) - \E \exp(ix Y)| \leq |\E \exp(ix V_n) - \E \exp(ix V_{nL})| &+ |\E \exp(ix V_{nL}) - \E \exp(ix Y_L)| \\[1mm] 
        \quad + |\E \exp(ix Y_L) &- \E \exp(ix Y)|.
    \end{align*}
    The rest of the proof is divided in three sections, each focused on bounding a term above.
    
    \textbf{First term:}
    We first apply the inequality $|\exp(iz) - 1| \leq |z|$
    \begin{align*}
        |x|^{-1}& \E|\exp(ix V_n) - \exp(ix V_{nL})| \\[1mm] 
        &\leq \E|V_n - V_{nL}| \leq [\E| V_n - V_{nL}|^2]^{1/2} \qquad\text{by H\"older's inequality} \\[1mm]
        &\leq \sum_{j=1}^c \binom{m_j}{2} \underbrace{\left[ \E \left| n \binom{n_j}{2}^{-1} \sum_{l = L + 1}^\infty \sum_{i_1 < i_2 } \lambda_{jl} \psi_{jl}(X_{i_1j}) \psi_{jl}(X_{i_2j}) \right|^2 \right]^{1/2}}_{A_j}\\ 
        &\quad + \sum_{s < t} m_s m_t \underbrace{\left[ \E \left| \frac{n}{n_s n_t} \sum_{l= L + 1}^\infty \sum_{i_1, i_2} \sigma_{stl} v_{stl}(X_{i_1 s}) u_{stl}(X_{i_2 t}) \right|^2 \right]^{1/2}}_{B_{st}}. 
    \end{align*}
    Since $V_{nL} \ra V_n$ in $L_2$ we have
    \begin{align*}
        A_j^2 &= n^2 \binom{n_j}{2}^{-2} \E \left| \sum_{l = L + 1}^\infty \sum_{i_1 < i_2 } \lambda_{jl} \psi_{jl}(X_{i_1j}) \psi_{jl}(X_{i_2j}) \right|^2 \\
        &= n^2 \binom{n_j}{2}^{-2} \sum_{l = L+1}^\infty \lambda_{jl}^2 \E \left| \sum_{i_1 < i_2} \psi_{jl}(X_{i_1j}) \psi_{jl}(X_{i_2j}) \right|^2 \qquad\text{by orthogonality} \\
        &= n^2 \binom{n_j}{2}^{-2} \sum_{l = L+1}^\infty \lambda_{jl}^2 \sum_{i_1 < i_2} \sum_{i_3 < i_4} \E [ \psi_{jl}(X_{i_1j}) \psi_{jl}(X_{i_2j}) \psi_{jl}(X_{i_3j}) \psi_{jl}(X_{i_4j}) ]  
    .\end{align*}
    First-order degeneracy implies
    \begin{align*}
        \E[ {\tilde h}_{jj}(\cdot,X_{ij}) ] = \sum_{l=1}^\infty \lambda_{jl} \E[\psi_{jl}(X_{ij})] \psi_{jl}(\cdot) = 0 
    \end{align*}
    so by linear independence of the eigenfunctions we must have $\E[\psi_{jl}(X_{ij})] = 0$ if $\lambda_{jl} \neq 0$. It follows by independence that 
    \begin{align*}
        \E [ \psi_{jl}(X_{i_1j}) \psi_{jl}(X_{i_2j}) \psi_{jl}(X_{i_3j}) \psi_{jl}(X_{i_4j}) ] = \begin{dcases}
            1 & \text{if }(i_1,i_2) = (i_3,i_4) \\
            0 & \text{otherwise}
        .\end{dcases}
    \end{align*}
    Hence we get
    \begin{align*}
        A_j^2 = n^2 \binom{n_j}{2}^{-1} \sum_{l = L+1}^\infty \lambda^2_{jl} =  \frac{2n^2}{n_j(n_j - 1)}  \sum_{l = L+1}^\infty \lambda^2_{jl} \leq 4 \sum_{l = L+1}^\infty \lambda^2_{jl}. 
    \end{align*}
    We may similarly show that $\E[v_{stl}(X_{i_1 s})] = \E[u_{stl}(X_{i_2 t})] = 0$ if $\sigma_{stl} \neq 0$, thus
    \begin{align*}
        B_{st}^2 &= \left(\frac{n}{n_s n_t} \right)^2  \E \left| \sum_{l= L + 1}^\infty \sum_{i_1, i_2} \sigma_{stl} v_{stl}(X_{i_1s}) u_{stl}(X_{i_2t}) \right|^2 \\[1mm]
        &= \left(\frac{n}{n_s n_t} \right)^2 \sum_{l= L + 1}^\infty \sigma_{stl}^2 \E \left|  \sum_{i_1, i_2} v_{stl} (X_{i_1s}) u_{stl}(X_{i_2t}) \right|^2 = \frac{n^2}{n_s n_t} \sum_{l= L + 1}^\infty \sigma_{stl}^2 \leq \sum_{l= L + 1}^\infty \sigma_{stl}^2.
    \end{align*}
    Since the eigenvalues $\{\lambda_{jl}\}$ and singular values $\{\sigma_{stl}\}$ belong to Hilbert-Schmidt operators, their squared series converge. Therefore $\lim_{L \ra \infty} \E|\exp(ixV_n) - \exp(ix V_{nL})| = 0$ as it is a linear combination of the $A_j,B_{st}$. Importantly, our choice of $L$ is independent of $n$.
    \bigbreak
    
    \textbf{Second term:}
    Fix any $L \geq 1$, we will focus on the following terms in expression \eqref{eq:V_nL} of $V_{nL}$
    \begin{align*}
        \alpha_{j} & := \frac{2}{n_j} \sum_{l=1}^L \sum_{i_1 < i_2 } \lambda_{jl} \psi_{jl}(X_{i_1j}) \psi_{jl}(X_{i_2j}) \\
        \beta_{st} & := \frac{1}{\sqrt{n_s n_t}} \sum_{l=1}^L \sum_{i_1, i_2} \sigma_{stl} v_{stl}(X_{i_1 s}) u_{stl}(X_{i_2 t}),
    \end{align*}
    which we may rewrite as
    \begin{align*}
        \alpha_j &= \frac{1}{n_j} \sum_{l=1}^L \lambda_{jl} \left( \left( \sum_{i=1}^{n_j} \psi_{jl}(X_{ij}) \right)^2 - \sum_{i=1}^{n_j} \psi_{jl}^2(X_{ij}) \right)\\
        &= \sum_{l=1}^L \lambda_{jl} \left( \left( \frac{1}{\sqrt{n_j}} \sum_{i=1}^{n_j} \psi_{jl}(X_{ij}) \right)^2 - \frac{1}{n_j} \sum_{i=1}^{n_j} \psi_{jl}^2(X_{ij}) \right) \\[2mm]
        \beta_{st} &= \sum_{l = 1}^L \sigma_{stl} \left( \frac{1}{\sqrt{n_s}} \sum_{i_1=1}^{n_s} v_{stl}(X_{i_1 s})  \right) \left( \frac{1}{\sqrt{n_t}} \sum_{i_2 = 1}^{n_t} u_{stl}(X_{i_2 t}) \right).
    \end{align*}
    Note that $\frac{1}{n_j} \sum_{i=1}^{n_j} \psi_{jl}^2 (X_{ij}) \ra 1$ a.s.\! by the law of large numbers. By the above we may view $V_{nL}$ as a continuous function in terms of the sample means and $(Z_1,\dots,Z_c)$, where $Z_j$ is the vector concatenating
    \begin{gather*}
    \left( \frac{1}{\sqrt{n_j}} \sum_{i = 1}^{n_j} \psi_{jl}(X_{ij}) : 1 \leq l \leq L \right)^\top \\[1mm]
    \left( \frac{1}{\sqrt{n_j}} \sum_{i = 1}^{n_j} v_{jtl}(X_{ij}) : 1 \leq l \leq L,\ j < t \right)^\top \\[1mm]
    \left( \frac{1}{\sqrt{n_j}} \sum_{i = 1}^{n_j} u_{sjl}(X_{ij}) : 1 \leq l \leq L,\ s < j \right)^\top
    .\end{gather*}
     That is, $Z_j$ contains all normalized samples from $\mu_j$. The central limit theorem gives $Z_j \ra_d \mathcal N(0, \bSigma_j)$ as $n_j \ra \infty$, where $\bSigma_j$ is the covariance matrix between pairs of $\psi_{jl}(X_{ij}), v_{jtl}(X_{ij}), u_{sjl}(X_{ij})$. Moreover $Z_j$ is independent across each $j$, so we have the joint convergence $(Z_1,\dots,Z_c) \ra_d \mathcal N(0,\bSigma)$ as $n \ra \infty$, where $\bSigma$ is the block diagonal matrix $\text{diag}(\bSigma_1, \dots, \bSigma_c)$. A final application of the continuous mapping theorem and Slutsky's theorem gives 
     \begin{align*}
         V_{nL} &= \sum_{j=1}^c \binom{m_j}{2} \frac{n}{n_j - 1} \alpha_j + \sum_{s < t} m_s m_t \frac{n}{\sqrt{n_s n_t}} \beta_{st} \\ 
         &\longrightarrow_d \sum_{j=1}^c \binom{m_j}{2} \rho_j \sum_{l=1}^L \lambda_{jl} (a_{jl}^2 - 1) + \sum_{s < t} m_s m_t \sqrt{\rho_s \rho_t} \sum_{l=1}^L \sigma_{stl} b_{stl} c_{stl} = Y_L.
     \end{align*}
     Convergence in distribution accordingly gives $|\E\exp(i x V_{nL}) -\E\exp(ix Y_L)| \ra 0$ as $n \ra \infty$.
     \bigbreak

    \textbf{Third term:} Recall that $Y$ is the $L_2$ limit of $Y_L$, so again using $|\exp(iz) - 1| \leq |z|$ we get
    \begin{align*}
        |\E \exp(ix Y_L) - \E \exp(ix Y)| &\leq |x|[\E|Y_L - Y|^2]^{1/2} \ra 0 \text{ as } L \ra \infty.
    \end{align*}
    \bigbreak
    To complete the proof, pick $\eps > 0$. For $L$ sufficiently large the first and third term are less than $\eps/3$. Fixing this $L$, we may pick $N$ such that the second term is bounded by $\eps/3$ for all $n \geq N$, giving
    \[ |\E \exp(ix V_n) - \E \exp(ix Y)| < \eps,\quad \forall n \geq N.\qedhere \]
\end{proof}

We can now determine the null distribution of $\MMDhatsq$ as a specialized case of the preceding theorem. We shall see this distribution has a much cleaner form than above due to the additional symmetries of the $\MMD$ kernel.\\

\thmgenUnull*
\begin{proof}
    \cref{prop:var-gen-mmd} implies that $\MMDhatsq$ is first-order degenerate, so we will apply \cref{thm:gen-U-degen-dist}. Define
    \begin{align*}
        {\tilde h}_{XX} &= \E[h(X,X';Y,Y') \mid X = \cdot,X' = \cdot] \\
        {\tilde h}_{YY} &= \E[h(X,X';Y,Y') \mid Y = \cdot,Y' = \cdot] \\
        {\tilde h}_{XY} &= \E[h(X,X';Y,Y') \mid X = \cdot,Y = \cdot]
    \end{align*}
    Since $\MMDsq(P,Q) = 0$ implies $\mu_P = \mu_Q$, we calculate the conditional kernels as
    \begin{align*}
         {\tilde h}_{XX} (X,X') &= k(X,X') + \E k(Y,Y') - \mu_Q(X) - \mu_Q(X')\\ 
        &= \langle \phi(X) - \mu_Q, \phi(X) - \mu_Q \rangle \\ 
        &= \langle \phi(X) - \mu_P, \phi(X) - \mu_P \rangle \\ 
        {\tilde h}_{XY}(X,Y) &= \mu_P(X) + \mu_Q(Y) - \frac{1}{2} \left(\mu_Q(X) +  \mu_P(Y) + k(X,Y) + \E k(X',Y') \right)\\ 
        &= \frac{1}{2} \left(\mu_P(X) + \mu_P(Y) - k(X,Y) - \E k(X',Y') \right) \\
        &= -\frac{1}{2} \langle \phi(X) - \mu_P, \phi(Y) - \mu_P \rangle
    \end{align*}
    and by symmetry ${\tilde h}_{YY} (Y,Y') = \langle \phi(X) - \mu_P, \phi(Y) - \mu_P \rangle$ as well. It follows that the integral operators of ${\tilde h}_{XX}, {\tilde h}_{YY}, {\tilde h}_{XY}$ share the same normalized eigenfunctions, and the eigenvalues of ${\tilde h}_{XY}$ carry an extra factor of $-1/2$. Now \cref{prop:l2-kernel} implies ${\tilde h}_{XX}, {\tilde h}_{YY}, {\tilde h}_{XY}$ are square integrable with respect to $P^2,Q^2,P \times Q$, so following \cref{thm:gen-U-degen-dist}
    \begin{align*}
        \min\{n_x,n_y\} \MMDhatsq \ra_d \rho_X \sum_{l=1}^{\infty}\lambda_{l}(a_{l}^{2}-1)+\rho_Y\sum_{l=1}^{\infty}\lambda_{l}(b_{l}^{2}-1)-2{\sqrt{\rho_X\rho_Y}}\sum_{l=1}^{\infty}\lambda_{l}a_{l}b_{l},
    \end{align*}
    where $a_l,b_l \sim \mathcal N(0,1)$ are independent. Finally we may complete the square, simplifying the above to
    \begin{equation*}
        \sum_{l=1}^{\infty}\lambda_{l}\left[(\rho_X^{1/2}a_{l}-\rho_Y^{1/2}b_{l})^{2}-(\rho_X+\rho_Y)\right]
        = \sum_{l=1}^\infty \lambda_l\left[ (\rho_X + \rho_Y) Z^2_l - (\rho_X+\rho_Y)\right] = (\rho_X + \rho_Y) \sum_{l=1}^\infty \lambda_l (Z_l^2 - 1)
    ,\end{equation*}
    as desired.
\end{proof}

\section{Proof of uniform convergence} \label{sec:unif-conv}
We will now show that the uniform convergence analysis of \citet{liu:deep-testing},
also holds for our estimators.
Throughout this section,
we use the notation of that paper:
$k_\omega$ refers to a kernel with parameters $\omega \in \Omega$,
for which
$\nu := \sup_{\omega \in \Omega} \sup_{x \in \cX} k_\omega(x,x)$
and $L_k = \sup_{\omega \ne \omega' \in \Omega} \sup_{x, y \in \mathcal X} \abs{k_\omega(x, y) - k_{\omega'}(x, y)} / \norm{\omega - \omega'}$.

\begin{theorem}
Theorem 11, as well as Corollaries 12 and 13,
of \citet{liu:deep-testing}
all hold exactly as written
for our $\MMDhatsq$ and $\hat\sigma$
when $n$ is replaced by $n_\harm = \frac{2}{1 / n_X + 1/n_Y}$.
\end{theorem}
\begin{proof}
    The proof is identical to that of \citet{liu:deep-testing}
    with their Propositions 15 and 16
    replaced by \cref{thm:mmd-uniform,thm:sigmahat-uniform}.
\end{proof}

The following two uniform convergence results follow the same strategy as the analogous results of \citet{liu:deep-testing}, which are their Propositions 15 and 16, but require minor changes.
\begin{proposition} \label{thm:mmd-uniform}
    Proposition 15 of \citet{liu:deep-testing}
    also holds for $\MMDhatsq$
    when $n$ is replaced by $n_\harm = \frac{2}{1 / n_X + 1/n_Y}$.
\end{proposition}
\begin{proof}
    First, note that the estimator $\MMDhatsq$ of \eqref{eq:mmdhatsq}
    satisfies bounded differences.
    If we replace one entry of $\bS_P$,
    the $\bS_P$-to-$\bS_P$ average
    changes $n_X-1$ of its $\binom{n_X}{2}$ summands,
    each by at worst $2 \nu$.
    Thus that term changes by at most $\frac{4 \nu}{n_X}$.
    The $\bS_Q$-to-$\bS_Q$ average does not change,
    while the $\bS_P$-to-$\bS_Q$ average
    changes $n_Y$ of its $n_X n_Y$ terms again by at most $2 \nu$.
    Since this term is doubled,
    the total change in $\MMDhatsq$ is at most
    $\frac{8 \nu}{n_X}$.
    Similarly, replacing one entry of $\bS_Q$ changes $\MMDhatsq$
    by at most $\frac{8 \nu}{n_Y}$.
    McDiarmid's inequality therefore tells us that
    the subgaussian parameter of $\MMDhatsq$
    is at most
    $\frac12 \sqrt{n_X \cdot \frac{(8 \nu)^2}{n_X^2} + n_Y \cdot \frac{(8 \nu)^2}{n_X^2} } = 4 \nu \sqrt{\frac{1}{n_X} + \frac{1}{n_Y}} = 4 \nu \sqrt{2 / n_\harm}$.
    As $\hat\eta$ is also unbiased,
    for any given parameter $\omega$,
    it holds with probability at least $1 - \delta$ that
    \[
        \abs{\hat\eta_{\omega} - \eta_{\omega}}
        \le \frac{8 \nu}{\sqrt{n_\harm}} \sqrt{\log\frac{2}{\delta}}
    .\]
    The coefficient in the proof of Proposition 15 of \citet{liu:deep-testing} was instead $8 \sqrt{2} \nu / \sqrt{n}$;
    we can recover their bound on this component with $n_\harm$ instead of $n$
    because $8 < 8 \sqrt 2$.

    We also have that for any two $\omega, \omega' \in \Omega$,
    $\abs*{\hat\eta_\omega(\bS_P, \bS_Q) - \hat\eta_{\omega'}(\bS_P, \bS_Q)}$
    is upper-bounded by
    the average of $\abs{k_\omega(x, x') - k_{\omega'}(x, x')}$,
    plus that of $\abs{k_\omega(y, y') - k_{\omega'}(y, y')}$,
    plus twice that of $\abs{k_\omega(x, y) - k_{\omega'}(x, y)}$.
    Each of these terms is at most $L_k \norm{\omega - \omega'}$,
    so this difference is at most
    $4 L_k \norm{\omega - \omega'}$.
    The same holds for $\eta_\omega - \eta_{\omega'}$,
    giving a Lipschitz constant for the error of at most
    $8 L_k$.
    This is the same as that of Proposition 15 of \citet{liu:deep-testing}.

    These are the two properties used to develop a standard covering number bound
    in the remainder of the original proof;
    given these properties, the proof is identical.%
    \footnote{%
    It is worth noting, but not carrying through in our analysis here,
    that the constant 4 used by Liu et al.\ from \citet[Proposition 5]{cucker-smale} could be replaced by 3 with no further assumption
    \citep[e.g.][Proposition 4.11]{slt-notes}.}
\end{proof}

\begin{proposition} \label{thm:sigmahat-uniform}
    Proposition 16 of \citet{liu:deep-testing}
    also holds for our $\hat\sigma^2$
    when $n$ is replaced by $n_\harm = \frac{2}{1/n_X + 1/n_Y}$.
\end{proposition}
\begin{proof}
    The proof is identical
    once we replace their Lemmas 17 to 19
    with our \cref{thm:sigmahat-bd,thm:sigmahat-bias,thm:sigmahat-lip}.
\end{proof}

\begin{lemma} \label{thm:sigmahat-bd}
Lemma 17 of \citet{liu:deep-testing} also holds for our $\hat\sigma^2$
when $n$ is replaced by $n_\harm$.
\end{lemma}
\begin{proof}
For \eqref{eq:est-varxx},
let $A_j = \frac{1}{n_X} \sum_{i=1}^{n_X} k(x_i, x_j)$,
so that the estimator is
$\frac{1}{n_X} \sum_{i=1}^{n_X} A_j^2 - \left( \frac{1}{n_X} \sum_{i=1}^{n_X} A_j \right)^2$.
Changing one element of $\bS_P$,
say $x_1$,
changes $A_1$ by at most $2 \nu$
and all other $A_j$ by at most $2 \nu / n_X$.
Since each $\abs{A_j} \le \nu$,
we have that $\abs{(A_j')^2 - A_j^2} = \abs{A_j' + A_j} \abs{A_j' - A_j} \le 2 \nu \abs{A_j' - A_j}$.
Thus the change in the mean of $A_j^2$
is at most $\frac{2 \nu}{n_X} \left( 2 \nu + (n_X - 1) \frac{2 \nu}{n_X} \right) = 4 \nu^2 \left( \frac{2}{n_X} - \frac{1}{n_X^2} \right)$.
The change in the mean of $A_j$
is at most $\frac{1}{n_X} 2 \nu + \frac{n_X - 1}{n_X} \frac{2 \nu}{n_X} = 2 \nu \left( \frac{2}{n_X} - \frac{1}{n_X^2} \right)$,
and so the change in the square of the mean of $A_j$ is at most $2 \nu$ times that.
Adding these together,
the change in the full estimator \eqref{eq:est-varxx}
is at most $\frac{16 \nu^2}{n_X} - \frac{8 \nu^2}{n_X^2}$.

For \eqref{eq:est-varxy},
let $B_i = \frac{1}{n_Y} \sum_{j=1}^{n_Y} k(x_i, y_j)$,
so the overall estimator is
$\frac{1}{n_X} \sum_{i=1}^{n_X} B_i^2 - \left( \frac{1}{n_X} \sum_{i=1}^{n_X} B_i \right)^2$.
Changing a single element of $\bS_P$,
say $x_1$,
changes $B_1$ by at most $2 \nu$,
and thus $B_1^2$ by at most $4 \nu^2$;
the other $B_i$ are unchanged.
Thus the overall estimator changes by at most
$\frac{4 \nu^2}{n_X} + 2 \nu \frac{2 \nu}{n_X} = \frac{8 \nu^2}{n_X}$.
Changing a single element of $\bS_Q$
instead changes each $B_i$ by at most $2 \nu / n_Y$,
and hence each $B_i^2$ by at most $4 \nu^2 / n_Y$;
this makes the change in the overall estimator at most
$8 \nu^2 / n_Y$.

The estimator \eqref{eq:est-covxxy}
is
$\frac{1}{n_X} \sum_{i=1}^{n_X} A_i B_i - \left( \frac{1}{n_X} \sum_{i=1}^{n_X} A_i \right) \left( \frac{1}{n_X} \sum_{i=1}^{n_X} B_i \right)$.
Changing $x_1$
changes $A_1$ by up to $2 \nu$
and the other $A_i$ by up to $2 \nu / n_X$,
and changes $B_1$ by at most $2 \nu$
while leaving the other $B_i$ unchanged.
Using
\[ \abs{A_i' B_i' - A_i B_i} \le \abs{A_i' B_i' - A_i' B_i} + \abs{A_i' B_i - A_i B_i}
= \abs{A_i'} \abs{B_i' - B_i} + \abs{B_i} \abs{A_i' - A_i}
,\]
the change in $A_1 B_1$
is at most $\nu (2 \nu + 2 \nu) = 4 \nu^2$,
while the change in the other $A_i B_i$
is at most $2 \nu^2 / n_X$.
Thus the total change to 
$\frac{1}{n_X} \sum_{i=1}^{n_X} A_i B_i$
is at most $4 \nu^2 / n_X + \frac{n_X - 1}{n_X} 2 \nu^2 / n_X
= 6 \nu^2 / n_X - 2 \nu^2 / n_X^2$.
For the other term,
the change to the mean of the $A_i$ is at most $4 \nu /n_X - 2 \nu / n_X^2$,
and to the mean of the $B_i$ is at most $2 \nu / n_X$.
Thus the term changes by at most
$\nu \left( 6 \nu / n_X - 2 \nu / n_X^2 \right)$,
for a total change in \eqref{eq:est-covxxy}
from changing $x_1$
of at most
$12 \nu^2 / n_X - 4 \nu^2 / n_X^2$.

If we instead change $y_1$,
then all of the $A_i$ are unchanged,
while each $B_i$ changes by at most $2 \nu / n_Y$.
Thus each $A_i B_i$ changes by at most $2 \nu^2 / n_Y$,
and so does its mean over $i$.
The mean of the $A_i$ is unchanged,
while the mean of the $B_i$ changes by at most $2 \nu / n_Y$,
giving a total change in \eqref{eq:est-covxxy}
of at most $4 \nu^2 / n_Y$.

Combining these three pieces,
our estimator
$\hat\zeta_X$
changes by at most
\[
    \frac{16 \nu^2}{n_X} - \frac{8 \nu^2}{n_X^2}
    + \frac{8 \nu^2}{n_X}
    + 2 \cdot \frac{12 \nu^2}{n_X} - 2 \cdot \frac{4 \nu^2}{n_X^2}
    = \frac{48 \nu^2}{n_X} - \frac{16 \nu^2}{n_X^2}
    \le \frac{48 \nu^2}{n_X}
\]
when changing a single element of $\bS_P$,
or by at most
\[
    0
    + \frac{8 \nu^2}{n_Y}
    + 2 \cdot \frac{4 \nu^2}{n_Y}
    = \frac{16 \nu^2}{n_Y}
\]
if we change a single element of $\bS_Q$.
The case for $\zeta_Y$ is symmetric,
so
changing a single element of $\bS_P$
changes the estimator $\hat\sigma^2 = 4 \rho_X \hat\zeta_X + 4 \rho_Y \hat\zeta_Y$
by $4 (48 \rho_X + 16 \rho_Y) \nu^2 / n_X = 64 (3 \rho_X + \rho_Y) \nu^2 / n_X$,
while changing a single element of $\bS_Q$ changes $\hat\sigma^2$
by at most $64 (\rho_X + 3 \rho_Y) \nu^2 / n_Y$.
Thus, the total subgaussian parameter of $\hat\sigma^2$
is at most $\frac12 \sqrt{n_X \cdot \frac{64^2 (3 \rho_X + \rho_Y)^2}{n_X^2} + n_Y \cdot \frac{64^2 (\rho_X + 3 \rho_Y)^2}{n_Y^2}} \le \frac12 \cdot 64 \cdot 4 \sqrt{\frac{1}{n_X} + \frac{1}{n_Y}} = 128 \sqrt{\frac{2}{n_\harm}}$, using that $\rho_X, \rho_Y \in [0, 1]$.
The equivalent parameter found in Lemma 17 of \citet{liu:deep-testing}
is $448 > 128 \sqrt{2}$.
\end{proof}

\begin{lemma} \label{thm:sigmahat-bias}
Lemma 18 of \citet{liu:deep-testing}
holds for our $\hat\sigma^2$
when $n$ is replaced by $n_\harm$.
\end{lemma}
\begin{proof}
Each of the estimators can be viewed as a sum of various V-statistics.
In general, a V statistic has bias introduced only by the terms with repeated indices,
and thus its bias is bounded by the portion of repeated indices.
We can write \eqref{eq:est-varxx}
as
\[
    \frac{1}{n_X^3} \sum_{i=1}^{n_X} \sum_{j=1}^{n_X} \sum_{l=1}^{n_X}
    k(x_i, x_j) k(x_i, x_l)
    -
    \frac{1}{n_X^4} \sum_{i=1}^{n_X} \sum_{j=1}^{n_X} \sum_{a=1}^{n_X} \sum_{b=1}^{n_X} k(x_i, x_j) k(x_a, x_b)
,\]
where each individual term has absolute value at most $\nu^2$.
The number of repeated indices in the first, third-order V-statistic is
${n_X^3 - n_X (n_X-1) (n_X-2)} = {n_X^3 - n_X (n_X^2-3 n_X + 2)} = 3 n_X^2 - 2 n_X$,
and its bias is less than
$\nu^2 \frac{3 n_X^2}{n_X^3} = \frac{3 \nu^2}{n_X}$.
For the fourth-order term,
there are
$n_X^4 - n_X (n_X - 1) (n_X - 2) (n_X - 3) = 6 n_X^3 - 11 n_X^2 + 6 n_X$
potentially biased terms,
for a total bias of less than
$\frac{6 \nu^2}{n_X}$.
So the bias of \eqref{eq:est-varxx} is less than $\frac{9 \nu^2}{n_X}$.

For \eqref{eq:est-varxy},
the first term has $n_X n_Y$ repeat indices
out of $n_X n_Y^2$ total terms,
and the second has less than $n_X^2 n_Y + n_X n_Y^2$ out of $n_X^2 n_Y^2$,
for a total bias at most $\nu^2\left( \frac{1}{n_X} + \frac{2}{n_Y} \right)$.

For \eqref{eq:est-covxxy},
we have $n_X n_Y$ possible repeats out of $n_X^2 n_Y$ terms
and less than $3 n_X^2 n_Y$ out of $n_X^3 n_Y$ for the second part,
giving a total bias at most $4 \nu^2 / n_X$.

Thus,
$\hat\zeta_X$ has bias at most
$\frac{9 \nu^2}{n_X} + \frac{\nu^2}{n_X} + \frac{2 \nu^2}{n_Y} + \frac{8 \nu^2}{n_X} = \frac{18 \nu^2}{n_X} + \frac{2 \nu^2}{n_Y}$.
Similarly,
$\hat\zeta_Y$'s bias is at most
$\frac{18 \nu^2}{n_Y} + \frac{2 \nu^2}{n_X}$,
and so $\hat\sigma^2$ has bias at most
\[
4 \left( \frac{18 \rho_X + 2 \rho_Y}{n_X} + \frac{18 \rho_Y + 2 \rho_X}{n_Y} \right) \nu^2
\le \frac{160 \nu^2}{n_\harm}
,\]
using that $\rho_X, \rho_Y \in [0, 1]$
and $\frac{1}{n_X} + \frac{1}{n_Y} = \frac{2}{n_\harm}$.
This is indeed smaller than Lemma 18 of \citet{liu:deep-testing}'s
bound of $1152 \nu^2 / n$,
after replacing $n$ by $n_\harm$.
\end{proof}

\begin{lemma} \label{thm:sigmahat-lip}
Lemma 19 of \citet{liu:deep-testing}
also holds for our $\hat\sigma_\omega^2$
and $\sigma_\omega^2$.
\end{lemma}
\begin{proof}
First notice that
\begin{align*}
\abs{k_\omega(s, t) k_\omega(u, v) - k_{\omega'}(s, t) k_{\omega'}(u, v)}
&\le 
\abs{k_\omega(s, t) k_\omega(u, v) - k_{\omega}(s, t) k_{\omega'}(u, v)}
+ \abs{k_\omega(s, t) k_{\omega'}(u, v) - k_{\omega'}(s, t) k_{\omega'}(u, v)}
\\&= \abs{k_\omega(s, t)} \abs{k_\omega(u, v) - k_{\omega'}(u, v)}
+ \abs{k_\omega(s, t) - k_{\omega'}(s, t)} \abs{k_{\omega'}(u, v)}
\\&\le 2 \nu L_k \norm{\omega - \omega'}
.\end{align*}
Then,
the V-statistic forms of
\eqref{eq:est-varxx}, \eqref{eq:est-varxx}, and \eqref{eq:est-covxxy}
make clear that each has a Lipschitz constant
also at most $2 \nu L_k$,
and hence $\hat \zeta_X$'s Lipschitz constant is at most $8 \nu L_k$.
The same is true for $\hat \zeta_Y$,
so the Lipschitz constant of $\hat\sigma_\omega^2$
is at most $4 (\rho_X + \rho_Y) \cdot 8 \nu L_k \le 64 \nu L_k$.
This is indeed smaller than the $256 \nu L_k$ shown by Lemma 19 of \citet{liu:deep-testing}.
The exact same argument holds for $\sigma_\omega^2$,
as it is simply an expectation
rather than an empirical average
of the same quantities.
\end{proof}

\end{document}